\documentclass{article}

\usepackage{microtype}
\usepackage{graphicx}
\usepackage{subcaption}
\usepackage{booktabs} % for professional tables
\usepackage{makecell}

\usepackage{hyperref}
\usepackage{bm}

\usepackage[accepted]{icml2026}

\usepackage{amsmath}
\usepackage{amssymb}
\usepackage{mathtools}
\usepackage{amsthm}
\usepackage{comment}
\usepackage{siunitx}
\usepackage[capitalize,noabbrev]{cleveref}

\theoremstyle{plain}
\newtheorem{theorem}{Theorem}[section]
\newtheorem{proposition}[theorem]{Proposition}
\newtheorem{lemma}[theorem]{Lemma}
\newtheorem{corollary}[theorem]{Corollary}
\theoremstyle{definition}
\newtheorem{definition}[theorem]{Definition}
\newtheorem{assumption}[theorem]{Assumption}
\theoremstyle{remark}
\newtheorem{remark}[theorem]{Remark}

\usepackage[textsize=tiny]{todonotes}

\icmltitlerunning{Mesh Field Theory: Port--Hamiltonian Formulation of Mesh-Based Physics}

\begin{document}

\twocolumn[
\icmltitle{Mesh Field Theory: Port--Hamiltonian Formulation of Mesh-Based Physics}

  % It is OKAY to include author information, even for blind submissions: the
  % style file will automatically remove it for you unless you've provided
  % the [accepted] option to the icml2026 package.

  % List of affiliations: The first argument should be a (short) identifier you
  % will use later to specify author affiliations Academic affiliations
  % should list Department, University, City, Region, Country Industry
  % affiliations should list Company, City, Region, Country

  % You can specify symbols, otherwise they are numbered in order. Ideally, you
  % should not use this facility. Affiliations will be numbered in order of
  % appearance and this is the preferred way.
\icmlsetsymbol{equal}{*}

\begin{icmlauthorlist}
\icmlauthor{Satoshi Noguchi}{jam,riken}
\icmlauthor{Yoshinobu Kawahara}{uosaka,riken}
\end{icmlauthorlist}

\icmlaffiliation{jam}{Center for Mathematical Science and Advanced Technology, JAMSTEC}
\icmlaffiliation{riken}{Center for Advanced Intelligence Project, RIKEN}
\icmlaffiliation{uosaka}{Graduate School of Information Science and Technology, The University of Osaka}

\icmlcorrespondingauthor{Satoshi Noguchi}{satoshin@jamstec.go.jp}
\icmlcorrespondingauthor{Yoshinobu Kawahara}{kawahara@ist.osaka-u.ac.jp}

% You may provide any keywords that you
% find helpful for describing your paper; these are used to populate
% the "keywords" metadata in the PDF but will not be shown in the document
\icmlkeywords{Mesh Field Theory, Mesh-Based Physics, Structure-Preserving Simulation, Port-Hamiltonian Dynamics}

\vskip 0.3in
]

% this must go after the closing bracket ] following \twocolumn[ ...

% This command actually creates the footnote in the first column listing the
% affiliations and the copyright notice. The command takes one argument, which
% is text to display at the start of the footnote. The \icmlEqualContribution
% command is standard text for equal contribution. Remove it (just {}) if you
% do not need this facility.

% Use ONE of the following lines. DO NOT remove the command.
% If you have no special notice, KEEP empty braces:
\printAffiliationsAndNotice{}  % no special notice (required even if empty)
% Or, if applicable, use the standard equal contribution text:
% \printAffiliationsAndNotice{\icmlEqualContribution}

% -------- Main text ------------------------
%%%%%%%%%%%%%%%%%%%%%%%%%%%%%%%%%%%%%%%%%%%%%%%%%%%%%%%%%%%%%%%%%%%%%%%%%%%%%%
%%%               ABSTRACT
%%%%%%%%%%%%%%%%%%%%%%%%%%%%%%%%%%%%%%%%%%%%%%%%%%%%%%%%%%%%%%%%%%%%%%%%%%%%%%
\begin{abstract}
We present \emph{Mesh Field Theory (MeshFT)} and its neural realization, MeshFT-Net: a structure-preserving framework for mesh-based continuum physics that cleanly separates the physics’ topological structure from its metric structure. 
Imposing minimal physical principles (locality, permutation equivariance, orientation covariance, and energy balance/dissipation inequality), we prove a reduction theorem for mesh-based physics.
Under these conditions, the physical dynamics admit a local factorization into a port–Hamiltonian form: the conservative interconnection is fixed uniquely by mesh topology, whereas metric effects enter only through constitutive relations and dissipation.
This reduction clarifies what must be fixed and what should be learned, directly informing MeshFT-Net’s design.
Across evaluations on analytic and realistic datasets, physics-consistency tests, and out-of-distribution validation, MeshFT-Net achieves near-zero energy drift and strong physical fidelity—correct dispersion and momentum conservation—along with robust extrapolation and high data efficiency.
By eliminating non-physical degrees of freedom and learning only metric-dependent structure, MeshFT provides a principled inductive bias for stable, faithful, and data-efficient physical simulation.
The code is available at \url{https://github.com/noguchisatoshi/MeshFT-Net}.
\end{abstract}

%%%%%%%%%%%%%%%%%%%%%%%%%%%%%%%%%%%%%%%%%%%%%%%%%%%%%%%%%%%%%%%%%%%%%%%%%%%%%%%%%%%%%%%%%%%%%%
%   Introduction
%%%%%%%%%%%%%%%%%%%%%%%%%%%%%%%%%%%%%%%%%%%%%%%%%%%%%%%%%%%%%%%%%%%%%%%%%%%%%%%%%%%%%%%%%%%%%%
\section{Introduction}
Machine learning is increasingly used to accelerate continuum simulations, from data-driven surrogates and weak-form training to operator learning that transfers across meshes and parameters \citep{lu2021learning,kovachki2023neural,deepritz2018,li2021fourier,gupta2021multiwavelet,raissi2019pinns, sirignano2018dgm,ummenhofer2020lagrangian,NEURIPS2021_d0921d44,battaglia2018relational}.
Another line of work represents a discretized domain as a graph built from a mesh and learns time evolution by message passing, yielding strong results in analysis of fluid and deformable solids while remaining flexible in resolution and topology \citep{pfaff2021learning, sanchez-gonzalez2020learning}. 

However, a key structural point is often left implicit. In exterior calculus on a manifold \citep{flanders1963differential}, the exterior derivative $d$ is \emph{topological} (metric-independent) while geometry and material properties appear only through metric-dependent operators such as the Hodge star $\star$. In many existing learned mesh simulators, these roles are conflated, letting non-physical effects contaminate predicted fields and produce spurious modes and instabilities.

Discrete exterior calculus (DEC) \citep{hirani2003discrete, desbrun2005discreteexteriorcalculus, desbrun2005dec} argues that mesh topology provides the algebraic backbone for differential operators describing mesh-based physics. Also, metric-dependent part is clearly divided from such topological structures.
Classical structure-preserving numerical schemes (e.g., finite-difference time-domain (FDTD)) exploits the same geometric structure and, as a result, yield stable and conservative updates \citep{yee1966numerical,taflove2005computational,noguchi2020proposal, bossavit1998computational}.
These observations suggest a clean division of roles in mesh-based physics simulators: hard-code topological structures based on the algebraic backbone given by fixed mesh topology, and learn only metric-dependent structures. 
% Crucially, this separation can be enforced from mesh structure alone and does not require specifying an explicit governing partial differential equation (PDE).

We develop this idea as \emph{Mesh Field Theory (MeshFT)} and its neural realization MeshFT-Net.
First, we formalize four physical requirements, locality, permutation equivariance, orientation covariance, and an energy balance/passivity inequality.
Then, under these conditions, we prove that physical dynamics on a mesh follow a local port--Hamiltonian reduction in which the conservative interconnection is fixed by mesh topology, while metric effects enter only through constitutive and dissipative operators.
Importantly, this local port--Hamiltonian structure is \emph{deduced} from the given principles rather than assumed as a prescribed global template.
Guided by the theorem, MeshFT-Net can implement the principled inductive bias by fixing the topology-wired conservative coupling and learning only metric and dissipation factors, without assuming a known partial differential equation (PDE) form or a predefined global structure template.

Across evaluation on analytic datasets and acoustic-scattering data from \textit{The Well} \citep{ohana2024well,mandli2016clawpack}, together with physics-consistency tests and out-of-distribution (OOD) validation, MeshFT-Net achieves near-zero energy drift while preserving correct dispersion and momentum behavior, and it improves generalization and data efficiency relative to baselines. These results support that the principle-driven approach, which induces an explicit topology--metric separation, provides an inductive bias for stable and faithful mesh-based simulation.

% Our contributions can be summarized as follows:
% %
% \textbf{Reduction theorem.} We formalize four physical requirements and prove a principle-driven local reduction theorem. The dynamics reduce to a port--Hamiltonian form in which the conservative interconnection has a topology-fixed signed-incidence sparsity, while only metric-dependent constitutive and dissipative effects are learnable.
% %
% \textbf{Neural architecture.} Based on the theorem, we design MeshFT-Net to fix the topology-wired incidence structure and learn only metric-dependent constitutive/dissipative operators.
% %
% \textbf{Empirical results.} Across evaluation on acoustic-scattering data from \textit{The Well}, physics-consistency tests, and OOD validation, MeshFT-Net shows near-zero energy drift with strong physical fidelity, robust extrapolation performance, and higher data efficiency.

Our contributions can be summarized as follows:
\textbf{Reduction theorem.} We formalize four physical requirements and prove a principle-driven local reduction theorem. Without assuming a known PDE form or committing to a prescribed global dynamics template, the dynamics reduce locally to a port--Hamiltonian form in which the conservative interconnection has topology-fixed signed-incidence sparsity, while only metric-dependent constitutive and dissipative effects are learnable.
\textbf{Neural architecture.} Based on the theorem, we design MeshFT-Net to fix the topology-wired incidence structure and learn only metric-dependent constitutive/dissipative operators.
\textbf{Empirical results.} Across evaluation on acoustic-scattering data from \textit{The Well}, physics-consistency tests, and OOD validation, MeshFT-Net shows near-zero energy drift with strong physical fidelity, robust extrapolation performance, and higher data efficiency.

%%%%%%%%%%%%%%%%%%%%%%%%%%%%%%%%%%%%%%%%%%%%%%%%%%%%%%%%%%%%%%%%%%%%%%%%%%%%%%%%%%%%%%%%%%%%%%
%   Related Works
%%%%%%%%%%%%%%%%%%%%%%%%%%%%%%%%%%%%%%%%%%%%%%%%%%%%%%%%%%%%%%%%%%%%%%%%%%%%%%%%%%%%%%%%%%%%%%
\section{Related Work}
By comparison with related work, we position MeshFT as a principle-derived hypothesis class for mesh-based physics. Starting from physical principles, MeshFT deduces a Jacobian-level local reduction in which conservative coupling is constrained to a signed-incidence wiring determined by mesh topology, while learning is confined to local metric and dissipation operators. This differs from approaches that begin by postulating a global Hamiltonian or port--Hamiltonian template and then learning the remaining components within that form. Starting from principles avoids committing to a single global template, which improves robustness to model mismatch.

\textbf{MeshGraphNets (MGN).}
MGN learn mesh-based dynamics via permutation-equivariant message passing on mesh graphs \citep{pfaff2021learning,battaglia2018relational}. From our viewpoint, MeshGraphNets can be seen as relaxations of MeshFT: by not enforcing the orientation and energy constraints, they can represent a broader class of interactions, which offers flexibility but does not guarantee a part of physical requirements and can admit spurious modes (Fig.~\ref{fig:assumptions}).

\textbf{Structure-Preserving Learning of Dynamics.}
Hamiltonian and Lagrangian neural networks learn an energy and induce a structured field, e.g., $(\partial_p H_\theta,-\partial_q H_\theta)$ in canonical coordinates \citep{greydanus2019hamiltonian,cranmer2020lagrangian,david2023symplectic,eidnes2024phnn}.
In addition, some approaches adopt a port--Hamiltonian and learn ingredients such as a Hamiltonian and dissipation operators from data \cite{PhysRevE.104.034312, course_wfghnn_2020, neary2023compositional}.
Thermodynamically structured models such as metriplectic or GENERIC typically fit operators within a prescribed global template \citep{zhang2022gfinns,HERNANDEZ2021109950,lee2021machine,hernandez2025data,he2026thermodynamically}. Symplectic neural ODE methods combine learned dynamics with symplectic integration \citep{Zhong2020Symplectic}. These approaches are effective when an appropriate template is available. MeshFT instead deduces a local port--Hamiltonian structure from fundamental physical principles, which constrains conservative coupling by topology and leaves only local metric and dissipation factors to learn.

\textbf{Neural Constitutive Laws.}
Constitutive-learning approaches often assume a known continuum PDE and learn unknown parameter fields \citep{ma2023learning}. MeshFT does not assume an explicit continuum PDE or PDE-residual supervision. It fixes the topology-wired coupling and learns metric-dependent operators, which can be interpreted as learning constitutive relations in some cases.

\textbf{DEC and Data-Driven Exterior Calculus.}
DEC formalizes structure-preserving discretizations in which cochains represent fields on $k$-cells and the coboundary operator $D_k$ satisfies $D_{k+1}D_k=0$ \citep{hirani2003discrete,desbrun2005discreteexteriorcalculus,arnold2006finite}. Metric-dependent effects enter through discrete Hodge operators, defining inner products, energies, and constitutive relations. Data-driven exterior calculus similarly fixes incidence operators and learns metric-dependent operators such as Hodge stars \citep{TRASK2022110969}, typically within a discretized operator-learning setting. MeshFT is aligned with this topology--metric split, but starts from physical principles and uses the resulting Jacobian-level reduction to constrain admissible dynamics.

\textbf{Exterior-Calculus Related Networks.}
Several network architectures incorporate exterior-calculus or bracket-inspired structure, for example GRAND \citep{chamberlain2021grand} and bracket-based GNNs \citep{Gruber2023BracketGNN}. Related work also uses differential-form ideas to construct constrained vector fields, including neural conservation laws \citep{richter-powell2022neural} and learning symplectic forms \citep{NEURIPS2021_8b519f19}. These provide geometric inductive biases at the layer or vector-field level. MeshFT focuses on mesh-based continuum dynamics where topology-fixed incidence constraints and local metric structure arise jointly from the physical principles through the reduction theorem.

%%%%%%%%%%%%%%%%%%%%%%%%%%%%%%%%%%%%%%%%%%%%%%%%%%%%%%%%%%%%%%%%%%%%%%%%%%%%%%%%%%%%%%%%%%%%%%
%   Theorem
%%%%%%%%%%%%%%%%%%%%%%%%%%%%%%%%%%%%%%%%%%%%%%%%%%%%%%%%%%%%%%%%%%%%%%%%%%%%%%%%%%%%%%%%%%%%%%
\begin{figure*}[t]
\centering
\includegraphics[width=.75\linewidth]{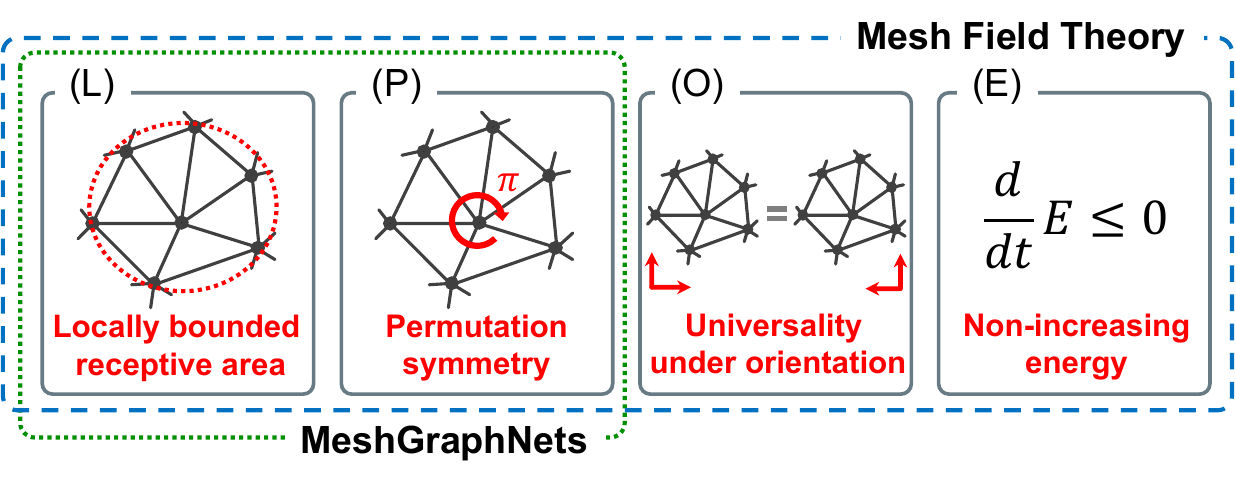}
\caption{Core concept of this study—comparison of MeshFT and MGN by underlying physical assumptions: (L) locality, (P) Permutation equivariance, (O) Orientation covariance, (E) Non-increasing energy. MGN attains (L) and (P) by architecture design, whereas MeshFT additionally enforces (O) and (E), yielding a clear modeling guideline: \textit{fix topology} (incidence-based interconnection) and \textit{learn metric-dependent structures}, which directly leads to MeshFT-Net.}
\label{fig:assumptions}
\vspace{-2mm}
\end{figure*}

\section{Local Reduction Theorem}
\label{sec:theory}
\begin{table}[t]
\centering
\small
\setlength{\tabcolsep}{7pt}
\renewcommand{\arraystretch}{1.12}
\caption{Notation.}
\label{tab:dec_notation}
\begin{tabular}{@{}p{0.15\columnwidth}p{0.76\columnwidth}@{}}
\toprule
Symbol & Definition \\
\midrule
$n_k$
& \parbox[t]{0.76\columnwidth}{%
Number of oriented $k$-cells.} \\

$C^k$
& \parbox[t]{0.76\columnwidth}{%
Space of $k$-cochains, $C^k\simeq\mathbb{R}^{n_k}$.
Examples: \\$C^0$ nodes, $C^1$ oriented edges, $C^2$ oriented faces.} \\

$z$
& \parbox[t]{0.76\columnwidth}{%
Stacked dynamical state:
$z\in C^k\oplus C^{k+1}$.} \\

$D_k$
& \parbox[t]{0.76\columnwidth}{%
Coboundary / signed incidence matrix:\\
$D_k:C^k\to C^{k+1}$,
$D_k\in\mathbb{R}^{n_{k+1}\times n_k}$.} \\

$D_k^\top$
& \parbox[t]{0.76\columnwidth}{%
Ordinary matrix transpose of $D_k$:\\
$D_k^\top:C^{k+1}\to C^k$,
$D_k^\top\in\mathbb{R}^{n_k\times n_{k+1}}$.} \\

$H$
& \parbox[t]{0.76\columnwidth}{%
Scalar storage energy:
$H:C^k\oplus C^{k+1}\to\mathbb{R}_{\ge0}$.} \\
\bottomrule
\end{tabular}
\vspace{-0.5em}
\end{table}
In this section, we establish a port--Hamiltonian formulation for mesh-based physics. We show that the dynamics of MGN reduce to a port--Hamiltonian form at the differential level under a minimal set of physical requirements, with several of these requirements already implicitly satisfied by vanilla MGN.
Fig.~\ref{fig:assumptions} compares MeshFT and MGN in terms of their underlying assumptions, clarifying the paper’s central idea.
In particular, enforcing the orientation covariance and the energy balance eliminates non-physical degrees of freedom and reveals a clean separation between the topology-driven interconnection and the metric-dependent energetic part of the dynamics, thereby clarifying which must remain fixed and which components should be learned.

\textbf{Notation.}
Let $\mathcal{K}$ be a finite oriented cell complex of dimension $d$.
For each $k$, let $C^k \cong \mathbb{R}^{n_k}$ denote the space of real-valued $k$-cochains (one scalar per oriented $k$-cell), and
let $D_k: C^k \to C^{k+1}$ be the signed incidence (coboundary) operator, with $D_{k+1}D_k=0$ \citep{hirani2003discrete,desbrun2005discreteexteriorcalculus}.
We stack the cochain degrees that interact via incidence (e.g., $k$ and $k{+}1$) as $z := (z_k, z_{k+1}) \in C^k \oplus C^{k+1}$, and consider an autonomous update $\dot z = F(z)$.
For example, a one-layer update in MGN often takes $z=(z_0,z_1)$ with node features $z_0\in C^0$ and edge features $z_1\in C^1$ on \emph{undirected} edges. Such edge features are not DEC $1$-cochains, and the coupling is implemented by permutation-invariant message passing rather than by the signed incidence $D_0$.
Generally, $z$ need not be limited to nodes and edges, it may include face features ($C^2$) or cell-centered features ($C^d$). In addition, vector-valued features are handled by replacing $C^k$ with $C^k \otimes \mathbb{R}^{r_k}$.
Also, non-physical features (e.g., node/edge types) are excluded from the state $z$ here. They may only enter as fixed parameters of learnable maps, and are not treated as dynamical variables.

Once orientations and an ordering of cells are fixed, we use the same symbol $D_k$ for the coboundary operator and for its sparse signed-incidence matrix. The symbol $D_k^\top$ denotes the transpose. For instance, $D_0$ maps node values to oriented edge differences, while $D_1$ maps oriented edge values to oriented face circulations. Table~\ref{tab:dec_notation} summarizes the main object and a concrete example is given in Appendix~\ref{app:dec_example}.

We equip the state $z=(z_k,z_{k+1})\in C^k\oplus C^{k+1}$ with a storage function
$H: C^k\oplus C^{k+1}\to\mathbb{R}_{\ge 0}$, strictly convex, and block-separable by degree, i.e.,
$H(z)=H_k(z_k)+H_{k+1}(z_{k+1})$. It is time-invariant but may encode spatial/material heterogeneity.
Define the co-energy (conjugate) variable
$
e := \nabla H(z)\in C^k\oplus C^{k+1},
$
where the gradient is taken with respect to the canonical Euclidean pairing, so the instantaneous power delivered to the state is
$\langle e,\dot z\rangle = e^\top \dot z$.
$
G(z):=\nabla^2 H(z)\succ 0
$
is block-diagonal across degrees and acts as a state-dependent metric. We write the associated mass map as
$M(z):=G(z)^{-1}\succ 0$.
The linear–quadratic case is recovered when $G(z)\equiv M^{-1}$ is constant, i.e.,
$H(z)=\tfrac12\,z^\top M^{-1}z$ and $e=M^{-1}z$.

% Linear case
% We equip the state with a quadratic energy
% $H(z)=\tfrac12 z^\top M^{-1}z$ where $M\succ0$ is block-diagonal by degree and fixed (state-/time-independent; spatial/material heterogeneity is allowed),
% and define the conjugate (energy) variable
% $
% e \;=\; \nabla H(z) \;=\; M^{-1}z.
% $
% The instantaneous power delivered to the state is $e^\top \dot z$.

\subsection{What MGN Guarantees by Design, and What Physics Still Requires}
In this section, we identify physical requirements that MGN implicitly
respects and the essential ones absent from the standard formulation despite their importance in faithful physical modeling. In this section, we focus on the spatial structure and do not explicitly enforce additional temporal priors (e.g., causality), which are addressed by the causal time-stepping scheme.

\textbf{MGN Built-in Locality and Symmetry.}
We note two inductive biases that MGN inherently satisfies—local interactions and permutation equivariance (label symmetry):
\begin{assumption}[MGN built-ins: Locality and Symmetry]\label{asmp:MGN}
\textbf{(L) Locality.} The output on a $k$-cell may depend only on inputs within $L$ hops in the graph of $\mathcal K$. 
For $L=1$, a $k$-cell receives only from itself, its own faces $(k-1)$-cell, and its own cofaces $(k+1)$-cell.
\textbf{(P) Permutation equivariance.} For any permutation $\pi$ that preserves the partition of indices by degree/type, the predictor satisfies $f(\pi\!\cdot\!x)=\pi\!\cdot\!f(x)$.
\end{assumption} 

Informally, MGN realizes (L) and (P) by gathering messages ($1$-cochain) from 1-hop neighbor nodes ($0$-cell) with an order-independent reducer (e.g., sum/mean/max), applying a shared message/update kernel, and aggregating back to 1-hop neighbor nodes. This realizes permutation equivariance and locality, which happens to match common physical desiderata—local and uniform governing laws. Intuitive visualization is shown in Fig.~\ref{fig:assumptions}.
However, only these assumptions do not guarantee geometric/orientation correctness or energetic consistency.
With these in place, we formalize the two additional physical requirements not enforced by MGN—orientation covariance and energy balance.

\textbf{Physical Requirements Beyond MGN.}
The following two physical requirements, typically not enforced by standard MGN, will be added:
\begin{assumption}[Orientation \& Energy]\label{asmp:phys}
\textbf{(O) Orientation covariance.} Changing the sign convention of oriented entities (e.g., flipping edge/face directions) should only flip the signs of the corresponding oriented variables and scalar quantities such as energy and power must be unchanged. The formal flip-operator statement is given in Appendix~\ref{app:proofs}.
\textbf{(E) Energy balance and passivity.}
The dynamics split into a conservative part that never does net work and a dissipative part that never injects energy. Consequently, in the absence of sources the stored energy cannot increase over time.
\end{assumption}

Formally, we assume an energy balance with a conservative--dissipative split $F=F_{\mathrm{con}}+F_{\mathrm{diss}}$ and $e=\nabla H(z)$.
\emph{Pointwise} power satisfies, for all $z$,
$
e^\top F_{\mathrm{con}}(z)=0
$
and
$
e^\top F_{\mathrm{diss}}(z)\le 0,
$
so $\dot H=e^\top \dot z=e^\top F(z)\le 0$.
We also impose the \emph{incremental} (two-point) form in energy variables: for $z_1,z_2$ with $e_i=\nabla H(z_i)$,
$
(e_1-e_2)^\top\!\bigl(F_{\mathrm{con}}(z_1)-F_{\mathrm{con}}(z_2)\bigr)=0
$, 
$
(e_1-e_2)^\top\!\bigl(F_{\mathrm{diss}}(z_1)-F_{\mathrm{diss}}(z_2)\bigr)\le 0
$.
Thus the conservative part does no net work between states, while the dissipative part is \emph{monotone} in $e$.
If $F$ is differentiable, these incremental conditions are equivalent to
$\mathrm{Sym}\!\bigl(\tfrac{\partial F_{\mathrm{con}}}{\partial e}\bigr)=0$ and $\mathrm{Sym}\!\bigl(\tfrac{\partial F_{\mathrm{diss}}}{\partial e}\bigr)\preceq 0$
which yields the skew/dissipative split used in our reduction.

\textbf{Importance of Orientation Covariance}
Orientation is a sign gauge: flipping the orientation of $k$-cells (edge arrows, face normals) changes coordinates but not physics. Let $\rho=\mathrm{diag}(\rho_0,\ldots,\rho_d)$ with $\rho_k\in\{\pm I\}$ act on all oriented variables on $k$-cells. Scalars are gauge-invariant:
$H(\rho z)=H(z)$ and $e(\rho z)^\top F(\rho z)=e(z)^\top F(z)$.
Fluxes carried by oriented $k$-cells are gauge-covariant: $q_k\mapsto \rho_k q_k$.
The signed incidence transforms as
$D_k\rho_k=-D_k$, $\rho_{k+1}D_k=-D_k$, and $\rho_{k+1}D_k\rho_k=D_k$,
corresponding to flipping degree $k$ only, degree $k{+}1$ only, or both simultaneously.
Assumption~\ref{asmp:phys} (O) ensures sign-gauge equivariance: the physical laws retain their form and scalar pairings (e.g., $e^\top \dot z$) remain unchanged under orientation flips. In other words, this guarantees that the physical system is universal regardless of the choice of mesh orientation shown in Fig.~\ref{fig:assumptions}.

\subsection{Local Reduction to Port–Hamiltonian Dynamics}
We show that mesh-based dynamics satisfying the MGN built-in principles (locality and permutation equivariance) together with the physical principles introduced above (orientation covariance and energy balance/passivity) admit a \emph{local} reduction to port–Hamiltonian representation. We now state the reduction theorem. A complete proof of Theorem~\ref{thm:reduction} appears in Appendix~\ref{app:proofs}.
% \begin{theorem}[Local reduction to port--Hamiltonian dynamics]\label{thm:reduction}
% Let $F: C^k\oplus C^{k+1}\!\to\! C^k\oplus C^{k+1}$ be a mesh network defining the continuous-time dynamics $\dot z = F(z)$.
% Assume (L), (P), (O), and (E). At any point where the Jacobian exists, there is a local energy reparameterization under which
% \begin{align*}
%     \frac{\partial F}{\partial z}(z)=\bigl(J-R(z)\bigr)\,G(z),
% \end{align*} 
% where $J^\top=-J$, $R(z)\succeq 0$, $G(z)\succ 0$. Also, $J$ is assembled from the signed incidence matrices $\{D_k\}$ and thus depends only on topology.
% %\footnote{Generally, the reduction permits $J(z)$ with the same signed-incidence wiring, $J_{k,k+1}(z)=-D_k^\top C_k(z)$ and $J_{k+1,k}(z)=C_k(z)D_k$, where $C_k(z)=\mathrm{diag}(c_k(z))\succ0$ and $c_k$ is local, permutation-equivariant, and orientation-even. See Appendix~\ref{app:state-dependent}.}
% \end{theorem}
\begin{theorem}[Local reduction to port--Hamiltonian dynamics]\label{thm:reduction}
Let $F: C^k\oplus C^{k+1}\!\to\! C^k\oplus C^{k+1}$ define the continuous-time dynamics $\dot z = F(z)$.
Assume (L), (P), (O), and (E). At any point where the Jacobian exists, there is a local energy reparameterization under which
\[
\frac{\partial F}{\partial z}(z)=\bigl(J(z)-R(z)\bigr)\,G(z),
\]
where $J(z)^\top=-J(z)$, $R(z)\succeq 0$, and $G(z)\succ 0$.
Moreover, up to cochain ordering and an orientation gauge, the off-diagonal blocks of $J(z)$ have signed-incidence wiring: for each adjacent pair $(k,k{+}1)$,
$J_{k,k+1}(z)=-D_k^\top C_k(z)$, $J_{k+1,k}(z)=C_k(z)D_k$,
with $C_k(z)=\mathrm{diag}(c_k(z))\succ0$, where $c_k$ is local, permutation-equivariant, and orientation-even.
\end{theorem}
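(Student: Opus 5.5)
Our approach is to factor the Jacobian through the co-energy map and then read off the structure of $\partial F/\partial e$ block by block. Since $H$ is strictly convex and block-separable, $e=\nabla H(z)$ is a local diffeomorphism, by the inverse function theorem, with $\partial e/\partial z=G(z)\succ0$ block-diagonal. This change of variables is the ``local energy reparameterization'' in the statement. Writing $F$ as a function of $e$, the chain rule gives
\[
\frac{\partial F}{\partial z}=\frac{\partial F}{\partial e}\,G(z).
\]
It therefore suffices to show that $A(z):=\partial F/\partial e$ equals $J(z)-R(z)$, with the claimed properties. We split $F=F_{\mathrm{con}}+F_{\mathrm{diss}}$ as in (E) and set $J:=\partial F_{\mathrm{con}}/\partial e$ and $R:=-\partial F_{\mathrm{diss}}/\partial e$. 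The incremental form of (E), differentiated, gives $\mathrm{Sym}(J)=0$ and $\mathrm{Sym}(R)\succeq0$.

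One gap must be closed here. $R$ need not be symmetric, so we redistribute its antisymmetric part. We move $\mathrm{Skew}(R)$ into $J$, so that $J\leftarrow J-\mathrm{Skew}(R)$ and $R\leftarrow\mathrm{Sym}(R)\succeq0$. This preserves $A=J-R$ and skewness of $J$, and it is compatible with (L), (P) and (O), because these symmetry operations commute with transposition. Afterwards we must check that the off-diagonal wiring argument below still applies to the updated $J$.

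The core step is the structure of the off-diagonal block $J_{k+1,k}$, which maps $C^k\to C^{k+1}$. By (L) with one hop, the entry $(\sigma,\tau)$ for a $(k{+}1)$-cell $\sigma$ and a $k$-cell $\tau$ can be nonzero only when $\tau$ is a face of $\sigma$. This is exactly the sparsity pattern of $D_k$, so we may write $(J_{k+1,k})_{\sigma\tau}=a_{\sigma\tau}(z)\,(D_k)_{\sigma\tau}$.

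Next we use (O), in the form $F(\rho z)=\rho F(z)$ and $e(\rho z)=\rho e(z)$, which gives $A(\rho z)=\rho A(z)\rho$. Flipping a single cell $\tau$ flips the sign of $(D_k)_{\sigma\tau}$ and also that of $(J_{k+1,k})_{\sigma\tau}$. Hence $a_{\sigma\tau}$ is orientation-even, i.e.\ invariant under all flips.

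By (P), $a_{\sigma\tau}$ is given by a shared kernel. For the diagonal form $C_k\,D_k$ we need $a_{\sigma\tau}=c_\sigma(z)$ to be independent of the face $\tau$. We obtain this from the requirement that the conservative coupling respect $D_{k+1}D_k=0$, or equivalently from permutation invariance among the faces of $\sigma$ that commutes with the incidence signs. Where this is not directly forced, we use the freedom ``up to an orientation gauge and cochain ordering'' to absorb face-dependent positive factors into a diagonal rescaling of $e_k$. That rescaling can be folded into $G$ while preserving $G\succ0$.

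Skewness then fixes the other off-diagonal block: $J_{k,k+1}=-J_{k+1,k}^\top=-D_k^\top C_k$. Positivity $c_k>0$ follows by flipping the global gauge $\rho_{k+1}$, which turns any negative entry positive via $\rho_{k+1}D_k=-D_k$. Any zero entries are removed by restricting to the nondegenerate interconnection.

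Finally, the diagonal blocks $J_{kk}$ are handled separately. Locality, permutation equivariance and sign-evenness restrict each $J_{kk}$ to a skew operator among same-degree cells. We argue that under (O) such couplings must vanish, or else transfer them into $R$ (which they leave unchanged, being skew) or into the gauge. This part and the face-independence of $a_{\sigma\tau}$ are the main obstacles. We would first attempt them via the single-cell flip identity $\rho_\tau A\rho_\tau=A$ restricted to the one-hop stencil.
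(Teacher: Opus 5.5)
Your first half is the paper's Lemma~\ref{lem:passivity}: pass to $e=\nabla H(z)$, differentiate the incremental form of (E), and move the skew part of the dissipative Jacobian into $J$. The paper sets $R=-\mathrm{Sym}(K_{\mathrm{diss}})$ and $J=K_{\mathrm{con}}+\mathrm{Skew}(K_{\mathrm{diss}})$, where $K_{\mathrm{con}},K_{\mathrm{diss}}$ are your $e$-Jacobians.

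The gap is in the wiring step, where you flag it yourself. Nothing in your argument shows that the per-incidence weight $a_{\sigma\tau}(z)$ is independent of the face $\tau$, and neither mechanism you offer works.
\begin{itemize}
\item The identity $D_{k+1}D_k=0$ is a property of the complex, not a constraint on $F$ that follows from (L), (P), (O), (E). Imposing $D_{k+1}J_{k+1,k}=0$ would in fact over-constrain the target form: for $k=0$ on a single triangle, $D_1C_0D_0=0$ forces the three edge gains to be equal.
\item The gauge fallback fails by counting. A diagonal $C_k$ combined with a diagonal rescaling of $e_k$ only produces weights of rank-one form $a_{\sigma\tau}=c_\sigma d_\tau$ on the support of $D_k$. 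Generic weights are not of this form: a closed triangulated surface has $3n_2$ edge--face incidences but only $n_1+n_2=\tfrac52 n_2$ scale factors.
\item Even where such a factorization exists, a one-sided factor cannot simply be folded into $G$, since it gives the nonsymmetric $\Delta G$. The legitimate operation is the congruence $z=S\tilde z$, $\tilde G=S^\top GS$ of Corollary~\ref{cor:constant_J}, which needs constant $S$. Your $a_{\sigma\tau}$ depends on $z$.
\item Your positivity step inherits the problem. Flipping the global gauge $\rho_{k+1}$ fixes signs only if all $c_\sigma$ share one sign, and zero entries cannot be ``restricted away'' when the claim is $C_k\succ0$.
\end{itemize}

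The ingredient you are missing is the paper's Lemma~\ref{lem:orbit}. Before looking at individual entries, it classifies every linear, interface-local, permutation-equivariant map into degree $k$ as $a\,x_k+b\,A_kx_k+\alpha D_{k-1}x_{k-1}-\beta D_k^\top x_{k+1}$, with one label-independent weight per relation type (self, $k$-neighbours, faces, cofaces). Applying this pointwise to $K=J(z)$ (Lemma~\ref{lem:JisDEC}) gives face-independence at once. Skewness then ties $J_{k,k+1}$ to $-J_{k+1,k}^\top$, and since a single weight is left per block, one global flip fixes its sign.

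Your phrase ``permutation invariance among the faces of $\sigma$'' is this route, so commit to it instead of the gauge fallback. Say explicitly, though, that it treats $J(z)$ as an equivariant linear map in its own right. That is stronger than the covariance $J(\pi z)=\pi J(z)\pi^\top$, which is all that (P) gives for nonlinear $F$.

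Finally, your paragraph on $J_{kk}$ is not required, because the statement only constrains the off-diagonal blocks. The identity $\rho_\tau A\rho_\tau=A$ could not do that job anyway: imposed at fixed $A$, it kills every off-diagonal entry in row and column $\tau$, including the incidence entries you want to keep. The paper removes same-degree terms through Lemma~\ref{lem:oddness} instead. In its basis the only same-degree candidates are $I$ and the unsigned $A_k$, which commute with a degree-$k$ flip, while the incidence terms change sign.
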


\textbf{Remark.}
Theorem~\ref{thm:reduction} is an equation-agnostic local statement at the Jacobian level. We neither assume nor claim that the full dynamics admits a single global port--Hamiltonian representation. Rather, if $F$ is locally differentiable and satisfies (L/P/O/E), then the Jacobian admits the factorization above at points where the Jacobian exists. If $F$ is only piecewise smooth, the reduction holds on the smooth regions.

\begin{corollary}[State-independent interconnection]\label{cor:constant_J}
Assume the incidence gains are \emph{state-independent}, namely $C_k(z)\equiv C_k$ for all $z$ (they may still depend on fixed geometry or material).
Then one may take a constant skew interconnection $J^\top=-J$ with
$J_{k,k+1}=-D_k^\top C_k$ and $J_{k+1,k}=C_k D_k$, so that
$\frac{\partial F}{\partial z}(z)=(J-R(z))G(z)$ holds with the same $G(z)$ and $R(z)$ as in Theorem~\ref{thm:reduction}.
Moreover, with the \emph{constant} rescaling $S=\mathrm{diag}(I,C_k)$ and $\tilde F(\tilde z)=S^{-1}F(S\tilde z)$, the Jacobian satisfies
\[
\frac{\partial \tilde F}{\partial \tilde z}(\tilde z)=\bigl(\tilde J-\tilde R(\tilde z)\bigr)\,\tilde G(\tilde z),
\]
$\tilde J_{k,k+1}=-D_k^\top$, $\tilde J_{k+1,k}=D_k$,
where $\tilde G(\tilde z)=S^\top G(S\tilde z)S$ and $\tilde R(\tilde z)=S^{-1}R(S\tilde z)S^{-\top}$.
\end{corollary}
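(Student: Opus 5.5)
The plan is to prove Corollary~\ref{cor:constant_J} in two stages. First we specialize Theorem~\ref{thm:reduction}. Then we check the rescaling identity with a chain-rule computation.

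\textbf{Stage 1: constant interconnection.} Theorem~\ref{thm:reduction} gives, at every point where the Jacobian exists,
\[
\partial F/\partial z=(J(z)-R(z))G(z),
\]
with the off-diagonal blocks $J_{k,k+1}(z)=-D_k^\top C_k(z)$ and $J_{k+1,k}(z)=C_kD_k$. Under the hypothesis $C_k(z)\equiv C_k$, these blocks no longer depend on $z$. The theorem leaves the diagonal blocks of $J(z)$ unspecified, so this is the point needing care. I would argue as follows. Any skew, state-dependent diagonal contribution $J_{kk}(z)$ can be moved into the off-diagonal (non-incidence) residual, or shown to vanish: a skew block within a single degree would couple distinct $k$-cells beyond the incidence wiring, and by (O) it must be orientation-odd. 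This is the part of the theorem's proof that fixes the wiring. So I would invoke the theorem's statement in the form in which all of $J$ is incidence-wired, with $J_{kk}=0$, and simply note that only $C_k$ carried the state dependence.

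Skewness is then checked directly:
\[
J_{k,k+1}^\top=-C_k^\top D_k=-C_kD_k=-J_{k+1,k},
\]
using that $C_k$ is diagonal and hence symmetric. The factors $G(z)$ and $R(z)$ are untouched, since the identity $\partial F/\partial z=(J-R)G$ is pointwise and we have only replaced $J(z)$ by its constant value.

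\textbf{Stage 2: rescaling.} For $\tilde F(\tilde z)=S^{-1}F(S\tilde z)$ with constant invertible $S=\mathrm{diag}(I,C_k)$, the chain rule gives
\[
\partial\tilde F/\partial\tilde z=S^{-1}(J-R)GS.
\]
We insert $S^{-\top}S^\top=I$ to obtain
\[
\partial\tilde F/\partial\tilde z=\bigl(S^{-1}JS^{-\top}-S^{-1}RS^{-\top}\bigr)\bigl(S^\top GS\bigr).
\]
This identifies $\tilde G=S^\top G(S\tilde z)S$, which is $\succ0$ by congruence, and $\tilde R=S^{-1}RS^{-\top}$, which is $\succeq0$ by congruence. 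It also gives $\tilde J=S^{-1}JS^{-\top}$, which is skew because congruence preserves skewness.

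Computing the blocks with $S^{-1}=S^{-\top}=\mathrm{diag}(I,C_k^{-1})$:
\[
\tilde J_{k,k+1}=I\cdot(-D_k^\top C_k)\cdot C_k^{-1}=-D_k^\top,
\qquad
\tilde J_{k+1,k}=C_k^{-1}C_kD_k\cdot I=D_k.
\]
Zero diagonal blocks stay zero.

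\textbf{Expected difficulty.} Stage 2 is routine. The only real obstacle is in Stage 1: justifying that $J$ has no state-dependent within-degree blocks once $C_k$ is constant. I would handle this by appeal to the structure established in the proof of Theorem~\ref{thm:reduction} rather than re-deriving it.
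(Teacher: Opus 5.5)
Your proposal is correct and takes essentially the paper's route. The paper's proof is precisely the constant congruence by $S=\mathrm{diag}(I,C_k)$, which you carry out explicitly via the chain rule and the splitting $S^{-1}(J-R)S^{-\top}\cdot S^\top G S$, and this yields $\tilde J_{k,k+1}=-D_k^\top$ and $\tilde J_{k+1,k}=D_k$. Your care about the same-degree blocks of $J(z)$ is justified, because the paper's corollary proof takes their vanishing for granted; it comes from the proof of Lemma~\ref{lem:JisDEC}, where Lemma~\ref{lem:oddness} removes the same-degree terms. You should therefore drop the ``move into the off-diagonal residual'' alternative, which is not meaningful, and simply cite that vanishing.
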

The novelty is not merely to posit a port--Hamiltonian form, but to characterize what is fixed and what is learnable, and to give sufficient conditions for this separation. Under (L), (P), (O), and (E), the conservative coupling is constrained to a signed-incidence wiring dictated by the mesh topology, while phenomenon-dependent effects are carried by the metric-side factors and dissipation through $G(z)$ and $R(z)$.
Accordingly, learning reduces to parameterizing these local operators and the admissible incidence-modulating coefficients $C_k(z)$ within the fixed sparsity and sign structure of the interconnection.
In particular, when the incidence gains are state-independent one may choose a constant interconnection as in Corollary~\ref{cor:constant_J}.
% In our experiments, we further adopt the incidence-normalized choice, so that the conservative coupling is given by the pure signed-incidence blocks $J_{k,k+1}=-D_k^\top$ and $J_{k+1,k}=D_k$.

In this paper, we henceforth adopt the state-independent setting of Corollary~\ref{cor:constant_J} and work in the incidence-normalized coordinates, so the conservative coupling is represented by the pure signed-incidence blocks $\tilde J_{k,k+1}=-D_k^\top$ and $\tilde J_{k+1,k}=D_k$. For notational simplicity, we identify $\tilde J$ with $J$ in what follows. This assumption still covers many standard settings including linear waves, linear elastodynamics, and linear electromagnetics on fixed media, with heterogeneity entering through the metric and dissipation terms.
For completeness, we also include a small ablation study in Section~\ref{exp:j-depend} that probes a state-dependent conservative coupling within the same incidence wiring.

As a consequence, we may decompose the vector field as $F(z)=J\,e+F_{\mathrm{diss}}(e)$ with $e=\nabla H(z)$, where the dissipative part is monotone in energy coordinates in the sense that $\frac{\partial F_{\mathrm{diss}}}{\partial e}(e)=-R(z)\preceq 0$. If dissipation is absent, set $F_{\mathrm{diss}}\equiv 0$ (equivalently $R\equiv 0$). Then $\dot z=J e$ and $\dot H=e^\top F=e^\top J e=0$, so the flow conserves energy.
In the general case,
$\dot H=e^\top F=e^\top F_{\mathrm{diss}}=-\,e^\top R(z)\,e\le 0$,
so the energy is analytically guaranteed to be nonincreasing along trajectories whenever $R(z)\succeq 0$. External source terms can also be added in the standard port--Hamiltonian manner without altering the incidence wiring.

\section{MeshFT-Net: Neural Realization of MeshFT}
\label{sec:network}
We now instantiate the reduction as an architecture. We can consider two instantiations consistent with the differential form above:
\textbf{General.} Parameterize a strictly convex storage $H_\theta$ (and a convex dissipation potential $\Psi_\theta$). Here \(\Psi_\theta\) is a convex function whose gradient yields the dissipative force in the energy variables. With the co-energy $e=\nabla H_\theta(z)$, define the dynamics in energy coordinates by
$
\dot z \;=\; J\,e \;-\; \nabla_e \Psi_\theta(e),
$
so that $\partial F/\partial e = J - \nabla_e^2 \Psi_\theta(e)$ and $G(z)=\nabla^2 H_\theta(z)$. This covers nonlinear constitutive laws.
\textbf{Quadratic first-order model.} For efficiency, we use a quadratic storage and a first-order linearization around the current state. With
$H_\theta(z)=\tfrac12\,z^\top G_\theta z$ (degreewise $G_\theta\!\succ\!0$, state-independent), we have $e=G_\theta z$ and
$
% \label{eq:update}
\dot z \;\approx\; \bigl(J - R_\theta(z)\bigr)\,e
\;=\; \bigl(J - R_\theta(z)\bigr)\,G_\theta z .
$
In experiments in this paper, we adopt this quadratic, state-independent $G_\theta$.

% \subsection{MeshFT-Net: From Reduction to Architecture}
\textbf{Fixed vs.\ Learned.}
Under Corollary~\ref{cor:constant_J}, we take $J=\bigl(\begin{smallmatrix}0&-D_k^\top\\ D_k&0\end{smallmatrix}\bigr)$ and thus \emph{do not train} $J$\footnote{A state-dependent extension keeps the same signed-incidence blocks and learns only local gains $C_k(z)\succ0$ that modulate them.}.
The \emph{learnable} components are metrix $G_\theta$ (symmetric positive-definite, SPD) and dissipation $R_\theta$ (positive-semidefinite, PSD).
% Using only $(D_k,D_k^\top)$ ensures (L) locality, (P) permutation equivariance, and (O) orientation covariance.

% \subsection{Time integration and training}
\textbf{Time Stepping (one-layer update).}
The integrator is one design choice not a unique consequence of the reduction, other numerically consistent variants can be also used. Here, we advance the state with a Strang splitting \citep{Strang}. One concrete realization is given in Algorithm~\ref{alg:one_layer_matrix}. 
\textsc{CFLGuard}$(\Delta t)$ in Algorithm~\ref{alg:one_layer_matrix} scales the step (or selects substeps) to satisfy a target CFL condition \citep{courant1967partial, leveque1992numerical}.
Also, all heavy operations here are sparse matrix–vector products, yielding $O(N)$ time and memory, where N is the total number of degrees of freedom.

\begin{algorithm}[t]
\small
\caption{MeshFT-Net: One-Layer Update}
\label{alg:one_layer_matrix}
\begin{algorithmic}
\STATE \textbf{Inputs:} time step $\Delta t$. fixed $J=\bigl(\begin{smallmatrix}0&-D_k^\top\\ D_k&0\end{smallmatrix}\bigr)$. learnable SPD $G_\theta$ and PSD $R_\theta$. $z^n = (z_k^{\,n},\,z_{k+1}^{\,n})$

\STATE \textbf{Outputs:} $z^{n+1} = (z_k^{\,n+1},\,z_{k+1}^{\,n+1})$ 

\STATE \textbf{(A) Half-damp (in)}

\STATE $z_k^{\,n,-}\gets\exp\!\Big(-\tfrac{\Delta t}{2}\,R_{k,\theta}\!\big(\{z_k^{\,n},z_{k+1}^{\,n}\}\big)\,G_k \Big)z_k^{\,n}$

\STATE $z_{k+1}^{\,n,-}\gets\exp\!\Big(-\tfrac{\Delta t}{2}\,R_{k+1,\theta}\!\big(\{z_k^{\,n},z_{k+1}^{\,n}\}\big)\,G_{k+1}\Big)z_{k+1}^{\,n}$
% \hfill \emph{note: $\exp(\cdot)$ is the matrix exponential.}

\STATE \textbf{(B) Conservative pass (KDK under $J$)}

\STATE $z_k^{\,\mathrm{half}} \gets z_k^{\,n,-}-\tfrac{\Delta t}{2}\,D_k^\top\!\big(G_{k+1}\,z_{k+1}^{\,n,-}\big)$

\STATE $z_{k+1}^{\,\mathrm{pre}} \gets z_{k+1}^{\,n,-}+\Delta t\,D_k\!\big(G_k\,z_k^{\,\mathrm{half}}\big)$

\STATE $z_k^{\,\mathrm{pre}} \gets z_k^{\,\mathrm{half}}-\tfrac{\Delta t}{2}\,D_k^\top\!\big(G_{k+1}\,z_{k+1}^{\,\mathrm{pre}}\big)$

\STATE $z^{\mathrm{pre}}\gets\{\,z_k^{\,\mathrm{pre}},z_{k+1}^{\,\mathrm{pre}}\,\}$ \quad (\textsc{CFLGuard}$(\Delta t)$)

\STATE \textbf{(C) Half-damp (out)}

\STATE $z_k^{\,n+1}\gets\exp\!\Big(-\tfrac{\Delta t}{2}\,R_{k,\theta}(z^{\mathrm{pre}})\,G_k\Big)z_k^{\mathrm{pre}}$

\STATE $z_{k+1}^{\,n+1}\gets\exp\!\Big(-\tfrac{\Delta t}{2}\,R_{k+1,\theta}(z^{\mathrm{pre}})\,G_{k+1}\Big)z_{k+1}^{\mathrm{pre}}$
\end{algorithmic}
\end{algorithm}

\textbf{Parameterization.}
$G_\theta$ is degreewise SPD, implemented as diagonals (softplus) or small Cholesky blocks, optionally conditioned by permutation-equivariant, orientation-even local MLPs using geometry/material features. 
$R_\theta(z)$ is PSD (e.g., Rayleigh-type $z\mapsto\gamma(\cdot)\,G_\theta^{-1}z$). State dependence enters only through $R_\theta(z)$.

\textbf{Training.}
Given \(z^n\), compute the layer output
$
\hat z^{\,n+1}=\mathrm{MeshFT\mbox{-}Net}_{\Delta t}\!\bigl(z^n;\,J,\,G_\theta,\,R_\theta\bigr).
$ 
Training uses a supervised one–step loss, e.g.
$
\sum_{k\in\mathcal I}\mathrm{Loss}\bigl(\hat z^{\,n+1}_{k},\,z^{\,n+1}_{k}\bigr),
$
where \(\mathcal I\) may be all components or a chosen subset. Optionally, this time-stepping can also be \emph{stacked}, with supervision applied only to the final output composed of sub-step evolution.
No PDE–residual terms are used. The inductive bias comes from the fixed interconnection \(J\) and the SPD/PSD structure of \(G_\theta\) and \(R_\theta\).

%%%%%%%%%%%%%%%%%

% \paragraph{Properties.}
% (i) \emph{Structure-preserving:} the layer is symplectic by construction.
% (ii) \emph{Topology-driven inductive bias:} $J$ is fixed by $\{D_k\}$; only $M^{-1}_\theta$ (per-degree SPD) is learned.
% (iii) \emph{Equivariance \& locality:} updates use only $(D_k,D_k^\top)$ and degree-wise $M^{-1}_\theta$, preserving permutation equivariance and mesh locality.
% (iv) \emph{Implementation:} $M^{-1}_\theta$ can be parameterized per cell/degree (e.g., positive diagonal via softplus, or small SPD blocks via Cholesky), enabling efficient training.

%%%%%%%%%%%%%%%%%%%%%%%%%%%%%%%%%%%%%%%%%%%%%%%%%%%%%%%%%%%%%%%%%%%%%%%%%%%%%%%%%%%%%%%%%%%%%%
%   Experiments
%%%%%%%%%%%%%%%%%%%%%%%%%%%%%%%%%%%%%%%%%%%%%%%%%%%%%%%%%%%%%%%%%%%%%%%%%%%%%%%%%%%%%%%%%%%%%%
\section{Experiments}
\label{sec:experiments}
We evaluate the practical implication of Theorem~\ref{thm:reduction} and its corollaries by adopting the induced parameterization: we fix the incidence-wired conservative coupling and learn only the metric operators. We hypothesize that this structural prior preserves long-horizon stability, increases physical fidelity, and  improves data efficiency. 
For comparisons with HNN, we work in canonical variables $(x_{k},p_{k})$, where $p_{k}$ is the momentum conjugate to $x^{(k)}$, rather than using flux variables $x_{k+1}$; see Appendix~\ref{app:exp} for details.

% ===================== Baselines =====================
\textbf{Baselines.}
Consider four graph-based simulators, ranging from unconstrained to structure-preserving. All models are trained on the same data under identical training protocols and share same symmetric second-order integrator. To isolate architectural differences, we deliberately chose pedagogically clear, structurally comparable baselines.
%For MGN and MGN-HP we apply KDK to their learned vector field \(v_\phi\); MeshFT-Net and HNN use their native symplectic KDK updates.

\textbf{MeshFT-Net.}
A structure-preserving model based on Theorem~\ref{thm:reduction}: the interconnection is fixed by signed incidences, while the metric $G_\theta$ and optional dissipation $R_\theta$ are learned.
\textbf{MGN.}
A structure-agnostic message-passing network that predicts $\dot z=v_\phi(z)$ from node/edge features without enforcing physical structure.
\textbf{MGN-HP (MGN with Hamiltonian Penalty).}
An MGN augmented with a learned scalar energy $H_\theta(z)$ and a penalty that aligns $v_\phi(z)$ with the Hamiltonian vector field $X_{H_\theta}(z)$ (in canonical coordinates $X_{H_\theta}(q,p)=(\partial_p H_\theta,-\partial_q H_\theta)$). This preserves MGN’s flexibility while nudging it toward conservative dynamics.
\textbf{HNN.}
A Hamiltonian model that learns $H_\theta(z)$ and defines the field by $X_{H_\theta}$. Training minimizes derivative mismatch $\|\dot z-X_{H_\theta}(z)\|^2$.
We instantiate the common separable form $H(q,p)=U(q)+T(p)$, though nonseparable $H_\theta$ is also compatible.

\subsection{Analytic Plane-Wave Benchmark}
\label{sec:analytic}
\begin{table}[t]
\caption{One-step MSE, normalized energy drift ($\Delta E/E_0$), and TSMSE over the rollout horizon on regular grids. Lower is better and drift closer to $0$ is better. All numbers are computed on held-out validation data. Mean over $5$ seeds are reported.}
\label{tab:wave-main}
\begin{center}
\setlength{\tabcolsep}{1pt}
\small
\begin{tabular}{lccc}
\toprule
\multicolumn{1}{l}{\bf Model} & \multicolumn{1}{l}{\bf One-step MSE} & \multicolumn{1}{l}{\bf TSMSE}&  \multicolumn{1}{l}{\bf Energy drift}  \\ \midrule
MeshFT-Net & $\textbf{1.3} \times \textbf{10}^{-\textbf{9}}$ & $\textbf{9.6} \times \textbf{10}^{-\textbf{5}}$ & $\textbf{1.3} \times \textbf{10}^{-\textbf{4}}$ \ \\
MGN        & $1.6 \times 10^{-7}$ & $1.3 \times 10^{-1}$ & $25.9$  \ \\
MGN-HP     & $5.7 \times 10^{-4}$ & $6.1 \times 10^{-1}$ & $16.0$   \ \\
HNN        & $3.5 \times 10^{-8}$ & $3.0 \times 10^{-3}$ & $1.0 \times 10^{-2}$ \\
\bottomrule
\end{tabular}
\end{center}
\vspace{-2mm}
\end{table}

We evaluate each model with periodic $2$D plane-wave on regular grids and Delaunay triangulations. Metrics are (i) one-step mean squared error (MSE), (ii) time-series mean squared error (TSMSE) that is the average MSE over the full rollout horizon, and (iii) normalized energy drift over open-loop rollouts. We also vary the training data size.
On regular grids (Table~\ref{tab:wave-main}), MeshFT-Net attains the lowest error and drift, while MGN, MGN-HP, and HNN show orders-of-magnitude larger drift. The same ordering holds on Delaunay meshes. Across data sizes (Figs.~\ref{fig:data-sweep-grid}, \ref{fig:data-sweep-delaunay}), MeshFT-Net maintains near-zero drift and achieves higher data efficiency relative to baselines.
HNN enforces Hamiltonian structure with soft penalties. It improves stability over vanilla MGN, MeshFT-Net’s topological interconnection with a symplectic step yields orders of magnitude greater robustness. Implementation details and additional results appear in Appendix~\ref{app:plane-wave}.

\begin{table}[t]
\caption{Dissipative benchmark. One-step MSE is the per-step prediction error. TSMSE is the time-series mean squared error over the rollout. NEE is the normalized energy error relative to the theoretical energy. Lower is better. All numbers are computed on held-out validation data. Means over $3$ seeds are reported.}
\label{sample-table}
\begin{center}
\setlength{\tabcolsep}{2.5pt}
\small
\begin{tabular}{llll}
\toprule
\multicolumn{1}{l}{\bf Model}  &\multicolumn{1}{l}{\bf One-step MSE} & \multicolumn{1}{l}{\bf TSMSE} & \multicolumn{1}{l}{\bf NEE}
\\ \midrule
MeshFT-Net & $1.2 \times 10^{-7}$ & $\textbf{2.1} \times \textbf{10}^{-\textbf{2}}$ & $\textbf{2.1} \times \textbf{10}^{-\textbf{2}}$ \ \\
MGN        & $\textbf{5.2} \times \textbf{10}^{-\textbf{8}}$ & $1.7 \times 10^{-1}$ & $2.2$ \ \\
MGN-HP     & $4.5 \times 10^{-4}$ & $1.9 $ & $5.0 $ \ \\
HNN        & $2.5 \times 10^{-7}$ & $2.2 \times 10^{-1}$ & $4.9 \times 10^{-1}$\ \\
\bottomrule
\end{tabular}
\end{center}
\vspace{-2mm}
\end{table}

We also test a Rayleigh-damped setting (amplitude $\propto e^{-\gamma t}$); details appear in Appendix~\ref{app:dissipative}. For HNN, we also learn an explicit Rayleigh damping term same as MeshFT-Net.
We report one-step MSE, the time-series mean squared error (TSMSE) over the full rollout horizon, and the \emph{normalized energy error} (NEE) relative to the theoretical energy. As shown in Table~\ref{sample-table}, MGN attains the lowest one-step error, while MeshFT-Net achieves the best energy fidelity and the lowest TSMSE. These results indicate that fixing the incidence-based interconnection and learning only metric/dissipation captures dissipation most faithfully.

\begin{figure}[t]
\centering
\includegraphics[width=.88\linewidth]{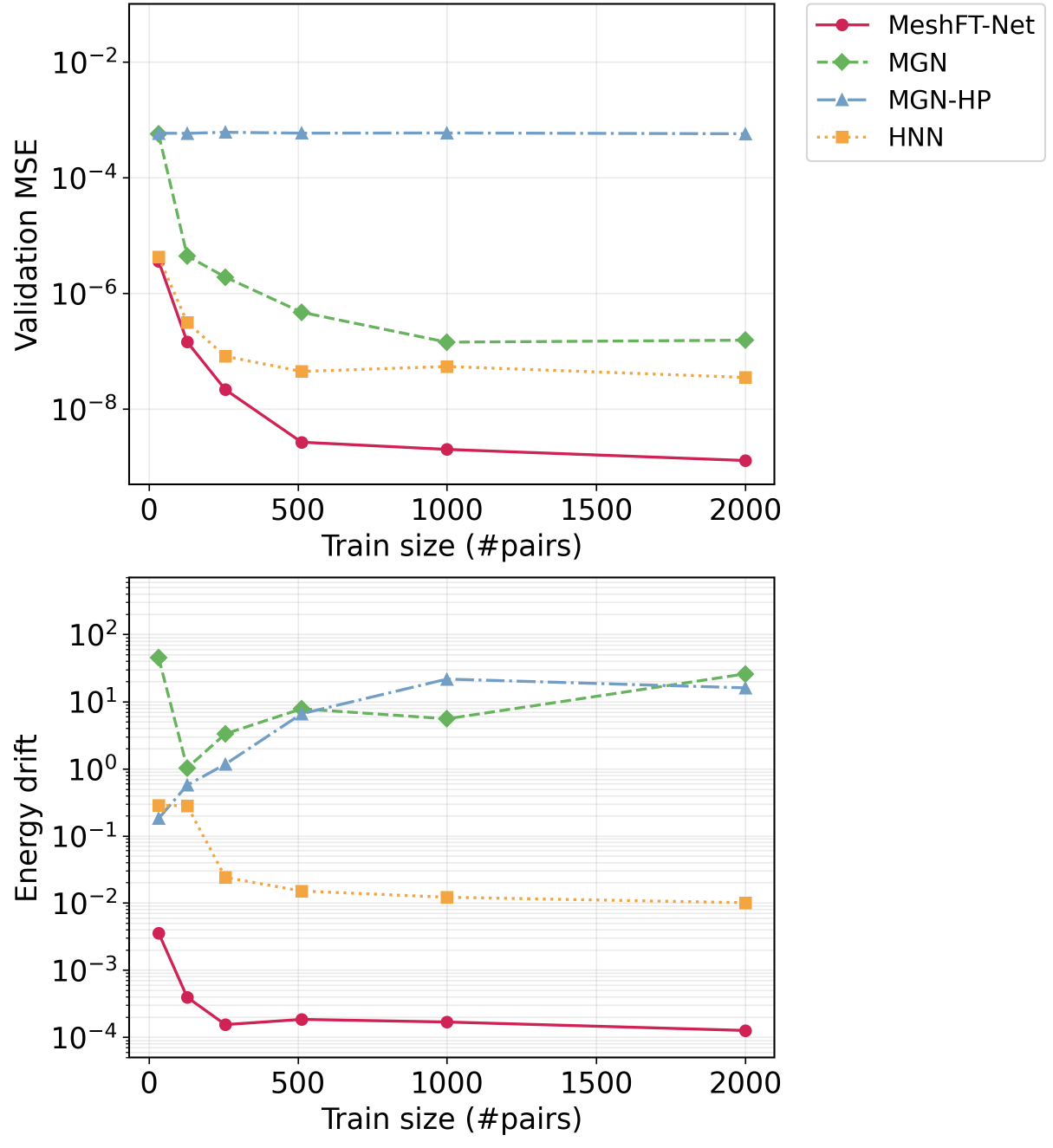}
\vspace{-1mm}
\caption{Relationship between the size of training dataset and one-step MSE (top) and rollout energy drift (bottom) for grid mesh.}
\label{fig:data-sweep-grid}
\vspace{-4mm}
\end{figure}

% ========================================
% ---------- Physical consistency --------
% ========================================
\subsection{Physics-Consistency Benchmark}

\begin{table*}[t]
\caption{Physical-consistency and learning-adequacy diagnostics. Physical-consistency uses $T{=}200$, $\Delta t{=}0.002$ (lower is better). PDE residuals are reported as short/long. The short residual is computed over $T{=}5$ and the long residual over $T{=}200$. Learning-adequacy uses $T_{\text{short}}{=}16$ (higher is better for cosine, lower otherwise). All numbers are computed on held-out validation data. Mean over $5$ seeds is reported. Best values are shown in bold, and second-best values are underlined.}
\vspace{2mm}
\label{tab:phys-consistant}
\centering
\setlength{\tabcolsep}{3pt}
\small

\textbf{(a) Physical-consistency}\par\vspace{2pt}
\begin{tabular}{lccccc}
\toprule
{\bf Model} & {\bf Wave-speed} & {\bf Canonical} & {\bf PDE resid. short/long} & {\bf Equipartition} & {\bf Momentum}\\
\midrule
MeshFT-Net & \textbf{\num{2.03e-2}} & \textbf{\num{9.63e-6}} & \textbf{\num{3.61e-3}}/\textbf{\num{3.32e-3}} & \textbf{\num{4.11e-2}} & \textbf{\num{4.89e-8}}\\
MGN        & \num{2.00e-1} & \num{1.02e-3} & \num{3.56e-2}/\num{2.68e-1} & \underline{\num{1.68e-1}} & \num{3.90e-1}\\
MGN-HP     & \num{3.91e-1} & \num{1.65e-3} & \num{1.11e-1}/\num{2.21e-1} & \num{2.60e-1} & \underline{\num{9.01e-2}}\\
PI-MGN     & \num{4.84e-1} & \num{15.9} & \num{2.48e-2}/\num{6.68e-1} & \num{4.39e-1} & \num{4.99}\\
HNN        & \underline{\num{8.88e-2}} & \underline{\num{6.49e-5}} & \underline{\num{1.88e-2}}/\underline{\num{2.01e-1}} & \num{2.01e-1} & \num{1.07}\\
FNO        & \num{3.08e-1} & \num{6.95e-2} & \num{5.09e-2}/\num{5.81e-1} & \num{2.60e-1} & \num{1.65}\\
GraphCON   & \num{2.11e-1} & \num{2.42e-2} & \num{1.57e-1}/\num{2.80e-1} & \num{1.79e-1} & \num{3.00e-1}\\
\bottomrule
\end{tabular}

\vspace{8pt}

\textbf{(b) Learning-adequacy}\par\vspace{2pt}
\begin{tabular}{lccccc}
\toprule
{\bf Model} & {\bf VF cosine ($\uparrow$)} & {\bf VF $L_2$} & {\bf Short roll MSE} & {\bf Amp err} & {\bf Phase err (deg)}\\
\midrule
MeshFT-Net & \textbf{\num{0.999996}} & \textbf{\num{3.03e-3}} & \underline{\num{9.36e-2}} & \textbf{\num{1.25e-2}} & \textbf{\num{8.06e-1}}\\
MGN        & \num{0.999882} & \num{2.85e-2} & \num{1.06e-1} & \num{5.05e-2} & \num{2.47}\\
MGN-HP     & \num{0.999353} & \num{2.80e-2} & \num{2.11e-1} & \num{1.66e-1} & \num{5.99}\\
PI-MGN     & \num{0.935556} & \num{2.96e-1} & \num{2.98e-1} & \num{6.93e-2} & \num{13.9}\\
HNN        & \underline{\num{0.999922}} & \underline{\num{1.14e-2}} & \textbf{\num{6.34e-2}} & \underline{\num{2.14e-2}} & \underline{\num{1.05}}\\
FNO        & \num{0.999821} & \num{2.32e-2} & \num{1.26e-1} & \num{8.29e-2} & \num{3.98}\\
GraphCON   & \num{0.998779} & \num{4.62e-2} & \num{1.96e-1} & \num{1.26e-1} & \num{4.60}\\
\bottomrule
\end{tabular}
\end{table*}
Beyond accuracy and energy drift, we ask whether models respect the \emph{physics} on $2$D plane waves (periodic grids). We use model-agnostic diagnostics for all baselines: (i) wave speed error, (ii) canonical consistency, (iii) PDE residual, (iv) kinetic–potential equipartition, and (v) momentum conservation. We also run lightweight learning-validity checks: vector-field alignment and amplitude/phase fit. Full definitions appear in Appendix~\ref{app:physics-consistency}. To broaden coverage of baselines, we also add a neural operator (FNO) \citep{li2021fourier}, a long-range graph simulator (GraphCON) \citep{graphcon}, and a physics-informed MeshGraphNet (PI-MGN) \citep{WURTH2024117102} with MGN, MGN-HP, and HNN.

% Table~\ref{tab:phys-consistant} (a) shows that MeshFT-Net is most physically faithful across all diagnostics: smallest wave-speed error, near-exact canonical relation, minimal PDE residual, closest equipartition, and essentially zero momentum change, outperforming the baselines on all metrics, often by orders of magnitude.
% Momentum is preserved without explicit constraints because MeshFT-Net enforces action–reaction at interfaces guaranteed by (O). The other baselines lacking this property need not conserve it. The added baselines follow the same trend: FNO and GraphCON achieve reasonable short-horizon accuracy yet exhibit larger long-horizon residuals and momentum drift, while PI-MGN improves some accuracy terms but degrades canonical consistency and long-horizon residuals.
% Learning-validity checks (Table~\ref{tab:phys-consistant} (b)) reinforce the advantage of MeshFT-Net: it achieves the best alignment, second-smallest short-horizon error, and the most accurate amplitude/phase recovery. While a soft Hamiltonian penalty (as in HNN) can slightly improve physical consistency, MeshFT-Net is substantially more robust. Overall, fixing topology-driven interconnection while learning metric/dissipation yields physically consistent predictions.

Table~\ref{tab:phys-consistant}~(a) shows that MeshFT-Net is the most physically faithful across the diagnostics. It attains the smallest wave-speed error, an essentially exact canonical relation, the lowest PDE residuals, the closest equipartition, and near-zero momentum change, outperforming all baselines, often by orders of magnitude.
Momentum is preserved without an explicit constraint because MeshFT-Net enforces the interface action--reaction structure implied by (O), whereas unconstrained baselines need not. The additional baselines follow the same pattern. FNO and GraphCON can keep short-horizon errors reasonable but exhibit larger long-horizon residuals and momentum drift, while PI-MGN improves some accuracy terms but degrades canonical consistency and long-horizon residuals.
The learning-adequacy checks in Table~\ref{tab:phys-consistant}~(b) are consistent with this picture. MeshFT-Net achieves the best vector-field alignment, the second-smallest short-horizon rollout error, and the most accurate amplitude and phase recovery. A soft Hamiltonian penalty, as in HNN, can improve some diagnostics, but MeshFT-Net remains more robust overall. These results support the effectiveness for yielding physically consistent predictions of a principle-driven approach in which through the reduction theorem the topology-wired interconnection is fixed and only the metric factors are learned.

\subsection{OOD Generalization}
We also evaluate OOD generalization on periodic $2$D waves under three shifts such as (i) frequency, (ii) resolution (coarse to fine), and (iii) parameter (wave speed). Metrics are TSMSE and normalized energy drift after a fixed-horizon rollout; details and full results in Table~\ref{tab:ood-app} of Appendix~\ref{app:ood}. This tests whether inductive bias supports accurate, physically consistent rollouts under resolution and physical parameters shifts. We include baselines MGN, MGN-HP, PI-MGN, HNN, FNO, and GraphCON.
Across all OOD shifts in Table~\ref{tab:ood-compact}, MeshFT-Net delivers the lowest energy drift overall and the best TSMSE on \emph{Frequency} and \emph{Resolution}. Several baselines diverge under certain shifts ($>\!100$), whereas MeshFT-Net remains stable and accurate across discretization, and parameter changes. Importantly, together with the one-step MSE shown in Table~\ref{tab:ood-app}, these results show that strong local generalization does not necessarily translate into faithful long-horizon dynamics or energy stability. Models with competitive one-step MSE can still exhibit larger TSMSE and/or drift over rollouts.
Overall, MeshFT-Net fixes the incidence-based interconnection and learns only metric/dissipation factors, yielding robust extrapolation across resolution and physical parameters shifts.

% A soft Hamiltonian penalty can sometimes improve OOD robustness, thus lower drift need not imply higher physical fidelity.

% ========================================
%            Table: OOD
% ========================================
\begin{table}[t]
\caption{OOD generalization under Frequency, Wave speed, and Resolution shifts (lower is better). Columns report TSMSE and normalized energy drift, computed on held-out validation sets. Values are means over $3$ seeds. Bold indicates the best and underline the second best in each column. Entries shown as $>\!100$ exceeded the evaluation cap (diverged). Scientific notation in this table uses $\mathrm{e}$-format for compactness. Full results are in Appendix~\ref{app:ood}.}
\label{tab:ood-compact}
\centering
\setlength{\tabcolsep}{1.5pt}
\renewcommand{\arraystretch}{0.98}
\small
\begin{tabular}{lcccccc}
\toprule
 & \multicolumn{2}{c}{\textbf{Frequency}} & \multicolumn{2}{c}{\textbf{Wave Speed}} & \multicolumn{2}{c}{\textbf{Resolution}} \\
\cmidrule(lr){2-3}\cmidrule(lr){4-5}\cmidrule(lr){6-7}
{\bf Model} & \textbf{TSMSE} &\textbf{ Drift } & \textbf{TSMSE} & \textbf{Drift }& \textbf{TSMSE} & \textbf{Drift} \\
\midrule
MeshFT-Net & \bf{1.8e-1} & \textbf{5.1e-3} & \underline{7.1e-1} & \bf{1.7e-1} & \textbf{5.1e-2}  & \textbf{2.9e-3}  \\
MGN        & 9.0 & $>\!100$ & 1.7 & 25.5 & 5.3e-1 & 3.4 \\
MGN-HP     & \underline{6.2e-1} & \underline{1.6e-1} & \bf{5.9e-1} & 2.5e-1 & 5.8e-1 & 5.6 \\
PI-MGN     & $>\!100$ & $>\!100$ & $>\!100$ & $>\!100$ & $>\!100$ & $>\!100$\\
HNN        & 1.54 & 4.5e-1 & 8.0e-1 & \underline{2.4e-1} & 
3.6e-1 & 2.0 \\
FNO        & 1.94 & 2.7 & 9.6e-1 & 1.6 & $>\!100$ & $>\!100$ \\
GraphCON        & 69.0 & 96.4 & $>\!100$ & $>\!100$ & \underline{3.2e-1} & \underline{7.1e-1} \\
\bottomrule
\end{tabular}
\vspace{2mm}
\end{table}
%
% first order integrator
% \begin{table}[t]
% \caption{OOD generalization (means; lower is better). Columns show one-step MSE / Drift for each shift.}
% \label{tab:ood-compact}
% \centering\setlength{\tabcolsep}{3.5pt}
% \begin{tabular}{lcccccc}
% \toprule
%  & \multicolumn{2}{c}{Frequency} & \multicolumn{2}{c}{Parameter} & \multicolumn{2}{c}{Resolution} \\
% \cmidrule(lr){2-3}\cmidrule(lr){4-5}\cmidrule(lr){6-7}
% {\bf Model} & one-step MSE & Drift & one-step MSE & Drift & one-step MSE & Drift \\
% \midrule
% MeshFT-Net & \textbf{1.2} & $\bf{4.2{\times}10^{-3}}$ & \textbf{0.95} & \textbf{0.16} & \textbf{0.64} & $\bf{2.1{\times}10^{-3}}$ \\
% MGN        & 1.9 & 0.88 & 1.05 & 0.24 & 2.3 & 2.1 \\
% MGN-HP     & 1.6 & $8.2{\times}10^{-2}$ & 1.5 & 0.16 & 4.2 & 5.2 \\
% HNN        & 1.8 & 1.8 & 1.2 & 1.2 & 4.3 & 15.3 \\
% \bottomrule
% \end{tabular}
% \end{table}

\subsection{Acoustic Scattering Benchmark from \textit{The Well}}
\begin{figure}[t]
\centering
\includegraphics[width=1.\linewidth]{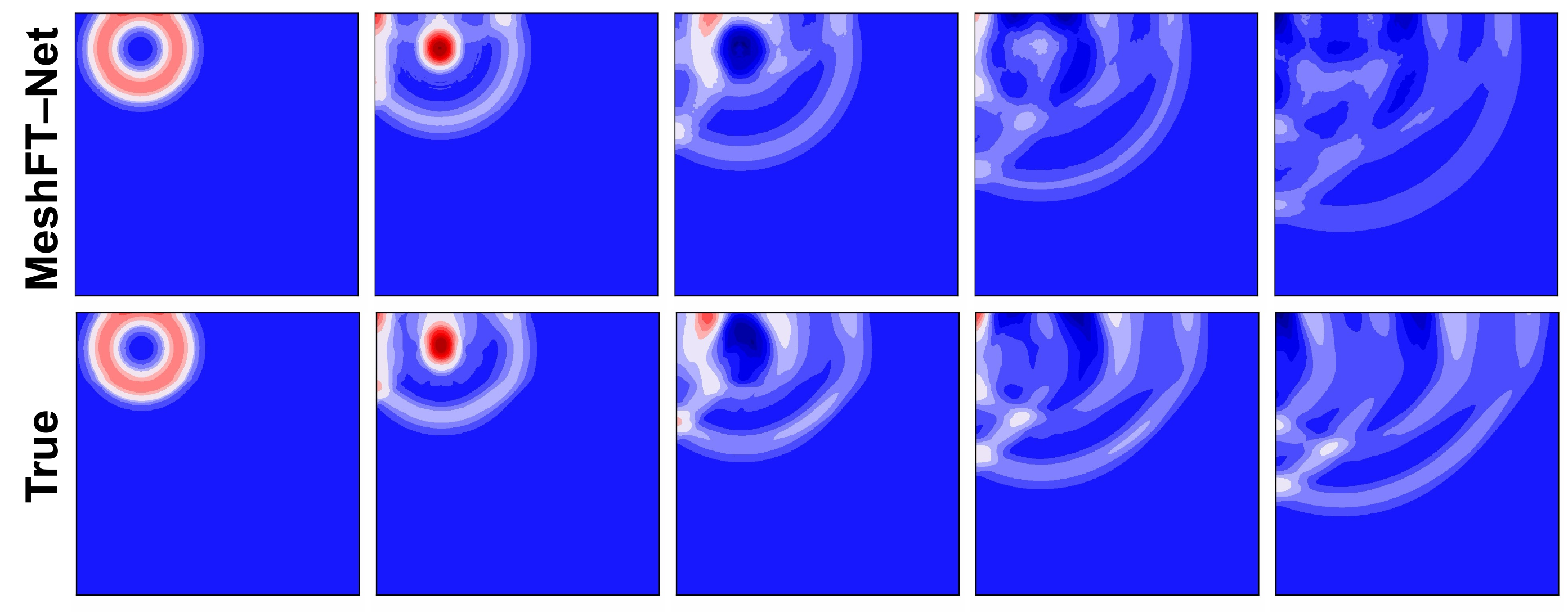}
\vspace{-4mm}
\caption{Pressure snapshots at equal time steps ordered right to left. The rightmost frame is the initial state (shared colormap).}
\label{fig:pressure-contours}
\vspace{-2mm}
\end{figure}

To assess transfer beyond synthetic data, we evaluate MeshFT-Net on a subset of \textit{The Well}—Acoustic Scattering~\citep{ohana2024well,mandli2016clawpack} which is near-Hamiltonian but includes discontinuous media and open/reflective boundaries. We train with one-step teacher forcing for the pressure field. This probes whether a topology-fixed interconnection that preserves structure also maintains fidelity for more realistic data. The details of this experiment appear in Appendix~\ref{app:well}.
Fig.~\ref{fig:pressure-contours} shows the snapshots of predicted and ground-truth pressure fields on the validation set, using a common colormap.
MeshFT-Net closely matches wavefront position and curvature. Differences are limited to slight smoothing of high-frequency details and reduced peak contrast, while boundary reflections remain consistent. These visuals agree with the quantitative findings of low drift and correct dispersion.

\subsection{Nonlinear and Heterogeneous Benchmarks}
\label{sec:broader_regimes}
We next evaluate the MeshFT-Net parameterization in more general cases involving irregular geometry, heterogeneous constitutive response, nonlinear dynamics, and dissipation. The conservative interconnection remains fixed by signed incidences, while geometry- and material-dependent effects are represented through learned metric-dependent factors. Thus, MeshFT-Net keeps the signed-incidence sparsity and orientation structure of $J$ fixed throughout these evaluations.
Table~\ref{tab:broader_regimes_main} reports representative results. Full results, implementation details, scalability measurements, and qualitative visualizations are provided in Appendix~\ref{app:broader_regimes}.

\begin{table}[t]
\centering
\small
\setlength{\tabcolsep}{5pt}
\caption{Nonlinear, dissipative, and heterogeneous benchmarks. TSMSE is reported on held-out OOD rollouts. All values are means over 3 seeds. Lower is better. Best values are shown in bold, and second-best values are underlined. Entries shown as $>100$ exceeded the display cap or yielded non-finite rollout statistics. For (a), FNO is omitted because it is grid-based. Full experimental details, scalability results, and qualitative rollouts are provided in Appendix~\ref{app:broader_regimes}.}
\label{tab:broader_regimes_main}

\textbf{(a) Irregular heterogeneous elasticity}\par\vspace{2pt}
\begin{tabular}{lcc}
\toprule
{\bf Model} & {\bf TSMSE} & {\bf Momentum variation} \\
\midrule
MeshFT-Net & \textbf{\num{4.91e-3}} & \textbf{\num{3.09e-7}} \\
MGN        & \num{2.01e-1} & \num{1.51} \\
MGN-HP     & \num{6.88e-2} & \num{1.11} \\
HNN        & \underline{\num{2.95e-2}} & \underline{\num{1.85e-1}} \\
GraphCON   & $>100$ & $>100$ \\
\bottomrule
\end{tabular}

\vspace{6pt}

\textbf{(b) Damped nonlinear lattice oscillator}\par\vspace{2pt}
\begin{tabular}{lcc}
\toprule
{\bf Model} & {\bf TSMSE} & {\bf Energy rel. MAE} \\
\midrule
MeshFT-Net & \textbf{\num{2.54e-4}} & \textbf{\num{8.70e-3}} \\
MGN        & $>100$ & $>100$ \\
MGN-HP     & $>100$ & $>100$ \\
HNN        & \num{1.15e-2} & \num{3.01e-2} \\
FNO        & \num{1.70e-1} & \num{4.32e-1} \\
GraphCON   & \underline{\num{9.19e-4}} & \underline{\num{1.98e-2}} \\
\bottomrule
\end{tabular}
\vspace{-1.0em}
\end{table}

The irregular heterogeneous-elasticity benchmark uses a bounded deformable body with a hole, an irregular mesh, and geometry-dependent constitutive response. As shown in Table~\ref{tab:broader_regimes_main} (a), MeshFT-Net attains the lowest rollout error among the evaluated baselines and keeps total momentum variation at the $10^{-7}$ scale. This supports that the signed-incidence conservative wiring remains effective beyond periodic wave benchmarks, while the learned metric-dependent factors represent spatial heterogeneity. In addition, we assess scalability and qualitative behavior on the same heterogeneous deformable-body setting. Full results are reported in Appendix~\ref{app:broader_regimes}. 
Also, the damped nonlinear lattice benchmark shown in Table~\ref{tab:broader_regimes_main} (b) clarifies the dissipative side of the reduction. MeshFT-Net tracks the dissipative energy decay more accurately than the evaluated baselines even when the system is nonlinear, with lower rollout error and lower energy relative mean absolute error (Energy rel. MAE). This shows that the same topology-fixed interconnection with learned metric-dependent dissipative factors also captures well nonlinear dissipative dynamics.

\subsection{Topology--Metric Separation Ablation}
\label{sec:topology_metric_ablation}
\begin{table}[t]
\caption{Topology--metric separation ablation on the heterogeneous-elasticity benchmark. Metrics are means over 3 seeds. Lower is better. Best values are shown in bold, and second-best values are underlined.}
\centering
\setlength{\tabcolsep}{2.5pt}
\small
\begin{tabular}{lccc}
\toprule
\textbf{Variant} & \textbf{TSMSE} & \textbf{Energy rel. MAE} & \textbf{Mom. var.} \\
\midrule
MeshFT-Net
& \textbf{\num{4.83e-3}}
& \textbf{\num{5.32e-2}}
& \textbf{\num{2.67e-7}} \\
w/o metric
& \num{1.50e-2}
& \num{9.99e-2}
& \underline{\num{2.78e-7}} \\
w/o topology-fixed
& \underline{\num{1.41e-2}}
& \underline{\num{6.78e-2}}
& \num{1.26e-2} \\
\bottomrule
\end{tabular}
\label{tab:topology_metric_ablation}
\vspace{-1.0em}
\end{table}

We isolate the practical roles of topological and metric factors of MeshFT-Net using the heterogeneous-elasticity benchmark. The full model uses topology-fixed signed-incidence conservative wiring together with a learned metric. We compare it with two ablations: one that restricts the metric to a spatially uniform metric while keeping the conservative wiring fixed, and one that breaks the topology-fixed conservative wiring while retaining local metric learning.

Table~\ref{tab:topology_metric_ablation} shows a clear separation of roles. Restricting the heterogeneous metric degrades rollout accuracy and energy fidelity, while momentum variation (Mom. var.) remains at the $10^{-7}$ scale because the signed-incidence wiring is unchanged.
In contrast, breaking the topology-fixed conservative wiring leads to a much larger momentum variation. This behavior is consistent with the reduction theorem: metric-dependent factors control constitutive scales and energy fidelity, whereas signed-incidence conservative wiring guarantee the symmetric structure underlying momentum preservation. Implementation details and additional results are provided in Appendix~\ref{app:topology_metric_ablation} and complementary principle-level ablations are provided in Appendix~\ref{app:ablation}.

\subsection{State-Dependent Conservative-Coupling Ablation}
\label{exp:j-depend}
To isolate the role of state-dependent conservative coupling inside MeshFT-Net, we run a controlled nonlinear experiment in which conservative coupling strengths vary with the current state while the signed-incidence wiring is kept fixed. The purpose is to experimentally clarify how nonlinear conservative variation is shared between the interconnection and metric sides. We generate trajectories from a structure-reduced conservative system $\dot z = J_{\mathrm{true}}(z)G_{\mathrm{true}}z$ with a fixed diagonal energy metric $G_{\mathrm{true}}$ (details in Appendix~\ref{app:sw-jdep}). We then vary two modeling choices: state-dependent versus state-independent $J$, and channel-wise diagonal versus full cross-channel $3\times3$ metric $G$. Table~\ref{tab:sw-jdep} shows that allowing $z$-dependence in $J$ reduces TSMSE and energy drift, consistent with capturing state-dependent coupling variation through incidence-wired gains. A full cross-channel $G$ can \emph{partly} mitigate the mismatch when $J$ is state-independent, but it does not match the benefit of modeling the coupling variation directly.

\begin{table}[t]
\caption{Controlled state-dependent conservative-coupling ablation. We report TSMSE over a long rollout and relative energy drift under the reference energy induced by $G_{\mathrm{true}}$. Metrics are means over $3$ seeds. Lower is better. Best values are shown in bold, and second-best values are underlined.}
\centering
\setlength{\tabcolsep}{4pt}
\small
\begin{tabular}{lcc}
\toprule
\textbf{Variant} & \textbf{TSMSE} & \textbf{Energy drift} \\
\midrule
Fixed-$J$ + Diag-$G$ & \num{4.52e-5} & \num{1.15e-1} \\
Fixed-$J$ + Full-$G$ & \num{3.28e-5} & \underline{\num{2.79e-2}} \\
$z$-dependent-$J$ + Diag-$G$    & \underline{\num{6.77e-6}} & \textbf{\num{2.49e-2}} \\
$z$-dependent-$J$ + Full-$G$    & \textbf{\num{6.17e-6}} & {\num{3.01e-2}} \\
\bottomrule
\end{tabular}
\label{tab:sw-jdep}
\vspace{-1.0em}
\end{table}

%%%%%%%%%%%%%%%%%%%%%%%%%%%%%%%%%%%%%%%%%%%%%%%%%%%%%%%%%%%%%%%%%%%%%%%%%%%%%%%%%%%%%%%%%%%%%%
%   Conclusion
%%%%%%%%%%%%%%%%%%%%%%%%%%%%%%%%%%%%%%%%%%%%%%%%%%%%%%%%%%%%%%%%%%%%%%%%%%%%%%%%%%%%%%%%%%%%%%
\section{Conclusion}
We established \emph{Mesh Field Theory (MeshFT)} by proving a principle-driven \emph{local} Jacobian reduction for mesh dynamics. Under locality, permutation equivariance, orientation consistency, and an energy balance or dissipation inequality, admissible dynamics admit a pointwise port--Hamiltonian factorization in which the conservative coupling has topology-fixed wiring, while geometry, material response, and dissipation enter only through local metric-dependent operators. The novelty is not to posit a port--Hamiltonian template, but to \emph{deduce} what is structurally fixed versus learnable from the physical principles on meshes, thereby making the topology--metric split an identified consequence rather than an assumption.

Guided by this reduction, we designed MeshFT-Net to hard-wire the conservative wiring and learn only local metric and dissipation operators. Across analytic datasets and acoustic data from \textit{The Well}, physics-consistency tests, and OOD validation, MeshFT-Net produces accurate long-horizon rollouts with strong physical fidelity, improved data efficiency, and robust extrapolation across resolution and parameter shifts. Consequently, MeshFT takes a first step toward a principle-driven methodology for constructing learning-based simulators on meshes by relying the reduction theorem to deduce the topology-wired conservative structure and confine learning to metric and dissipation.

% -------- Template -------------------------
\clearpage

\section*{Acknowledgements}
This work was supported by JSPS KAKENHI Grant Numbers JP23K19968, JP26K21327, JP22H05106 and JP26H02498.

% \textbf{Do not} include acknowledgements in the initial version of the paper
% submitted for blind review.

% If a paper is accepted, the final camera-ready version can (and usually should)
% include acknowledgements.  Such acknowledgements should be placed at the end of
% the section, in an unnumbered section that does not count towards the paper
% page limit. Typically, this will include thanks to reviewers who gave useful
% comments, to colleagues who contributed to the ideas, and to funding agencies
% and corporate sponsors that provided financial support.

\section*{Impact Statement}
This paper presents work whose goal is to advance the field of 
Machine Learning. There are many potential societal consequences 
of our work, none which we feel must be specifically highlighted here.

% Authors are \textbf{required} to include a statement of the potential broader
% impact of their work, including its ethical aspects and future societal
% consequences. This statement should be in an unnumbered section at the end of
% the paper (co-located with Acknowledgements -- the two may appear in either
% order, but both must be before References), and does not count toward the paper
% page limit. In many cases, where the ethical impacts and expected societal
% implications are those that are well established when advancing the field of
% Machine Learning, substantial discussion is not required, and a simple
% statement such as the following will suffice:

% ``This paper presents work whose goal is to advance the field of Machine
% Learning. There are many potential societal consequences of our work, none
% which we feel must be specifically highlighted here.''

% The above statement can be used verbatim in such cases, but we encourage
% authors to think about whether there is content which does warrant further
% discussion, as this statement will be apparent if the paper is later flagged
% for ethics review.

% In the unusual situation where you want a paper to appear in the
% references without citing it in the main text, use \nocite
% \nocite{langley00}

\bibliography{example_paper}
\bibliographystyle{icml2026}

%%%%%%%%%%%%%%%%%%%%%%%%%%%%%%%%%%%%%%%%%%%%%%%%%%%%%%%%%%%%%%%%%%%%%%%%%%%%%%%
%%%%%%%%%%%%%%%%%%%%%%%%%%%%%%%%%%%%%%%%%%%%%%%%%%%%%%%%%%%%%%%%%%%%%%%%%%%%%%%
% APPENDIX
%%%%%%%%%%%%%%%%%%%%%%%%%%%%%%%%%%%%%%%%%%%%%%%%%%%%%%%%%%%%%%%%%%%%%%%%%%%%%%%
%%%%%%%%%%%%%%%%%%%%%%%%%%%%%%%%%%%%%%%%%%%%%%%%%%%%%%%%%%%%%%%%%%%%%%%%%%%%%%%
\newpage
\appendix
\onecolumn

\section{Concrete Examples Coboundary Operators}
\label{app:dec_example}
We give a small example to make the notation used in the main text explicit. Let $n_k$ denote the number of oriented $k$-cells and let $C^k\simeq\mathbb{R}^{n_k}$ be the space of $k$-cochains. A $0$-cochain assigns one scalar to each node, a $1$-cochain assigns one scalar to each oriented edge, and a $2$-cochain assigns one scalar to each oriented face. After fixing an orientation and an ordering of the cells, the coboundary $D_k:C^k\to C^{k+1}$ is represented by a sparse signed-incidence matrix $D_k\in\mathbb{R}^{n_{k+1}\times n_k}$.

For the oriented triangle shown in Fig.~\ref{fig:single_triangle_dec}, we have three vertices, three oriented edges, and one oriented face, hence $n_0=3$, $n_1=3$, and $n_2=1$. With vertex values $q=(q_0,q_1,q_2)^\top\in C^0$ and edge values $u=(u_0,u_1,u_2)^\top\in C^1$, the coboundary matrices can be written as
\[
D_0 =
\begin{bmatrix}
-1 & 1 & 0 \\
0 & -1 & 1 \\
1 & 0 & -1
\end{bmatrix},
\qquad
D_1 =
\begin{bmatrix}
1 & 1 & 1
\end{bmatrix}.
\]
Therefore
\[
D_0 q =
\begin{bmatrix}
q_1-q_0\\
q_2-q_1\\
q_0-q_2
\end{bmatrix},
\qquad
D_1 u = u_0+u_1+u_2.
\]
Thus $D_0$ sends node values to oriented edge differences, while $D_1$
sums oriented edge quantities around the face. In particular,
$D_1D_0=0$, which means that the oriented sum of a discrete gradient
around a closed face is zero.

The transpose acts in the reverse direction. For example,
\[
D_0^\top u =
\begin{bmatrix}
-u_0+u_2\\
u_0-u_1\\
u_1-u_2
\end{bmatrix},
\]
which accumulates signed edge quantities back to vertices. This is why
$D_k^\top$ appears in the conservative interconnection.

\begin{figure*}[t]
\centering
\includegraphics[width=0.60\textwidth]{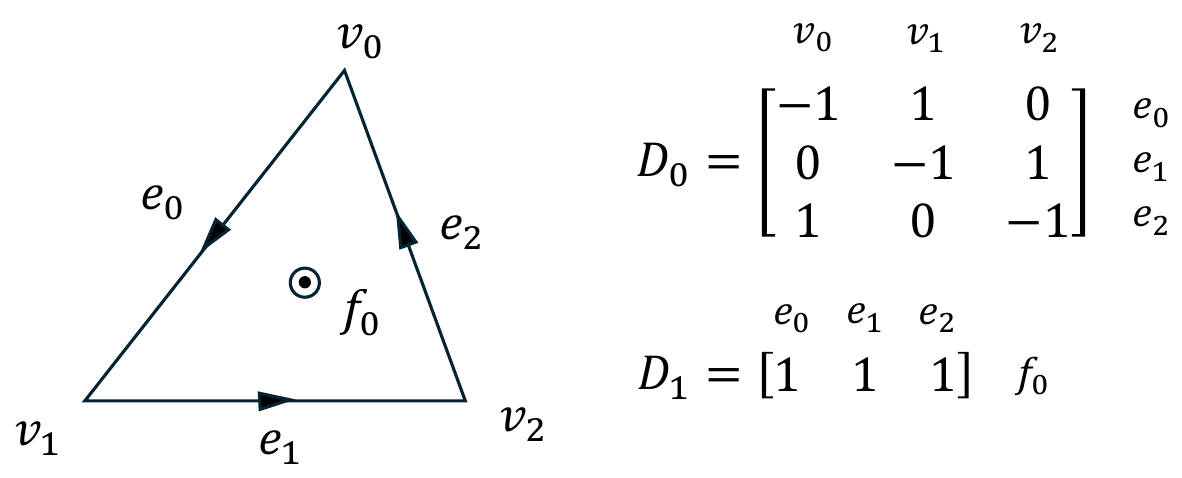}
\caption{
A single oriented triangle and its coboundary matrices. Here $N_i$ are
vertex labels, $E_i$ are oriented edge labels, and $F_0$ is the oriented
face; these labels should not be confused with $n_k$, the number of
$k$-cells. Rows of $D_0$ correspond to oriented edges and columns
correspond to vertices. Each row has $-1$ at the tail and $+1$ at the
head of the corresponding oriented edge. The row of $D_1$ corresponds
to the oriented face and columns correspond to oriented edges. In this
orientation, all edge orientations agree with the boundary orientation
of the face, so $D_1=[1\ 1\ 1]$. This toy example has
$D_0\in\mathbb{R}^{3\times3}$ and $D_1\in\mathbb{R}^{1\times3}$, but
in general $D_k$ is rectangular. The notation $D_k^\top$ always denotes
the matrix transpose, not an inverse.
}
\label{fig:single_triangle_dec}
\end{figure*}

For a node--edge state $z=(z_0,z_1)\in C^0\oplus C^1$ with co-energy
$e=(e_0,e_1)$, the incidence-wired conservative part used in MeshFT is
\[
\begin{bmatrix}
\dot z_0\\
\dot z_1
\end{bmatrix}
=
\underbrace{
\begin{bmatrix}
0 & -D_0^\top\\
D_0 & 0
\end{bmatrix}}_{J}
\begin{bmatrix}
e_0\\
e_1
\end{bmatrix}.
\]
The equation $\dot z_1=D_0e_0$ sends node co-energy differences to
oriented edges, while $\dot z_0=-D_0^\top e_1$ sends edge co-energies
back to the incident vertices with opposite signs at the two endpoints.
The block matrix is skew-symmetric, so the conservative part does no
net work:
\[
e^\top J e
=
e_0^\top(-D_0^\top e_1)+e_1^\top D_0e_0
=0.
\]
This is the elementary mechanism behind the energy-preserving
topology-wired coupling.

\section{Proof and Technical Details to Theorem~\ref{thm:reduction}}
\label{app:proofs}
We adopt the notation of the main text and work in the local energy coordinates used in Theorem~\ref{thm:reduction}.
That is, near any $z$ where the Jacobian exists, we write $e=\nabla H(z)$ with
$G(z):=\frac{\partial e}{\partial z}(z)\succ0$, for a (local) strictly convex energy $H$, and then $G(z)=\nabla^2 H(z)$.
The dynamics split as $\dot z=F(z)=F_{\mathrm{con}}(z)+F_{\mathrm{diss}}(z)$
with conservative power balance $e^\top F_{\mathrm{con}}(z)=0$ and passivity
$e^\top F_{\mathrm{diss}}(z)\le 0$ (no sources).

\paragraph{Orientation/Sign Conventions.}
Let $\mathcal{K}$ be an oriented cell complex with cochain spaces $\{C^k\}_{k=0}^d$
and signed coboundaries $D_k:C^k\!\to\!C^{k+1}$ satisfying $D_{k+1}D_k=0$.
Degree-wise flips are encoded by $\rho=\mathrm{diag}(\rho_0,\dots,\rho_d)$, where each $\rho_k$
is diagonal with $\pm1$ entries (orientation gauge). Consistent flips leave incidences invariant,
$\rho_{k+1}D_k\rho_k=D_k$, whereas single-degree flips change sign: $D_k\rho_k=-D_k$ and
$\rho_{k+1}D_k=-D_k$. Orientation covariance (O) is expressed by
\begin{equation}
e(\rho z)=\rho\,e(z),\quad F(\rho z)=\rho\,F(z),
\end{equation}
so scalar quantities (energy, power) are gauge-invariant while flux-like quantities co-transform with their carriers. These conventions will be used repeatedly in the proofs below. This also implies $H(\rho z)=H(z)$.

\subsection{Local Symmetry Basis and Orientation Covariance}
\label{app:1}
We first formalize the structure forced by (L), (P), and orientation covariance (O).

\begin{lemma}[Local, permutation-equivariant linear basis]
\label{lem:orbit}
Let $T:C^{k-1}\!\times\!C^{k}\!\times\!C^{k+1}\!\to\!C^k$ be linear, \emph{interface-local} (depends only on cells incident to the output $k$-cell), and permutation-equivariant (with permutations acting within each degree/type class).
Then $T$ decomposes as
\begin{equation}
T(x_{k-1},x_k,x_{k+1})
= a\,x_k \;+\; b\,A_k x_k \;+\; \alpha\, D_{k-1}x_{k-1} \;-\; \beta\, D_k^\top x_{k+1},
\end{equation}
where $A_k$ is the \emph{unsigned} adjacency of $k$-cells on $\mathcal K$, and
coefficients $(a,b,\alpha,\beta)$ are label-independent scalars.
\end{lemma}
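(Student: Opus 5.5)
The plan is to write $T$ blockwise as $T = T_{k,k-1}+T_{k,k}+T_{k,k+1}$, with $T_{k,j}:C^j\to C^k$, and to analyse each block as a sparse matrix indexed by pairs of cells. Interface locality means the entry $(T_{k,j})_{\sigma\tau}$ can be nonzero only if $\tau$ is incident to $\sigma$. For $j=k$ this allows $\tau=\sigma$ or a $k$-cell sharing a face or coface with $\sigma$. For $j=k-1$ it allows $\tau$ to be a face of $\sigma$, and for $j=k+1$ it allows $\tau$ to be a coface of $\sigma$. Permutation equivariance says $P_k T_{k,j} P_j^{\top} = T_{k,j}$ for every admissible relabelling. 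Here I read ``permutation'' as a relabelling induced by an automorphism of the complex, acting simultaneously on all degrees; otherwise the incidence structure itself would not be preserved. With that reading, each block entry is constant on orbits of incident pairs $(\sigma,\tau)$ under the symmetry group.

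The key steps, in order, are as follows.

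First, the diagonal. The pairs $(\sigma,\sigma)$ form a single orbit class, which gives the term $a\,x_k$.

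Second, off-diagonal pairs in the same degree. All adjacent pairs of $k$-cells are treated as one class, so their entries share a common value, which gives $b\,A_k x_k$. The reason the adjacency is \emph{unsigned} here is that, before (O) is imposed, no orientation data enters the same-degree block. Because $A_k$ is symmetric, it is natural for this block.

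Third, the cross-degree blocks. The only label-independent data attached to an incident pair (face $\tau$ of $\sigma$) is its relative orientation sign $[\sigma:\tau]\in\{\pm1\}$. So the entry takes one value $\alpha_+$ when the sign is $+1$ and another value $\alpha_-$ when it is $-1$. This gives
\[
T_{k,k-1} = \tfrac{\alpha_++\alpha_-}{2}\,|D_{k-1}| + \tfrac{\alpha_+-\alpha_-}{2}\,D_{k-1},
\]
and the same reasoning applies to $T_{k,k+1}$, using $D_k^\top$ and $|D_k|^\top$.

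Fourth, elimination of the unsigned incidence parts. Here I will use the orientation gauge from the conventions paragraph (flipping a single cell commutes with the lemma's structural claim). Flipping the orientation of a face $\tau$ negates $x_{k-1}$ on $\tau$ and flips the sign $[\sigma:\tau]$, while leaving the output on $\sigma$ unchanged. Covariance therefore forces the coefficient of $|D_{k-1}|$ to vanish, leaving $\alpha D_{k-1}$. The same argument kills $|D_k|^\top$ and leaves $-\beta D_k^\top$; the minus sign is only a naming convention.

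If the lemma is intended without (O), I will note explicitly that the signed form is exactly the orientation-covariant part of the orbit decomposition. I will state this as the use of the gauge conventions laid out just above the lemma.

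The main obstacle is the orbit-counting step. On an irregular complex, the automorphism group may be trivial, and then permutation equivariance alone does not force the coefficients to be uniform. I will handle this the way MGN-type models do: I interpret (P) as equivariance under \emph{all} relabellings, applied to a map defined uniformly by a shared kernel acting on local incidence patterns. In that setting, the coefficients are functions of the local incidence type alone. A further subtlety is that same-degree pairs might split into several orbit types (for example, sharing a face versus sharing a coface). I will either merge these types under the lemma's definition of $A_k$, or remark that $b A_k$ should be read as a sum over adjacency types. Each such term is still unsigned and symmetric, so nothing downstream changes.
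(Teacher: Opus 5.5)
Your decomposition by relation type follows the same skeleton as the paper's proof, but you diverge on where the signs come from. The paper uses only (L) and (P). Each relation type (self, $k$-neighbours, faces, cofaces) gets one shared weight, and the corresponding maps are simply identified as $I$, $A_k$, $D_{k-1}$, $-D_k^\top$. Orientation enters only later, in Lemma~\ref{lem:oddness}. You correctly notice that (L) and (P) alone cannot single out the \emph{signed} incidences. For example, $|D_{k-1}|x_{k-1}$ is linear, interface-local and equivariant, yet for $k=1$ it is not of the stated form, since each row of $D_0$ has one $+1$ and one $-1$. So you bring in orientation covariance, which means you are proving the lemma with (O) added to its hypotheses. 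Make that explicit in the statement rather than leaving it as a proviso.

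The genuine gap is your second step, which is inconsistent with the model you adopt in steps three and four. If the kernel may see the relative sign $[\sigma:\tau]$ of an incident pair, it may equally see the label-independent sign $[\sigma:\tau][\sigma':\tau]$ of two $k$-cells sharing the face $\tau$, and the analogous product through a shared coface. So the same-degree block also splits by sign, into $b\,A_k+b'S_k$, where $S_k$ is the off-diagonal part of $D_{k-1}D_{k-1}^\top$ (resp.\ $D_k^\top D_k$). Your single-cell flip test does not remove $S_k$: flipping $\sigma'$ negates both $x_{\sigma'}$ and the sign, and flipping $\tau$ changes nothing. The unsigned $A_k$ actually fails that test, because flipping a neighbour $\sigma'$ changes the output on the unflipped $\sigma$; this form of (O) would therefore force $b=0$. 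Carried through consistently, your argument yields $a\,x_k+b'S_kx_k+\alpha D_{k-1}x_{k-1}-\beta D_k^\top x_{k+1}$. For $k\ge1$ and a generic orientation, $S_k$ is not a combination of $I$ and $A_k$. For edges meeting at a vertex, head-to-tail pairs give $-1$ and head-to-head or tail-to-tail pairs give $+1$, and generically both occur. Reaching the stated form therefore requires an explicit stipulation: the same-degree kernel is orientation-blind, while the cross-degree kernels read their inputs in relative orientation. That is in effect what the paper's proof does when it lists the four admissible maps. Similarly, ``coefficients are functions of the local incidence type'' gives constant $(a,b,\alpha,\beta)$ only under sum aggregation with one weight per (relation, sign) class. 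Valence-dependent weights such as mean aggregation are also local and equivariant, but they yield cell-dependent diagonal coefficients rather than scalars.
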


\begin{proof}
(L) implies that the output on any $k$-cell can only use: itself, adjacent $k$-cells (sharing an interface), its $(k\!-\!1)$ faces, and its $(k\!+\!1)$ cofaces. From (P), relabeling cells within a degree/type must only relabel the output, so values within the same relation type (self, $k$-neighbors, faces, cofaces) must share one common weight. With linearity, the only degree-compatible linear maps supported on this incident neighborhood are the identity on $C^k$, the unsigned $k$–$k$ adjacency $A_k:C^k\!\to\!C^k$, the boundary $D_{k-1}:C^{k-1}\!\to\!C^k$, and the coboundary $-D_k^\top:C^{k+1}\!\to\!C^k$.

Hence $T$ is the stated linear combination. Any other term would either use non-incident cells (violating locality) or assign different weights within a relation type (violating permutation equivariance). \qedhere
\end{proof}

% \begin{remark}[Physical interpretation]
% $T$ can be read as a general linear and local physical operator on $k$-cells that respects relabeling symmetry.
% \end{remark}

\begin{lemma}[Orientation covariance rules out same-degree terms]
\label{lem:oddness}
From \textnormal{(O)}, in the decomposition of Lemma~\ref{lem:orbit}, we must have $a=b=0$. Consequently,
\begin{equation}
T(x_{k-1},x_k,x_{k+1}) \;=\; \alpha\,D_{k-1}x_{k-1}\;-\;\beta\,D_k^\top x_{k+1}.
\end{equation}
\end{lemma}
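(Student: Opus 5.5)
We use the covariance relation $F(\rho z)=\rho F(z)$ from the orientation conventions and apply it to the linear map $T$ of Lemma~\ref{lem:orbit}. Any single output component of $F$ that is linear and local in the inputs must inherit the rule
\[
T(\rho_{k-1}x_{k-1},\rho_k x_k,\rho_{k+1}x_{k+1})=\rho_k\,T(x_{k-1},x_k,x_{k+1})
\]
for \emph{every} admissible gauge $\rho$. The plan is to test this rule on flips that act on individual cells or on single degrees. Each test is a sign identity that the four basis terms satisfy in different ways.

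\textbf{Step 1: the incidence terms are compatible.} From the transformation laws, $D_{k-1}\rho_{k-1}=\rho_k D_{k-1}$ (equivalently $\rho_k D_{k-1}\rho_{k-1}=D_{k-1}$). Transposing $\rho_{k+1}D_k\rho_k=D_k$ gives $\rho_k D_k^\top\rho_{k+1}=D_k^\top$, so $D_k^\top\rho_{k+1}=\rho_k D_k^\top$. Both terms $\alpha D_{k-1}x_{k-1}$ and $-\beta D_k^\top x_{k+1}$ therefore satisfy the rule for arbitrary $\alpha,\beta$. These identities hold entrywise for diagonal $\pm1$ gauges, not only for $\rho_k=\pm I$: flipping one cell's orientation flips the sign of its row and column in the incidence matrix.

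\textbf{Step 2: kill the same-degree terms.} The remaining part is $S(x_k)=a x_k+bA_k x_k$, and it must satisfy $S(\rho_k x_k)=\rho_k S(x_k)$. The gauge-invariance of power then forces $x_k^\top S(x_k)=0$-type restrictions, and I would make this precise in two sub-steps.

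\emph{(i) The $b$ term.} Choose $\rho_k$ flipping a single cell $i$ that has a neighbour $j$ with $(A_k)_{ij}=1$. The rule requires $\rho_k A_k\rho_k=A_k$, but the $(i,j)$ entry changes sign, so $b=0$. This step uses that $A_k$ is \emph{unsigned}: a signed adjacency would be gauge-compatible, which is why the lemma names the unsigned one. If some degree has no adjacent pairs, $b$ is vacuous.

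\emph{(ii) The $a$ term.} The identity $aI$ commutes with every $\rho_k$, so covariance alone does not exclude it. Here I would combine (O) with the paper's reading of (O) that scalar pairings such as $e^\top\dot z$ must be gauge-invariant and that the orientation-odd conservative coupling contains no self-degree block. Concretely, a term $a\,x_k$ in $\dot z_k$ is even/odd-mismatched in the sense that it maps an oriented quantity to itself without passing through an incidence. One argues that under a single-degree flip $\rho_k=-I$ with $\rho_{k\pm1}=I$, the incidence terms must change sign by $D_{k-1}\rho_{k-1}=-\rho_k D_{k-1}$-type relations (the single-degree laws $D_k\rho_k=-D_k$ stated in the paper). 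Requiring the whole $T$ to transform uniformly under that flip is then compatible with the $\alpha,\beta$ terms but sends $a x_k\mapsto -a x_k$ relative to them, forcing $a=0$.

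\textbf{Main obstacle.} Step 2(ii) is the delicate one. Under the literal equivariance $F(\rho z)=\rho F(z)$, the identity term is admissible, so excluding it requires leaning on the paper's specific formulation of (O) via single-degree flips, $D_k\rho_k=-D_k$. I expect the argument must state explicitly that $T$ is required to transform like the incidence terms under single-degree flips, that is, to be odd in the same-degree input. I would make that interpretation explicit rather than hide it.
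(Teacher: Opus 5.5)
Your Step~2(ii) is the paper's proof, and in the paper that single step carries the whole lemma. The paper flips degree $k$ alone and uniformly ($\rho_k=-I$), keeps $D_{k-1},D_k$ fixed, and takes as the content of (O) the rule $\rho_k^{-1}T(x_{k-1},\rho_k x_k,x_{k+1})=-T(x_{k-1},x_k,x_{k+1})$. Under this operation both same-degree terms are unchanged ($A_k$ commutes with $-I$) while the two incidence terms change sign. Hence $aI+bA_k=0$, so $a=b=0$ at once, and the consistent-flip identities of your Step~1 are checked only afterwards.

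Your per-cell argument for $b$ in Step~2(i) is a genuinely different route, and the sounder one. It uses honest gauge covariance, in which the incidence matrices transform along with the gauge, and it kills $b$ exactly because the unsigned $A_k$ fails to commute with a single-cell flip. The paper's $b=0$, by contrast, rests on the same postulated sign rule as its $a=0$.

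Your diagnosis of the obstacle is correct, and it can be made sharper. For $\rho_k=-I$ the paper's rule reads $T(x_{k-1},-x_k,x_{k+1})=T(x_{k-1},x_k,x_{k+1})$, which for linear $T$ says $T$ does not depend on $x_k$: that is the conclusion itself. The literal law $F(\rho z)=\rho F(z)$ with $D$ held fixed would instead demand $T(x_{k-1},-x_k,x_{k+1})=-T(x_{k-1},x_k,x_{k+1})$, which kills $\alpha,\beta$ rather than $a,b$. With $D$ transformed along with the gauge, the same law admits $a\,x_k$ for every $a$; for example, Rayleigh damping $\dot x_k=-\gamma x_k$ is local, permutation-equivariant and gauge-covariant. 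So $a=0$ does not follow from (O) as formalized in Appendix~\ref{app:proofs}. You can only adopt the paper's rule as a definition. Alternatively, in the one place the lemma is used (Lemma~\ref{lem:JisDEC}), you can get $a=b=0$ from skew-symmetry with no orientation argument: $K^\top=-K$ makes the $(k,k)$ block skew, and $aI+bA_k$ is symmetric, so it vanishes.

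Two things in your write-up need repair. First, in Step~1 the identity $\rho_kD_{k-1}\rho_{k-1}=D_{k-1}$ does not hold for a fixed $D$ under per-cell gauges. Flipping one cell negates its row or column, so $D_{k-1}$ is replaced by $\rho_kD_{k-1}\rho_{k-1}$; for fixed $D$ the identity holds only for consistent simultaneous flips. You therefore have to commit to one transformation law:
\begin{itemize}
\item with transforming $D$, Steps~1 and 2(i) are coherent but 2(ii) cannot go through;
\item with the paper's rule, 2(ii) alone gives $a=b=0$ and 2(i) is redundant.
\end{itemize}
Second, the remark that gauge invariance of power forces $x_k^\top S(x_k)=0$ is false, since $(\rho_kx_k)^\top a\,\rho_kx_k=a\,x_k^\top x_k$ is gauge-invariant. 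A restriction of that type comes only from the conservative balance $e^\top F_{\mathrm{con}}=0$ in (E), not from (O).
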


\begin{proof}
By (O), if we change the sign convention only at degree $k$, then the $k$-oriented output flips sign accordingly:
\[
\rho_k^{-1}\,T(x_{k-1},\,\rho_k x_k,\,x_{k+1}) \;=\; -\,T(x_{k-1},x_k,x_{k+1}) \quad \text{for all inputs.}
\]
Evaluate the basis terms from Lemma~\ref{lem:orbit} under this operation. Since $A_k$ is unsigned and depends only on incidence, it commutes with $\rho_k$. Hence
\[
\rho_k^{-1}(a\,\rho_k x_k)=a\,x_k,\qquad
\rho_k^{-1}\!\bigl(b\,A_k\,\rho_k x_k\bigr)=b\,A_k x_k,
\]
while the cross-degree terms change sign on the $k$-side:
\[
\rho_k^{-1}\!\bigl(D_{k-1}x_{k-1}\bigr)=-\,D_{k-1}x_{k-1},\qquad
\rho_k^{-1}\!\bigl(D_k^\top x_{k+1}\bigr)=-\,D_k^\top x_{k+1}.
\]
Therefore the left-hand side equals $a\,x_k+b\,A_kx_k-\alpha\,D_{k-1}x_{k-1}+\beta\,D_k^\top x_{k+1}$. Since this must be the negative of $T(x_{k-1},x_k,x_{k+1})$ for all inputs, the within-degree part must vanish, i.e., $a=b=0$.

Finally, (O) also requires consistency under simultaneous flips across degrees. Using the standard sign behavior,
\[
\rho_k^{-1} D_{k-1}\rho_{k-1}=D_{k-1},\qquad \rho_k^{-1} D_k^\top \rho_{k+1}=D_k^\top,
\]
so the remaining cross-degree terms satisfy (O). This yields the stated form. \qedhere
\end{proof}

\paragraph{Note on Same-Degree Adjacency.}
If we split $A_k$ into lower/upper $k$–$k$ adjacencies (sharing a $(k{-}1)$- or a $(k{+}1)$-cell) with independent weights, (O) still forces both weights to be zero. Indeed, both adjacency operators commute with the degree-$k$ flip, while (O) requires $T$ to change sign under that flip (after compensating the output sign). Hence Lemma~\ref{lem:oddness} and its consequences remain unchanged.

\subsection{Skew/Dissipative Split from Energy Balance}
\label{app:2}
The next lemma formalizes, at the differential level, the split implied by the incremental energy assumptions (E).

\begin{lemma}[Passivity induces a skew/dissipative representation]
\label{lem:passivity}
Let $e=\nabla H(z)$ be local energy coordinates in a neighborhood of $z$, with Jacobian $G(z)=\frac{\partial e}{\partial z}(z)\succ0$.
Assume $F$ is differentiable at $z$ and that the incremental energy conditions hold in a neighborhood of $z$, i.e., for all $z_1,z_2$ near $z$ with $e_i=\nabla H(z_i)$,
\begin{equation}
(e_1-e_2)^\top\!\bigl(F_{\mathrm{con}}(z_1)-F_{\mathrm{con}}(z_2)\bigr)=0,
\qquad
(e_1-e_2)^\top\!\bigl(F_{\mathrm{diss}}(z_1)-F_{\mathrm{diss}}(z_2)\bigr)\le 0.
\end{equation}
Then there exist matrices $J(z)$ and $R(z)$ with $J(z)^\top=-J(z)$ and $R(z)\succeq0$ such that
\begin{equation}
\frac{\partial F}{\partial z}(z)\;=\;\bigl(J(z)-R(z)\bigr)\,G(z).
\end{equation}
\end{lemma}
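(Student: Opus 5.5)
The plan is to differentiate the incremental conditions along curves through $z$ and read off symmetric-part constraints in energy coordinates. Because $G(z)=\partial e/\partial z\succ0$ is invertible, the map $z\mapsto e=\nabla H(z)$ is a local diffeomorphism near $z$ by the inverse function theorem. We may therefore regard $F_{\mathrm{con}}$ and $F_{\mathrm{diss}}$ as functions of $e$ near $e_0=\nabla H(z)$. By the chain rule,
\[
\frac{\partial F_\bullet}{\partial z}(z)=\frac{\partial F_\bullet}{\partial e}(e_0)\,G(z),\qquad \bullet\in\{\mathrm{con},\mathrm{diss}\}.
\]
Here we assume, as the statement implicitly does, that each part is differentiable at $z$ whenever $F$ is.

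We then define
\[
J(z):=\frac{\partial F_{\mathrm{con}}}{\partial e}(e_0),\qquad R(z):=-\frac{\partial F_{\mathrm{diss}}}{\partial e}(e_0).
\]
To establish the skew and PSD properties, we take $e_1=e_0+t\,v$ and $e_2=e_0$ for an arbitrary direction $v$ and small $t$. Substituting into the conservative condition and dividing by $t^2$ gives
\[
v^\top\frac{F_{\mathrm{con}}(e_0+tv)-F_{\mathrm{con}}(e_0)}{t}=0.
\]
Letting $t\to0$ yields $v^\top J v=0$ for all $v$, hence $\mathrm{Sym}(J)=0$, i.e.\ $J^\top=-J$. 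The same computation applied to the dissipative inequality gives $v^\top(-R)v\le0$, i.e.\ $\mathrm{Sym}(R)\succeq0$.

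The main obstacle is that $R$ so defined need not be symmetric. A nonzero skew part of $\partial F_{\mathrm{diss}}/\partial e$ is still compatible with monotonicity. We resolve this by moving that skew part into the interconnection: set
\[
R(z):=\mathrm{Sym}\!\Bigl(-\frac{\partial F_{\mathrm{diss}}}{\partial e}\Bigr),\qquad J(z):=\frac{\partial F_{\mathrm{con}}}{\partial e}+\mathrm{Skew}\!\Bigl(\frac{\partial F_{\mathrm{diss}}}{\partial e}\Bigr).
\]
Then $J^\top=-J$, $R^\top=R\succeq0$, and
\[
J-R=\frac{\partial F_{\mathrm{con}}}{\partial e}+\frac{\partial F_{\mathrm{diss}}}{\partial e}=\frac{\partial F}{\partial e}.
\]
Multiplying on the right by $G(z)$ and using the chain rule gives $\partial F/\partial z=(J-R)G$, as claimed. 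If $R$ is meant to be a genuinely symmetric PSD matrix, this reassignment is necessary. It is harmless here, because the lemma only asserts existence of some pair $(J,R)$, and the signed-incidence structure of $J$ is handled separately via Lemmas~\ref{lem:orbit} and \ref{lem:oddness}.
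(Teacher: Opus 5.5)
Your proposal is correct and follows essentially the paper's argument. You linearize the incremental conditions to get $\mathrm{Sym}(K_{\mathrm{con}})=0$ and $\mathrm{Sym}(K_{\mathrm{diss}})\preceq0$ for $K_\bullet=\tfrac{\partial F_\bullet}{\partial z}(z)\,G(z)^{-1}$, then absorb $\mathrm{Skew}(K_{\mathrm{diss}})$ into $J$, which gives exactly the paper's $J$ and $R$. The only difference is that you change to $e$-coordinates via the inverse function theorem, whereas the paper stays in $z$, perturbs $z_1=z+\varepsilon\,\delta z$, and uses invertibility of $G(z)$ so that $\delta e=G(z)\delta z$ sweeps all directions; this avoids needing local invertibility of $\nabla H$ or differentiability of its inverse.
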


\begin{proof}
Fix $z$ and a direction $\delta z$. Set $z_1=z+\varepsilon\,\delta z$, $z_2=z$ and define
\begin{equation}
\phi_{\mathrm{con}}(\varepsilon)
:= (e_1-e_2)^\top\!\bigl(F_{\mathrm{con}}(z_1)-F_{\mathrm{con}}(z_2)\bigr),\quad
\phi_{\mathrm{diss}}(\varepsilon)
:= (e_1-e_2)^\top\!\bigl(F_{\mathrm{diss}}(z_1)-F_{\mathrm{diss}}(z_2)\bigr).
\end{equation}
By the incremental energy conditions, $\phi_{\mathrm{con}}(\varepsilon)\equiv0$ and $\phi_{\mathrm{diss}}(\varepsilon)\le 0$ for small $\varepsilon$.
Differentiability of $\nabla H$ gives
\begin{equation}
e_1-e_2=\varepsilon\,G(z)\,\delta z + r_e(\varepsilon),\qquad \|r_e(\varepsilon)\|=o(\varepsilon),
\end{equation}
and differentiability of $F_{\mathrm{con}}$ and $F_{\mathrm{diss}}$ yields
\begin{equation}
F_{\mathrm{con}}(z_1)-F_{\mathrm{con}}(z_2)=\varepsilon\,A_{\mathrm{con}}(z)\,\delta z + r_{\mathrm{con}}(\varepsilon),\quad
F_{\mathrm{diss}}(z_1)-F_{\mathrm{diss}}(z_2)=\varepsilon\,A_{\mathrm{diss}}(z)\,\delta z + r_{\mathrm{diss}}(\varepsilon),
\end{equation}
with $A_{\mathrm{con}}(z)=\frac{\partial F_{\mathrm{con}}}{\partial z}(z)$, $A_{\mathrm{diss}}(z)=\frac{\partial F_{\mathrm{diss}}}{\partial z}(z)$, and
$\|r_{\mathrm{con}}(\varepsilon)\|,\|r_{\mathrm{diss}}(\varepsilon)\|=o(\varepsilon)$.
Let $\delta e:=G(z)\delta z$. Since $G(z)$ is nonsingular, $\delta e$ ranges over all directions.

Substituting into $\phi_{\mathrm{con}}(\varepsilon)$ and $\phi_{\mathrm{diss}}(\varepsilon)$ gives
\begin{align}
\phi_{\mathrm{con}}(\varepsilon)
&=\varepsilon^2\,\delta e^\top\!\Bigl(A_{\mathrm{con}}(z)\,G(z)^{-1}\Bigr)\delta e + o(\varepsilon^2),\\
\phi_{\mathrm{diss}}(\varepsilon)
&=\varepsilon^2\,\delta e^\top\!\Bigl(A_{\mathrm{diss}}(z)\,G(z)^{-1}\Bigr)\delta e + o(\varepsilon^2).
\end{align}
Dividing by $\varepsilon^2$ and letting $\varepsilon\to0$ yields, for all $\delta e$,
\begin{equation}
\delta e^\top K_{\mathrm{con}}(z)\,\delta e=0,\qquad
\delta e^\top K_{\mathrm{diss}}(z)\,\delta e\le 0,
\end{equation}
where we define
$K_{\mathrm{con}}(z):=A_{\mathrm{con}}(z)G(z)^{-1}$ and $K_{\mathrm{diss}}(z):=A_{\mathrm{diss}}(z)G(z)^{-1}$.
Therefore $\mathrm{Sym}(K_{\mathrm{con}}(z))=0$ and $\mathrm{Sym}(K_{\mathrm{diss}}(z))\preceq 0$.
Set
\begin{equation}
R(z):=-\,\mathrm{Sym}\!\bigl(K_{\mathrm{diss}}(z)\bigr)\succeq0,
\qquad
J(z):=K_{\mathrm{con}}(z)+\mathrm{Skew}\!\bigl(K_{\mathrm{diss}}(z)\bigr),
\end{equation}
so that $J(z)^\top=-J(z)$ and $K_{\mathrm{con}}(z)+K_{\mathrm{diss}}(z)=J(z)-R(z)$.
Finally,
\[
\frac{\partial F}{\partial z}(z)
=A_{\mathrm{con}}(z)+A_{\mathrm{diss}}(z)
=\bigl(K_{\mathrm{con}}(z)+K_{\mathrm{diss}}(z)\bigr)\,G(z)
=\bigl(J(z)-R(z)\bigr)\,G(z),
\]
as claimed.
\end{proof}

% \emph{Note.} We use $o(\varepsilon)$ for terms whose norm is $o(\varepsilon)$ as $\varepsilon\to0$.
% ; the identity $x^\top K x=x^\top\,\mathrm{Sym}(K)\,x$ is used to discard skew parts in the quadratic forms.
% \end{proof}

\subsection{Incidence wiring of the conservative interconnection}
\label{app:3}

\begin{lemma}[Incidence wiring of the conservative interconnection]
\label{lem:JisDEC}
Assume \textnormal{(L)}, \textnormal{(P)}, and \textnormal{(O)}.
Let $K:C^{k}\oplus C^{k+1}\to C^{k}\oplus C^{k+1}$ be linear, interface-local, and permutation-equivariant within each degree/type class, and assume $K^\top=-K$ and orientation covariance.
Then, up to cochain ordering and an orientation gauge, for each adjacent pair $(k,k{+}1)$ the off-diagonal blocks have signed-incidence wiring and can be written as
\[
K_{k,k+1}=-D_k^\top \,C_k,\quad K_{k+1,k}=C_k\, D_k,
\]
for some diagonal operator $C_k$ with positive diagonal entries.
\end{lemma}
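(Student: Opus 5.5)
We treat $K$ blockwise and apply Lemma~\ref{lem:orbit} and Lemma~\ref{lem:oddness} row by row. Write
\[
K=\begin{pmatrix} K_{k,k} & K_{k,k+1}\\ K_{k+1,k} & K_{k+1,k+1}\end{pmatrix}.
\]
The first block row is a linear, interface-local, permutation-equivariant map $C^{k}\oplus C^{k+1}\to C^k$; we regard it as the map $T$ of Lemma~\ref{lem:orbit} with $x_{k-1}=0$, since only the degrees $k,k+1$ are present in the state. Lemma~\ref{lem:oddness} then gives $K_{k,k}=0$ and $K_{k,k+1}=-\beta D_k^\top$. The second block row is treated the same way with output degree $k{+}1$: its face input is $x_k$ and there is no coface input inside the state. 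This gives $K_{k+1,k+1}=0$ and $K_{k+1,k}=\alpha D_k$. Here $\alpha,\beta$ are scalars that are uniform across labels because of (P).

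Next we impose skew-symmetry. The condition $K^\top=-K$ requires $K_{k+1,k}=-K_{k,k+1}^\top$, that is, $\alpha D_k=\beta D_k$. Since $D_k\neq0$ for any nontrivial complex, this forces $\alpha=\beta=:c$. The diagonal-block condition $K_{k,k}^\top=-K_{k,k}$ already holds because those blocks vanish.

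It remains to show that the common coefficient can be taken positive. If $c>0$ we set $C_k=cI$. If $c<0$ we use the orientation gauge. Flipping only degree $k{+}1$ ($\rho_{k+1}=-I$) turns $D_k$ into $\rho_{k+1}D_k=-D_k$, which replaces $c$ by $-c$. By (O) this is a relabeling of conventions and not a change of the physics, so "up to an orientation gauge" we may assume $c>0$.

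To get a genuinely diagonal $C_k$ rather than a scalar multiple of the identity, we note that the coefficients in Lemma~\ref{lem:orbit} only have to be uniform within each orbit class under (P). Heterogeneous fixed geometry or material labels therefore allow a per-cell weight $c_i$ on each $(k{+}1)$-cell. Skew-symmetry then reads $-(D_k^\top C)^\top = C D_k$, which holds exactly when the same diagonal $C_k$ multiplies $D_k$ from the left on both blocks. The gauge flip is applied cellwise, using $\rho_{k+1}=\mathrm{diag}(\mathrm{sgn}\,c_i)$, which makes every entry positive. Any $c_i=0$ would be a disconnected cell, and we exclude this case or absorb it into the non-strict setting.

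The step we expect to be the main obstacle is this last one. We must justify that skew-symmetry combined with locality forces the left-weight on $D_k$ and the right-weight on $D_k^\top$ to be one and the same diagonal operator indexed by the $(k{+}1)$-cells. A priori, locality permits an arbitrary entrywise weighting $W\odot D_k$ supported on the incidence pattern. Our plan is to invoke (O) at a single $k$-cell and at a single $(k{+}1)$-cell separately, using the cellwise flips $\rho$. The aim is to show that an admissible weighting factors as $C_k D_k \Lambda$ with $\Lambda,C_k$ diagonal, and then to absorb $\Lambda$ into the cochain ordering and scaling convention. We flag this step as the place where the argument may need the additional uniformity supplied by (P).
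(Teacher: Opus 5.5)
Your core argument is the paper's proof, written out more explicitly. You apply Lemma~\ref{lem:orbit} and Lemma~\ref{lem:oddness} to each block row, use $K^\top=-K$ to force $\alpha=\beta=c$, and use an orientation gauge to make $c>0$. At that point you are already done, because $C_k=cI$ with $c>0$ is a diagonal operator with positive entries, which is all the statement requires.

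Your final paragraph is therefore unnecessary, and the route you sketch there would not work. Once you leave the uniform coefficients of Lemma~\ref{lem:orbit}, locality permits any block $W\odot D_k$ with orientation-even $W$. Every such block is automatically orientation covariant, since $\rho_{k+1}(W\odot D_k)\rho_k=W\odot(\rho_{k+1}D_k\rho_k)$ for all diagonal sign matrices. Cellwise flips therefore impose no constraint on $W$ and cannot force $W_{ij}=c_i\lambda_j$, let alone $W_{ij}=c_i$. Moreover, absorbing a positive $\Lambda\neq I$ would be a rescaling of $C^k$, which is not among the freedoms the statement allows (cochain ordering and orientation gauge). A genuinely non-scalar $C_k$ needs an extra hypothesis, for instance that the coupling weight is carried by the $(k{+}1)$-cell and shared by all of its faces. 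The paper's proof glosses over the same point when it passes from ``per-incidence weights'' to a diagonal $C_k$.

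Two smaller remarks. First, your $c=0$ caveat is a defect of the statement itself: $K=0$ satisfies every hypothesis but admits no positive $C_k$. Second, the sign of $c$ can be changed only by flipping the sign convention of $z_{k+1}$ against a fixed reference $D_k$. Under a consistent re-expression of both, $\rho_{k+1}(cD_k)=c\,(\rho_{k+1}D_k)$, so $c$ is invariant. The paper uses the same convention, but this is a change of variables rather than a consequence of (O).
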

\begin{proof}
By Lemma~\ref{lem:orbit}, any degree-$k$ output can involve only
$\{\mathrm{Id},\,A_k,\,D_{k-1},\,-D_k^\top\}$.
By Lemma~\ref{lem:oddness}, orientation covariance rules out the same-degree terms $\mathrm{Id}$ and $A_k$, so only cross-degree incidence terms remain.
Skew-symmetry forces the $(k,k{+}1)$ and $(k{+}1,k)$ blocks to be negatives of each other, hence they share the same per-incidence weights.
By locality and permutation equivariance, these weights can depend only on the local incidence type and must be shared across relabelings within the degree/type class.
Finally, by an orientation gauge choice one may take the diagonal entries to be positive.
\end{proof}

\subsection{Proof of Theorem~\ref{thm:reduction}}
\label{app:4}
Fix a point $z$ where the Jacobian exists and work in the local energy coordinates $e=\nabla H(z)$ with
$G(z)=\frac{\partial e}{\partial z}(z)\succ0$.
By Lemma~\ref{lem:passivity},
\[
\frac{\partial F}{\partial z}(z)=\bigl(J(z)-R(z)\bigr)G(z),
\qquad
J(z)^\top=-J(z),\ \ R(z)\succeq0.
\]
It remains to identify the conservative wiring implied by (L), (P), and (O).
Applying Lemma~\ref{lem:JisDEC} pointwise to the linear map $K:=J(z)$ yields, up to cochain ordering and an orientation gauge, the signed-incidence block form
\[
J_{k,k+1}(z)=-D_k^\top C_k(z),\qquad
J_{k+1,k}(z)=C_k(z)D_k,
\]
where we write $C_k(z)$ to emphasize that the diagonal gain may vary with $z$ through the dependence of $J(z)$ on the state.
This is exactly the signed-incidence wiring statement in Theorem~\ref{thm:reduction}.
\qedhere

\paragraph{Incidence normalization (proof for Corollary~\ref{cor:constant_J}).}
If the incidence gain is state-independent for the pair $(k,k{+}1)$ so that $C_k(z)\equiv C_k$,
define the constant rescaling $S=\mathrm{diag}(I,C_k)$ on $C^k\oplus C^{k+1}$ and the rescaled field
$\tilde F(\tilde z)=S^{-1}F(S\tilde z)$ with $z=S\tilde z$.
Then the induced interconnection has pure signed-incidence blocks
$\tilde J_{k,k+1}=-D_k^\top$ and $\tilde J_{k+1,k}=D_k$,
with the metric-side factors transforming as $\tilde G(\tilde z)=S^\top G(S\tilde z)S$ and
$\tilde R(\tilde z)=S^{-1}R(S\tilde z)S^{-\top}$.

\textbf{Remark.}
Theorem~\ref{thm:reduction} allows state-dependent conservative coupling through the diagonal gains $C_k(z)$ while keeping the signed-incidence wiring fixed.
If $C_k(z)\equiv C_k$ is state-independent (e.g., geometry or material only), one can pass to an equivalent incidence-normalized representative by a fixed rescaling as in Corollary~\ref{cor:constant_J}.
If $C_k$ depends on $z$, such a fixed rescaling cannot remove the dependence in general, and the interconnection remains state-dependent.

\textbf{Generality and Sharpness.}
The reduction is architecture-independent and applies to any vector field $F$ satisfying (L), (P), (O), and (E). At any $z$ where the Jacobian exists it yields the pointwise factorization $\partial F/\partial z=(J(z)-R(z))\,G(z)$, where the off-diagonal blocks of $J(z)$ have signed-incidence wiring. Under state-independent incidence gains, this further admits a state-independent interconnection and, after incidence normalization, a constant purely incidence-based wiring.

The hypotheses are also essentially minimal. Dropping (O) allows same-degree, orientation-insensitive terms such as the identity $I$ and unsigned adjacency $A_k$. Dropping (P) permits interface-wise heterogeneity within a degree/type class, breaking equivariance and the clean incidence assembly. Dropping (L) admits nonlocal couplings beyond the incidence neighborhood. Relaxing (E) forfeits the skew--dissipative $(J{-}R)G$ split. Each assumption therefore excludes a concrete failure mode, and together they separate what is fixed by topology from what remains learnable on the metric side. An empirical ablation study is provided in Appendix~\ref{app:ablation}.

\paragraph{Linear-Media Corollary.}
If, in addition, $H(z)=\tfrac12 z^\top M^{-1}z$ with constant $M\succ0$ and $F_{\mathrm{diss}}(z)=-R\,e$ with state-independent $R^\top=R\succeq0$, then $G\equiv M^{-1}$ and the dynamics reduce to
\begin{equation}
\dot z \;=\; (J-R)\,M^{-1} z,\quad J^\top=-J,\ \ R^\top=R\succeq0,
\end{equation}
with $(M,R)$ carrying the metric and dissipation.

% \paragraph{MeshFT-Net for State-Dependent Interconnection}
% \label{app:state-dependent}
% The reduction in Theorem~\ref{thm:reduction} permits state-dependent conservative coupling through $J(z)$ while preserving the same signed-incidence wiring.
% Concretely, at any $z$ where the Jacobian exists, $\frac{\partial F}{\partial z}(z)=(J(z)-R(z))G(z)$ with $J(z)^\top=-J(z)$, and up to cochain ordering and an orientation gauge the off-diagonal blocks take the form
% $J_{k,k+1}(z)=-D_k^\top C_k(z)$ and $J_{k+1,k}(z)=C_k(z)D_k$ with $C_k(z)=\mathrm{diag}(c_k(z))\succ0$.
% The diagonal entries can depend on the state and on geometry or material, and must be local, permutation-equivariant within each degree/type class, and orientation-even.

\section{Electromagnetism Example}\label{app:em}
As an intuitive example, we revisit electromagnetism though the lens of Theorem~\ref{thm:reduction}.
\subsection{Recovering Source‐Free Maxwell from Topology‐Fixed Incidence Structure}
\label{app:maxwell-recovery}
On an oriented mesh with edge–face incidence $D_1$, the constitutive maps are the (degree–wise) SPD Hodge stars
\begin{equation}
H=\star_{\mu^{-1}}\,B,\qquad E=\star_{\varepsilon^{-1}}\,D,
\end{equation}
where $B$ is the magnetic flux density and $H$ the magnetic field intensity. Also, $D$ is the electric flux density and $E$ the electric field. The operator $\star_{\kappa}$ denotes the (discrete) Hodge star composed with the material tensor $\kappa$ (e.g., $\mu^{-1},\ \varepsilon^{-1}$).

So, stacking $z=\big(B,D\big)$ and $e=\big(H,E\big)$, one has
\begin{equation}
e \;=\; \underbrace{\begin{pmatrix}\star_{\mu^{-1}} & 0\\[2pt] 0 & \star_{\varepsilon^{-1}}\end{pmatrix}}_{G_\theta^{-1}}\; z.
\end{equation}
The conservative Maxwell update (no sources) is
\begin{equation}
\dot B \;=\; -\,D_1\,E \quad\text{(Faraday)}, \qquad
\dot D \;=\; \ \,D_1^\top H \quad\text{(Amp$\grave{\text{e}}$re)},
\end{equation}
which compactly reads
\begin{equation}
\dot z \;=\; J\,e \;=\;
\underbrace{\begin{pmatrix}0 & -D_1^\top\\[2pt] D_1 & 0\end{pmatrix}}_{J}\;
\underbrace{G_\theta^{-1}}_{\text{SPD}}\; z.
\end{equation}
i.e., the special case of our update $\dot z=(J-R_\theta)G_\theta^{-1}z$ with $R_\theta\!\equiv\!0$.
Here $J$ is fixed entirely by signed incidences (mesh topology and orientation), while material/geometry enter only through the learned SPD metric blocks $G_\theta^{-1}$ (the constitutive law $\varepsilon^{-1},\mu^{-1}$).

\paragraph{Maxwell Structure as a Consequence.}
Under the requirements used in the main text, the MeshFT thus recovers the source–free Maxwell update on arbitrary oriented meshes without explicitly postulating the Maxwell equations: orientation–aware incidences determine the conservative wiring $J$, and the \emph{only} trainable physical freedom is the degree–wise SPD Hodge blocks in $G_\theta^{-1}$ (the material law). Unlike residual–based designs, we do not enforce a PDE loss. The structure rules out non-physical couplings by construction and focuses learning on the constitutive mapping.

\paragraph{Note on Learning Freedom.}
The structural constraint above \emph{does not} fix any particular formula for the discrete Hodge stars. It only requires the degree-wise blocks of $G_\theta^{-1}$ to be SPD (and, if desired, local and permutation-equivariant). Thus the incidence wiring $J$ is fixed by topology, whereas all geometry/material dependence resides in $G_\theta^{-1}$—precisely the learnable degrees of freedom (e.g., metric structure and spatially varying $\varepsilon^{-1}$, $\mu^{-1}$) that can be identified from data without violating the topological structure.

\subsection{Degeneracy from Topology and Elimination of Spurious Modes}
\label{app:degeneracy}
Because DEC encodes the chain–complex identity $D_{k+1}D_k=0$, the interconnection $J$ above is \emph{rank–deficient} (degenerate): it has a nontrivial kernel aligned with topological constraints. Two immediate invariants on closed, source–free domains are
\begin{equation}
D_2\,\dot B \;=\; -D_2 D_1\,E \;=\; 0,
\end{equation}
\begin{equation}
D_0^{\!\top}\dot D \;=\; D_0^{\!\top} D_1^{\!\top} H \;=\; (D_1 D_0)^{\!\top} H \;=\; 0,
\end{equation}
expressing discrete $\nabla\!\cdot B=0$ and the charge–free $\nabla\!\cdot D=0$, respectively. These invariants do not depend on the metric maps $G_\theta^{-1}$ and hold to machine precision, thereby suppressing non-physical (spurious) modes such as fake magnetic monopoles or artificial charge accumulation. Crucially, these invariants are not a post hoc regularizer but a direct consequence of the physical constraints structurally enforced by MeshFT. This topological guarantee is a major difference from methods that learn physical dynamics on meshes, which typically cannot preserve such invariants exactly.

\subsection{Relation to Symplectic–ODE (Nondegenerate) Formulations}
\label{app:symplectic}
Symplectic ODE integrators preserve a global symplectic structure on a finite-dimensional phase space, e.g., in canonical coordinates $(q,p)$.
However, this by itself does not encode mesh incidence identities such as $D_{k+1}D_k=0$.
As a result, Gauss-type constraints (e.g., discrete divergence constraints) are not automatically preserved unless the discretization is explicitly constraint-compatible, and spurious modes can appear over long rollouts.

By contrast, fixing the interconnection via $J$ builds the topology into the dynamics. $J$ is assembled from incidences, so the relations like $D_{k+1}D_k=0$ are hard-wired, and the corresponding constraint subspaces are preserved mechanically. This cleanly separates topology/interconnection ($J$) from metric/constitutive content ($G$ and $R$), which can then carry geometry/material information.

\section{Details of Experiments and Additional Results}
\label{app:exp}
\paragraph{Link to Theorem~\ref{thm:reduction} (notation).}
In this section, we use the standard FEM symbols \(M\) (0-form Hodge on nodes) and \(W\) (1-form Hodge on edges) as the concrete \emph{metric blocks} of Theorem~\ref{thm:reduction}. Throughout the experiments given in this paper, we assumed \(M\) and \(W\) are \emph{state-independent}.
With canonical packaging \(z=(q,p)\in C^k\oplus C^k\), we define the canonical momentum as \(p:=M\,\dot q\) (so that the kinetic energy is \(\tfrac12\,\dot q^\top M\,\dot q=\tfrac12\,p^\top M^{-1}p\)). Then
\begin{equation}
H(z)=\tfrac12\,q^\top K\,q+\tfrac12\,p^\top M^{-1}p,\qquad K=D_0^\top W D_0,
\end{equation}
so
\begin{equation}
e=\nabla H(z)=\begin{pmatrix}Kq\\ M^{-1}p\end{pmatrix}=G^{-1}z,\qquad
G^{-1}=\begin{pmatrix}K&0\\[2pt]0&M^{-1}\end{pmatrix}.
\end{equation}
The conservative interconnection is the canonical symplectic matrix
\(
J=\bigl(\begin{smallmatrix}0&I\\[2pt]-I&0\end{smallmatrix}\bigr),
\)
and the dynamics is \(\dot z=(J-R(z))\,e\) with \(R(z)^\top=R(z)\succeq0\).
Equivalently, under the mixed packaging \(z=(x^{(k)},x^{(k+1)})\in C^k\oplus C^{k+1}\), \(J=\bigl(\begin{smallmatrix}0&-D_0^\top\\ D_0&0\end{smallmatrix}\bigr)\) and absorbing \(D_0\) into \(K=D_0^\top W D_0\) yields the canonical form above.

\subsection{Analytic Plane-Wave on Periodic \texorpdfstring{$2$}{2}D Meshes}
\label{app:plane-wave}
\textbf{Meshes and Geometry.}
On a torus, we use (i) a periodic axis-aligned grid and (ii) a Delaunay triangulation.
We report results on regular grid or Delaunay mesh at a nominal $32\times 32$ resolution.
Node dual areas $V_0$ come from cell/barycentric areas and edge weights $V_1^{-1}$ use nonnegative cotangent weights computed with periodic minimum–image distances, with small quantile floors to avoid degenerate areas/edges. Edge features are $(\Delta x,\Delta y,\lVert e\rVert)$ and node features are $(x,y,V_0)$.

\textbf{Data Generation.}
Each training pair $(z_t,z_{t+\Delta t})$ uses the canonical state $z=(q,p)$ with a single traveling wave
\[
q(t)=a\,\sin\!\big(k^\top x-\omega t+\phi\big),\qquad 
\omega=c\lVert k\rVert,\qquad 
p=V_0\,\dot q(t),
\]
where $x$ are nodal coordinates on the periodic box $[0,L)^2$ and $V_0$ is the node volumes. For each sample we draw a wavenumber $k=\tfrac{2\pi}{L}(k_x,k_y)$ with $k_x=s_x U_x$, $U_x\sim\mathrm{Unif}\{1,\cdots,4\}$, $s_x\in\{\pm1\}$ equiprobable, and $k_y=s_y U_y$, $U_y\sim\mathrm{Unif}\{0,\cdots,4\}$, $s_y\in\{\pm1\}$. The zero mode $(k_x,k_y)=(0,0)$ is excluded. Independently, we sample the phase $\phi\sim\mathrm{Unif}[0,2\pi)$, amplitude $a\sim\mathrm{Unif}[0.5,1.5]$, and start time $t_0\sim\mathrm{Unif}[0,2\pi)$. We then set
\[
z_t=\big(q(t_0),\,V_0\,\dot q(t_0)\big),\qquad
z_{t+\Delta t}=\big(q(t_0{+}\Delta t),\,V_0\,\dot q(t_0{+}\Delta t)\big).
\]
Randomness is controlled via synchronized seeds. Splits and batches are identical across models.

\textbf{Hodge Parametrization.}
MeshFT-Net fixes the interconnection \(J\) via the signed incidences \(\{D_k\}\) and \emph{learns only the metric}. A small geometry–conditioned Hodge maps node/edge features to positive stars:
\[
M_i \;=\; V_{0,i}\,\sigma\!\bigl(\phi_{\text{node}}(x_i,y_i,V_{0,i})\bigr),\qquad
W_e \;=\; V_{1,e}^{-1}\,\sigma\!\bigl(\phi_{\text{edge}}(\Delta x_e,\Delta y_e,|e|)\bigr),
\]
with \(\sigma=\mathrm{softplus}\).
Here, $\phi_{\text{node}}$ and $\phi_{\text{edge}}$ are \emph{state–independent} MLPs (2 layers, width 64) that take standardized geometry as input and output a scalar log–scale, and applying $\sigma$ ensures $M_i>0$ and $W_e>0$. This geometry-conditioned Hodge is used throughout the experiments below, unless otherwise stated.

\textbf{Metrics and Hyperparameters.}
For evaluation, a shared theory Hodge $(M,W)=(V_0,\,c^2 V_1^{-1})$ defines the physical norm used for relative error and for energy-drift over open-loop rollouts ($\Delta t{=}0.002$, $T{=}200$). Training runs for $10$ epochs with a mini-batch size of $8$ on $2000$ training pairs and $256$ validation pairs. The wave speed in the analytic solution is set to $1.0$. 
MGN and MGN-HP use hidden width \(64\) with \(4\) message-passing layers. The network for predicting Hamiltonian in MGN-HP is configured identically (hidden \(64\), \(4\) layers). The HNN likewise uses hidden width \(64\) with \(4\) layers.
The other exact hyperparameters are provided in the released code.

\textbf{Additional Results.}
For completeness, in addition to the regular-grid results reported in the main text (Sec.~\ref{sec:analytic}), this appendix presents the same analytic plane-wave benchmark on periodic Delaunay triangulations under identical training protocols. The Delaunay results corroborate our main findings. MeshFT-Net maintains the lowest error and near-zero drift, while the baselines exhibit substantially larger drift with detailed numbers and visualizations provided in Table~\ref{tab:wave-main-app} and Fig.~\ref{fig:data-sweep-delaunay}.

% \begin{table}[t]
% \caption{One-step MSE and energy drift by models trained on sufficient amount of training data for the regular grid data. Lower is better and drift closer to $0$ is better. All numbers are computed on held-out validation data. Mean $\pm$ s.d. over $3$ seeds are reported.}
% \label{tab:wave-main-app}
% \begin{center}
% \begin{tabular}{llll}
% \toprule
% \multicolumn{1}{l}{\bf Model} & \multicolumn{1}{l}{\bf One-step MSE} &  \multicolumn{1}{l}{\bf Energy drift $\Delta E/E_0$} \\ \midrule
% MeshFT-Net & $\textbf{1.4} \times \textbf{10}^{-\textbf{7}}\pm \textbf{6.1} \times \textbf{10}^{-\textbf{9}}$ & $\textbf{7.2} \times \textbf{10}^{-\textbf{3}}\pm \textbf{1.4} \times \textbf{10}^{-\textbf{3}}$ \ \\
% MGN        & $1.2 \times 10^{-6}\pm 2.5 \times 10^{-7}$ & $12.0 \pm 12.6$ \ \\
% MGN-HP     & $5.7 \times 10^{-4}\pm 1.8 \times 10^{-5}$ & $7.3 \pm 3.1$ \ \\
% HNN        & $4.3 \times 10^{-7}\pm 4.7 \times 10^{-7}$ & $0.63\pm 0.25$\ \\
% \bottomrule
% \end{tabular}
% \end{center}
% \end{table}

\begin{table}[t]
\caption{One-step MSE and energy drift by models trained on sufficient amount of training data for the regular grid data. Lower is better and drift closer to $0$ is better. All numbers are computed on held-out validation data. Mean over $3$ seeds are reported.}
\label{tab:wave-main-app}
\begin{center}
\small
\begin{tabular}{lcc}
\toprule
\multicolumn{1}{l}{\bf Model} & \multicolumn{1}{l}{\bf One-step MSE} &  \multicolumn{1}{l}{\bf Energy drift } \\ \midrule
MeshFT-Net & $\textbf{1.4} \times \textbf{10}^{-\textbf{7}}$ & $\textbf{7.2} \times \textbf{10}^{-\textbf{3}}$ \ \\
MGN        & $1.2 \times 10^{-6}$ & $12.0$ \ \\
MGN-HP     & $5.7 \times 10^{-4}$ & $7.3$ \ \\
HNN        & $4.3 \times 10^{-7}$ & $6.3 \times 10^{-1}$\ \\
\bottomrule
\end{tabular}
\end{center}
\end{table}

\begin{figure}[t]
\centering
\includegraphics[width=0.8\linewidth]{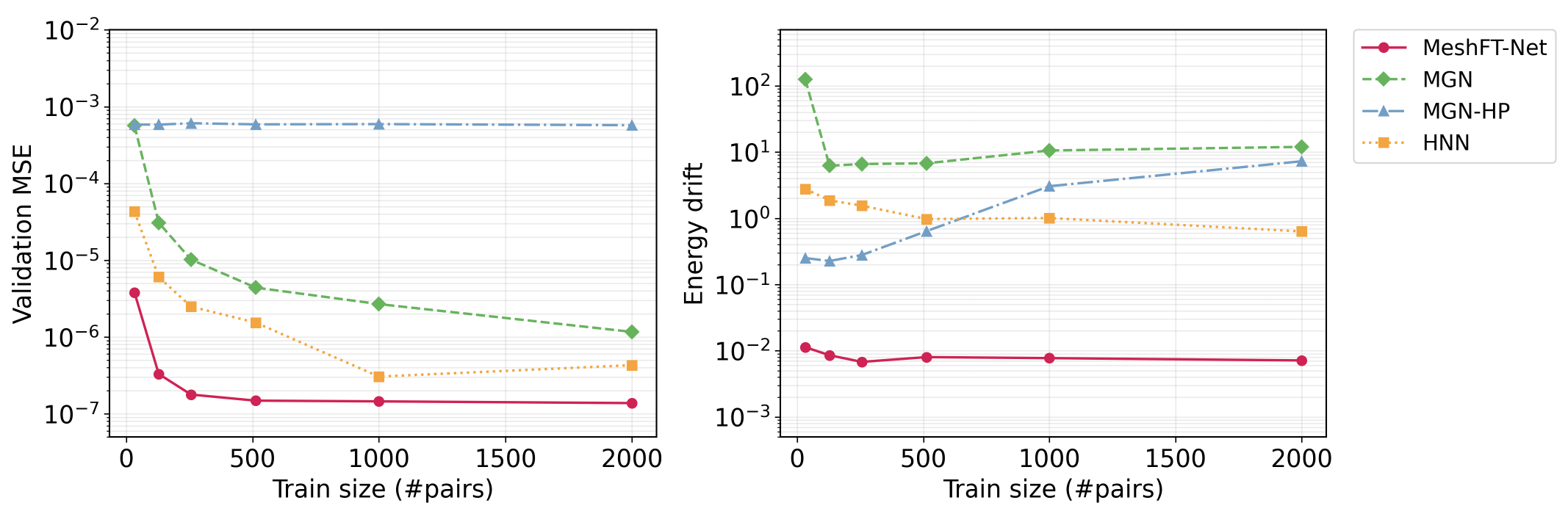}
\caption{Relationship between the size of training dataset and one-step MSE (left) and rollout energy drift (right) for random delaunay mesh.}
\label{fig:data-sweep-delaunay}
\end{figure}

\subsection{Dissipative Benchmark}
\label{app:dissipative}
\textbf{Data Generation.}
We benchmark \emph{damped} plane-waves on a periodic $2$D grid, using the canonical packing \(x=(q,p)\). Samples are generated as
\[
q(x,t)=A\,e^{-\gamma t}\sin\!\bigl(k^\top x-\omega t+\phi\bigr),\qquad \gamma\sim\mathrm{Unif}\,[0.01,0.1],
\]
and we form one-step pairs \((x_t,x_{t+\Delta t})\) on a periodic \(32\times32\) grid over \([0,1)^2\), with \(\Delta t=0.002\), wave speed \(c=1.0\), and integer wavenumbers up to \(6\) (excluding the zero mode). The other parameters are same as the analytic plane-wave benchmark shown in Appendix~\ref{app:plane-wave}.

\textbf{Models and Integration.}
Time stepping mirrors the conservative structure: MeshFT-Net uses KDK (with \emph{exact half–step damping}, i.e., Strang split), HNN uses the same KDK-with-damping scheme, and MGN/MGN–HP are wrapped in a symmetric KDK integrator applied to their learned vector field \(v(x)\).
For MeshFT-Net and HNN we use nodewise Rayleigh dissipation:
\(\dot p = -Kq - R\,p\) with rates \(r_i\ge0\) (equivalently \(R=\mathrm{diag}(r)\succeq0\)).
Concretely, we use \emph{GammaInferNet}—a small per-node damping MLP (message passing; width 64, 2 layers)—that ingests node/edge features, outputs raw rates \(\tilde r\in\mathbb{R}\), and sets \(r=\mathrm{softplus}(\tilde r)\ge0\).
In continuous time the momentum channel obeys
$
\dot p \;=\; -Kq \;-\; R\,p,
$
and in our Strang-split KDK step we apply the exact exponential decay over a step \(\Delta t\):
$
p \;\leftarrow\; e^{-\frac{\Delta t}{2} R}\,p
\quad\text{(before and after the conservative pass)}.
$
MGN/MGN–HP \emph{do not} receive an explicit damping operator and must infer dissipation from data.

\textbf{Metrics and Hyperparameters.}
We report (i) one–step MSE and (ii) normalized energy drift over open-loop rollouts ($T=200$) comparing with the true trajectory including the dissipation. Training runs for $20$ epochs with a mini-batch size of $16$ on $4000$ training pairs and $256$ validation pairs.
MGN and MGN-HP use hidden width \(64\) with \(4\) message-passing layers. The network for predicting Hamiltonian in MGN-HP is configured identically (hidden \(64\), \(4\) layers). The HNN likewise uses hidden width \(64\) with \(4\) layers.
The other exact hyperparameters are provided in the released code.

\subsection{Physically-Consistency Benchmark}
\label{app:physics-consistency}
\textbf{Meshes and Geometry.}
We use periodic axis–aligned grids on the torus $[0,1)^2$. The grid is $32{\times}32$, which is same as the analytic plane-wave benchmark shown in Appendix~\ref{app:plane-wave}. 

\textbf{Data Generation.}
Training pairs $(z_t,z_{t+\Delta t})$ come from analytic plane-waves
$q(t)=a\sin(k^\top x-\omega t+\phi)$, with $\omega=c\lVert k\rVert$ and $c=1.0$.
Per sample we draw integer wavenumbers up to $6$ (exclude the zero mode), a phase $\phi\!\sim\!\mathrm{Unif}[0,2\pi)$, amplitude $a\!\sim\!\mathrm{Unif}[0.5,1.5]$, and start time $t_0\!\sim\!\mathrm{Unif}[0,2\pi)$.

\textbf{Metrics and Hyperparameters.}
All physics diagnostics are computed with a shared \emph{theory} Hodge, $(M,W)=(V_0,\,c^2 V_1^{-1})$, to ensure fair comparison, independent of a model’s internal parameterization. 
% We report:
% (i) \emph{dispersion} by projecting $q(\cdot,t)$ onto $e^{ik\cdot x}$ and estimating $\hat\omega$ from weighted phase increments (then $\hat c=\hat\omega/\lVert k\rVert$);
% (ii) \emph{canonical consistency} $p\approx V_0\dot q$ via central differences;
% (iii) \emph{PDE residual} $\lVert M\ddot q+Kq\rVert$ with $K=D_0^\top W D_0$;
% (iv) \emph{equipartition} (time-averaged kinetic vs.\ potential energy);
% (v) \emph{momentum conservation} (variation of $\sum_n p_n$).
%
The details of metrics for physical consistency are summarized in Table~\ref{tab:diagnostics}. Each hyperparameters are set as
$\Delta t=0.002$, $T=200$, epochs $=10$, batch $=16$, train data size $=4000$, and validation data size $=256$. The other exact hyperparameters are provided in the released code.

\begin{table}[t]
\centering
\setlength{\tabcolsep}{8pt}
\caption{Details of physics metrics (computed on closed-loop rollouts with the shared \emph{theory} Hodge $(M,W)=(V_0,\,c_{\text{speed}}^2 V_1^{-1})$). Frames are indexed $t=0,\dots,T$ (initial included).}
\begin{tabular}{@{}p{.18\linewidth} p{.77\linewidth}@{}}
\toprule
\textbf{Diagnostic} & \textbf{Physical meaning and exact computation} \\
\midrule
Dispersion &
Wave speed from the phase of $q$ at wavenumber $k$. Estimate $\hat\omega$ from weighted phase increments and set $\hat c=\hat\omega/\lVert k\rVert$.
Reported as relative absolute errors vs.\ ground truth. \\[1.2ex]

Canonical consistency &
Check $p\approx M\,\dot q$ using central differences (sum over interior times $t=1,\dots,T{-}1$):
$\displaystyle \dot q_t \approx \frac{q_{t+1}-q_{t-1}}{2\Delta t}$,\;
$p^{\text{mid}}_t=\tfrac12(p_{t+1}+p_{t-1})$.
Reported
\[
\frac{\sum_{t=1}^{T-1} \|\,p^{\text{mid}}_t - M\dot q_t\,\|_2^2}
{\sum_{t=1}^{T-1} \|\,M\dot q_t\,\|_2^2}.
\] \\[0.8ex]

PDE residual &
Discrete wave equation mismatch with $K=D_0^\top W D_0$ (again $t=1,\dots,T{-}1$):
$\displaystyle \ddot q_t \approx \frac{q_{t+1}-2q_t+q_{t-1}}{(\Delta t)^2}$,\;
$r_t=M\ddot q_t+K q_t$.
Reported 
\[
\frac{\sum_{t=1}^{T-1} \|r_t\|_2^2}
{\sum_{t=1}^{T-1} \big(\|M\ddot q_t\|_2^2+\|K q_t\|_2^2\big)}.
\] \\[0.8ex]

Equipartition &
Time-averaged kinetic/potential balance:
$\displaystyle T_t=\tfrac12\,p_t^\top M^{-1}p_t$,\;
$\displaystyle U_t=\tfrac12\,q_t^\top K q_t\ (= \tfrac12\,(B q_t)^\top W (B q_t))$,
$\displaystyle \langle T\rangle=\tfrac1{T+1}\sum_{t=0}^{T} T_t$,\;
$\displaystyle \langle U\rangle=\tfrac1{T+1}\sum_{t=0}^{T} U_t$,
and reported 
\[
\frac{|\langle T\rangle-\langle U\rangle|}{\langle T\rangle+\langle U\rangle}
\]. \\[0.8ex]

Momentum &
Normalized range of total momentum over time:
$m_t=\sum_{n=1}^{N} p_t[n]$.
Reported 
\[
\frac{\max_{0\le t\le T} m_t-\min_{0\le t\le T} m_t}
{\frac1{T+1}\sum_{t=0}^{T}\sum_{n=1}^{N} |p_t[n]|}.
\] \\
\bottomrule
\end{tabular}
\label{tab:diagnostics}
\end{table}

In addition to physics diagnostics, we evaluate model–PDE agreement at the \emph{vector-field} level and short-horizon behavior, using the same theory Hodge. (i) \emph{Vector-field alignment}: at a batch of states $z$ we compute the PDE field $v_{\text{PDE}}(z)=[M^{-1}p,\,-Kq]$ and the model field $v_{\text{model}}(z)$, and report the cosine similarity and the relative $L_2$ error $\|v_{\text{model}}-v_{\text{PDE}}\|/\|v_{\text{PDE}}\|$. (ii) \emph{Short-horizon rollout}: from the same initial states we roll out $T_{\text{short}}{=}16$ steps and report the final relative error. (iii) \emph{Amplitude/phase recovery}: for the predicted field at $t=t_0+T_{\text{short}}\Delta t$, fit $q(x)\approx a\sin(k\!\cdot\!x)+b\cos(k\!\cdot\!x)$ (2×2 least squares) and report amplitude error $|\hat A-A|/|A|$ with $\hat A=\sqrt{a^2+b^2}$, and phase error $|\hat\phi-\phi_{\mathrm{eff}}|$ in degrees with $\phi_{\mathrm{eff}}=\phi-\omega t$.

\subsection{OOD Generalization}
\label{app:ood}
\textbf{Data generation.}
We use periodic \(2\)D wave dynamics. Training and test pairs \((z_t,z_{t+\Delta t})\) are sampled from analytic plane-waves
\(q(x,t)=a\sin(k^\top x-\omega t+\phi)\) with \(\omega=c\,\lVert k\rVert\).
For each sample we draw integer wavenumbers up to \(3\) (excluding the zero mode). The other parameters are same as the analytic plane-wave benchmark shown in Appendix~\ref{app:plane-wave}.
Training uses one–step teacher forcing.
We generate three OOD settings by changing the test distribution relative to training:
(i) \textbf{frequency} (\(k_\mathrm{test}>k_\mathrm{train}\)),
(ii) \textbf{resolution} (coarse\(\rightarrow\)fine grid),
(iii) \textbf{wave speed} (\(c_{\text{test}}\neq c_{\text{train}}\)).

\textbf{Metrics and Hyperparameters.}
All long-horizon errors and energy drift are evaluated with a theory Hamiltonian: on grids we use \((M,W)=(V_0,\,c^2 I_e)\).
We report (i) one-step MSE on the test distribution and (ii) normalized energy drift over the trajectory.

Training is performed on a \(32{\times}32\) periodic grid with time step \(\Delta t_{\text{train}}=0.004\), and wave speed \(c_{\text{train}}=1.0\), using \(4000\) training pairs, batch size \(16\), and \(10\) epochs.
Testing uses \(\Delta t_{\text{test}}=0.004\), \(T=200\), and \(512\) test pairs. Scenario-specific shifts are: (a) \emph{frequency}—\(\texttt{test\_kmax}=6\), (b) \emph{resolution}—a finer \(64{\times}64\) grid, and (c) \emph{parameter}—wave speed \(c_{\text{test}}=1.4\).
MGN and MGN-HP use hidden width \(64\) with \(4\) message-passing layers. The network for predicting Hamiltonian in MGN-HP is configured identically (hidden \(64\), \(4\) layers). The HNN likewise uses hidden width \(64\) with \(4\) layers.
The other exact hyperparameters are provided in the released code.

\textbf{Additional Results.}
The full results are provided in Table~\ref{tab:ood-app}.

\begin{table}[t]
\centering
\small
\renewcommand{\arraystretch}{1.05}
\caption{OOD generalization across three scenarios. Lower is better. Bold indicates the best and underline the second best in each column. Mean $\pm$ s.d. over $3$ seeds are reported.}
\label{tab:ood-app}

\textbf{Frequency extrapolation ($k_\mathrm{test}>k_\mathrm{train}$)}\par
\setlength{\tabcolsep}{2pt}
\begin{tabular*}{\linewidth}{@{\hspace{0pt}\extracolsep{\fill}}llll@{}}
\toprule
\multicolumn{1}{l}{\bf Model} & {\bf One-step MSE} & {\bf TSMSE} & {\bf Drift} \\ \midrule
MeshFT-Net & \bm{$4.1 \times 10^{-6} \pm 2.6 \times 10^{-6}$} & \bm{$1.8 \times 10^{-1}\pm 1.1 \times 10^{-1}$} & \bm{$5.1 \times 10^{-3} \pm 1.2 \times 10^{-3}$} \\
MGN        & \underline{$2.3 \times 10^{-5} \pm 1.8 \times 10^{-5}$}  & $9.0 \pm 11.9$  & $>\!100$ \\
MGN-HP     & $6.5 \times 10^{-3} \pm 4.0 \times 10^{-3}$  & \underline{$6.2 \times 10^{-1} \pm 3.0 \times 10^{-1} $}  & \underline{$1.6 \times 10^{-1} \pm 9.3 \times 10^{-2} $} \\
PI-MGN     & $1.0 \times 10^{-3} \pm 1.4 \times 10^{-3}$  & $>\!100$  &  $>\!100$ \\
HNN        & $4.8 \times 10^{-5} \pm 5.9 \times 10^{-5}$  & $1.5 \pm 1.4$  & $4.5 \times 10^{-1} \pm 5.3 \times 10^{-1}$ \\
FNO        & $3.5 \times 10^{-4} \pm 7.0 \times 10^{-4}$  & $1.9 \pm 1.6$  & $2.7 \pm 2.2$ \\
GraphCON   & $8.7 \times 10^{-5} \pm 7.7 \times 10^{-5}$ & $69.0 \pm 1.1 \times 10^{2}$  & $96.5 \pm 1.2 \times 10^{2} $ \\
\bottomrule
\end{tabular*}

\vspace{6pt}

\textbf{Wave speed shift ($c_\mathrm{test}\ne c_\mathrm{train}$)}\par
\setlength{\tabcolsep}{2pt}
\begin{tabular*}{\linewidth}{@{\hspace{0pt}\extracolsep{\fill}}llll@{}}
\toprule
\multicolumn{1}{l}{\bf Model} & {\bf One-step MSE} & {\bf TSMSE} &  {\bf Drift} \\ \midrule
MeshFT-Net & $5.8 \times 10^{-6} \pm 5.9 \times 10^{-6}$ & \underline{$7.1 \times 10^{-1} \pm 3.9 \times 10^{-1}$}  & \bm{$1.7 \times 10^{-1} \pm 8.0 \times 10^{-3}\qquad\qquad$} \\
MGN        & \bm{$3.1 \times 10^{-6} \pm 2.6 \times 10^{-6}$} & $1.7 \pm 1.1$  & $24.5 \pm 25.2$ \\
MGN-HP     & $3.2 \times 10^{-3} \pm 2.7 \times 10^{-3}$ & \bm{$5.9 \times 10^{-1} \pm 2.9 \times 10^{-1}$}  & $2.5 \times 10^{-1} \pm 2.0 \times 10^{-1}$ \\
PI-MGN     & $2.6 \times 10^{-4} \pm 2.9 \times 10^{-4}$ & $>\!100$  & $>\!100$ \\
HNN        & $8.7 \times 10^{-6} \pm 1.1 \times 10^{-5}$ & $8.0 \times 10^{-1} \pm 4.4 \times 10^{-1}$  & \underline{$2.4 \times 10^{-1}\pm 1.2 \times 10^{-1}$} \\
FNO        & \underline{$5.7 \times 10^{-6} \pm 4.5 \times 10^{-6}$} & $9.6 \times 10^{-1} \pm 5.6 \times 10^{-1}$  & $1.6 \pm 2.0$ \\
GraphCON   & $2.6 \times 10^{-4} \pm 2.9 \times 10^{-4}$  & $>\!100$ & $>\!100$ \\
\bottomrule
\end{tabular*}

\vspace{6pt}

\textbf{Resolution transfer (32$\times$32 $\to$ 64$\times$64)}\par
\setlength{\tabcolsep}{2pt}
\begin{tabular*}{\linewidth}{@{\hspace{0pt}\extracolsep{\fill}}llll@{}}
\toprule
\multicolumn{1}{l}{\bf Model} & {\bf One-step MSE} & {\bf TSMSE} &  {\bf Drift} \\ \midrule
MeshFT-Net & \bm{$7.0 \times 10^{-7} \pm 5.4 \times 10^{-7}$} & \bm{$5.1 \times 10^{-2} \pm 2.5 \times 10^{-2}$} & \bm{$2.9 \times 10^{-3} \pm 9.2 \times 10^{-4}$} \\
MGN        & $8.6 \times 10^{-4} \pm 7.0 \times 10^{-4}$ & $5.3 \times 10^{-1} \pm 3.4 \times 10^{-1} $  & $3.4 \pm 2.7$ \\
MGN-HP     & $1.5 \times 10^{-3} \pm 1.3 \times 10^{-3}$ & $5.8 \times 10^{-1} \pm 2.9 \times 10^{-1}$  & $5.6 \pm 4.4$ \\
PI-MGN     & $8.6 \times 10^{-4} \pm 6.7 \times 10^{-4}$ & $>\!100$  & $>\!100$ \\
HNN        & $8.6 \times 10^{-4} \pm 7.0 \times 10^{-4}$ & $3.6 \times 10^{-1} \pm 3.0 \times 10^{-1}$  & $2.0 \pm 4.7 \times 10^{-1}$ \\
FNO        & \underline{$3.2 \times 10^{-5} \pm 3.7 \times 10^{-5}$} & $>\!100$ & $>\!100$ \\
GraphCON   & $8.9 \times 10^{-4} \pm 6.7 \times 10^{-4}$ & \underline{$3.2 \times 10^{-1} \pm 2.0 \times 10^{-1}$}  & \underline{$7.1 \times 10^{-1} \pm 1.3 \times 10^{-1}$} \\
\bottomrule
\end{tabular*}
\end{table}

% \subsection{Computational Cost Analysis}
% \label{app:runtime}
% \paragraph{Setup and Fairness.}
% We reuse the analytic plane-wave task on a periodic grid (canonical packaging $x=[q,p]$, $p=V_0\dot q$), build a single validation loader, and fix seeds across Python/NumPy/PyTorch (deterministic CuDNN). A single split-specific observation mask and the same masked (channel-standardized) MSE are used across models. A common theory operator $(M,W)=(V_0,\,c^2 I)$ provides the incidence $B$ and the CFL-based gate $\alpha$ for MGN/MGN-HP; it is \emph{not} learned and is shared across baselines.

% \paragraph{Models and Stepping.}
% MeshFT-Net instantiates $K=B^\top W B$ and advances with symmetric KDK; the number of substeps is set from a power-iteration estimate of $\omega_{\max}$. MGN predicts $v(x)$ and uses explicit Euler with the shared gate $\alpha$. MGN-HP adds a scalar EnergyNet and a $J\nabla H$ penalty during \emph{training} (inference calls only the MGN branch). HNN uses separable $H(q,p)=U(q)+T(p)$ with symplectic Euler; its forward requires AD to form $\nabla H$.

% \paragraph{Measurement Protocol.}
% After a short warmup, we synchronize CUDA each iteration and record (i) one forward step without gradient for inference timing (HNN uses \texttt{enable\_grad} then \texttt{detach} to form $\nabla H$), (ii) one training micro-step (forward, masked loss, backward, optimizer step) for training timing, and (iii) peak GPU memory during each pass. We report mean/SD/median over repeated iterations and convert means to samples/s and nodes/s using the known batch size and node count.

\subsection{Validation on Real Physics Field Data from \textit{The Well}}
\label{app:well}
\textbf{Data generation.}
We evaluate the \emph{acoustic\_scattering\_discontinuous} subset of \textit{The Well} ($2$D acoustics with discontinuous media). Fields are placed on a Cartesian grid of \(256\times256\) nodes over \([-1,1]\times[-1,1]\) with mixed boundary conditions such as \emph{reflecting} in \(x\) and \emph{open} in \(y\). From each short sequence we build one–step pairs \((z_t,z_{t+\Delta t})\) in canonical packing \(z=(q,p)\), where \(q\) is pressure and \(p=M\,\dot q\) with node mass \(M=V_0\) (cell area). The $p$ used for training is discretely calculated based on $q$.
The dataset time step is \(\Delta t=2/101\approx 0.0198\).
% The local sound speed \(c(x)\) is available and used as an auxiliary feature.
To emulate open \(y\)-boundaries consistently across time stepping, we apply a light \emph{sponge} after each update: \(p\leftarrow p-\gamma_{\text{bias}}(y)\,p\,\Delta t\), with the bias ramped over the top/bottom \(8\%\) of the domain.

\textbf{Hyperparameters.}
Training follows the runner settings: \(5\) epochs, batch size \(2\), and \(800\) steps/epoch (validation every \(50\) steps).
For stability, MeshFT-Net employs CFL-based \emph{substepping} with target Courant number \(0.5\) on \(256\times256\) nodes and \(\Delta t\!\approx\!0.0198\) this yields about \(8\) substeps per data step. The other exact hyperparameters are provided in the released code.

\textbf{Parameterization}
We parameterize the discrete Hodge operators with compact, data-driven MLPs that are evaluated \emph{locally} on nodes and edges.

For each node $i$, we predict a positive mass scale $\rho_i>0$ with a small node-MLP fed by geometric/context features, including coordinates $(x_i,y_i)$ and cell area $V_{0,i}$. We then set
\[
M_i \;=\; \rho_i\,V_{0,i}, \qquad
\log \rho_i \;=\; \tanh\!\bigl(\phi_{\text{node}}(\cdot)\bigr),
\]
where $\phi_{\text{node}}$ is a two-layer MLP (width 32).

For each edge $e=(i,j)$, we assemble geometric edge features $(\Delta x_e,\Delta y_e,|e|)$ and feed them to an edge-MLP. The edge-MLP outputs a positive scale $\sigma_e>0$, and we define
\[
W_e \;=\; \sigma_e\,V_{1,e}^{-1}, \qquad
\log \sigma_e \;=\; \tanh\!\bigl(\phi_{\text{edge}}(\cdot)\bigr),
\]
with $\phi_{\text{edge}}$ a two-layer MLP (width 32).
Here $V_{1,e}$ denotes the (diagonal) discrete Hodge star on edges (primal 1-forms). On a regular grid we set $V_{1,e}=1$ (hence $V_{1,e}^{-1}=1$). On irregular meshes $V_{1}$ is precomputed from geometry. All other hyperparameters follow the released code.

\textbf{Visualization.}
From a validation sequence we form the canonical initial state \(z_0=[q_0,p_0]\). We then perform an open-loop \(K\)-step rollout with the symplectic KDK step,
\(\hat z^{j+1}=\mathrm{MeshFT\mbox{-}Net}_{\Delta t}(\hat z^j)\) for \(j=0,\ldots,K-1\),
always feeding the \emph{predicted} state into the next step. We set \(K=48\) and, in the main text, display five representative snapshots at frames \(1,\,12,\,24,\,36,\) and \(48\) (start/midpoints/end).
At each step we record the predicted pressure \(\hat q_j\) and render GT vs.\ MeshFT-Net contours side-by-side (using a color scale fixed by the GT frames). 

%%%%%%%%%%%%%%%%%%%%%%%%%%%%%%%%%%%%%%%%%%%%%%%%%%%%%%%%%%%%%%%%%%%

\subsection{Nonlinear and Heterogeneous Benchmarks}
\label{app:broader_regimes}
This appendix provides the experimental details for the benchmarks summarized in Section~\ref{sec:broader_regimes}. These benchmarks test MeshFT-Net outside the controlled linear plane wave setting. Unless otherwise stated, all reported values are means over 3 random seeds.

\subsubsection{Irregular Heterogeneous Deformable Body}
\label{app:body_heterogeneous_elasticity}

\textbf{Objective.}
This benchmark tests whether the topology--metric separation remains effective on a bounded irregular geometry with heterogeneous material response.
Unlike the periodic wave benchmarks, the domain is non-periodic, contains an internal hole, uses an irregular Delaunay mesh, and has geometry-dependent constitutive coefficients. The purpose here is to evaluate the capability of MeshFT-Net in a more realistic deformable-body setting.

\textbf{Geometry and mesh.}
We construct a two-dimensional body in the unit square with a circular hole centered at $(0.5,0.5)$ with radius $0.18$. Interior points are sampled in the square and rejected if they fall inside the hole. We add points on the outer boundary and the hole boundary, triangulate the resulting point set by Delaunay triangulation, and remove simplices whose midpoint or edge midpoints cross the hole. The hole boundary is approximated by snapping detected boundary vertices to the prescribed circle and retriangulating. This yields a bounded, non-periodic, irregular mesh. It is a lightweight approximate meshing procedure rather than a high-order or quality-optimized boundary-fitted discretization. Node volumes are computed from simplex area sharing, and edge weights are computed from non-periodic cotangent weights with a small positive floor for numerical robustness.

\textbf{Heterogeneous constitutive field.}
A spatially varying material field is assigned to nodes. In the two-dimensional setting, the field has the form
\[
m(x,y)
=
1
+0.35\sin(2\pi x)\cos(2\pi y)
+0.25\exp\!\left(
-\frac{(r-0.22)^2}{2(0.05)^2}
\right),
\]
where $r$ is the distance from the hole center. Edge-level constitutive coefficients are obtained from the average material value at the two endpoints, together with local geometric factors depending on edge length and edge direction.
%
% Figure~\ref{fig:body_material_field} visualizes the resulting domain geometry and heterogeneous material field used in this benchmark.
% Concretely, the reference dynamics use geometry-dependent edge coefficients
% \[
% k_{2,e}
% =
% k_{2,0}\,m_e\,a_e\,\ell_e,
% \qquad
% k_{4,e}
% =
% k_{4,0}\,m_e,
% \]
% where $m_e$ is the endpoint-averaged material field, $a_e$ is an anisotropy factor derived from edge direction, and $\ell_e$ is a normalized edge-length factor. In the experiments reported here, $k_{2,0}=1.0$ and $k_{4,0}=0.25$.

% \begin{figure}[t]
% \centering
% \includegraphics[width=0.35\columnwidth]{figs/body_material.png}
% \caption{
% Geometry and material field. The white region denotes the circular hole removed from the computational domain, and the color field shows the spatially varying material coefficient used to construct geometry-dependent constitutive factors.
% }
% \label{fig:body_material_field}
% \end{figure}

\textbf{Reference dynamics.}
The state is $x=(q,p)$, where $q$ is the nodal displacement variable and $p$ is the corresponding momentum. Reference trajectories are generated by a nonlinear spring system on the irregular mesh. The conservative force is edge-based. Edge strains are computed from signed node--edge incidences, constitutive coefficients are applied on edges, and forces are accumulated back to nodes by the transpose incidence. We use no damping in this benchmark, so total momentum is a relevant diagnostic for the conservative action--reaction structure.

\textbf{Data generation and training.}
Training trajectories are generated from localized mixed initial conditions. We use 32 training trajectories, 8 in-distribution validation trajectories, and 8 higher-amplitude OOD trajectories. Each trajectory has 256 time steps with $\Delta t=0.0025$. Reference trajectories are generated with 8 internal substeps. Training uses one-step teacher forcing with batch size 8 for 10 epochs. The optimizer is AdamW with learning rate $10^{-3}$ and weight decay $10^{-6}$. The training amplitude range is $[0.08,0.24]$, while the OOD amplitude range is $[0.28,0.45]$.

\textbf{Models.}
We compare MeshFT-Net with MGN, MGN-HP, HNN, and GraphCON on this benchmark. FNO is not used here because this benchmark is defined on a non-periodic irregular Delaunay mesh rather than a regular grid. MeshFT-Net fixes the same signed-incidence conservative wiring as in the main experiments and only learns metric factors from geometric and material features.

\textbf{Metrics.}
We report TSMSE for held-out OOD rollouts, energy relative MAE, and the relevant structure diagnostic for each benchmark. Energy relative MAE compares the predicted and reference energy traces normalized by the initial reference energy, i.e., \(\frac{1}{T+1}\sum_{t=0}^{T}|\widehat E_t-E_t|/(|E_0|+\varepsilon)\). Momentum variation is the normalized temporal range of total momentum, \((\max_t P_t-\min_t P_t)/(\frac{1}{T+1}\sum_t\sum_i |p_i(t)|+\varepsilon)\), where \(P_t=\sum_i p_i(t)\). Results are shown in Table~\ref{tab:broader_regimes_main} (a).

\textbf{Additional results: scaling sweep.}
We also evaluate the same heterogeneous deformable-body setting at three mesh resolutions. For this sweep, we reduce the number of training trajectories and epochs to keep the experiment lightweight: 16 training trajectories, 4 in-distribution trajectories, 4 OOD trajectories, and 6 training epochs. Runtime is measured on a single NVIDIA H100 GPU. Per-step inference latency is averaged after warm-up, and one-train-batch time measures a forward--loss--backward step without updating parameters. Peak memory is the CUDA peak memory recorded during the timing run.
We also report a comparison of runtime, memory usage, and accuracy with baselines in Appendix~\ref{app:runtime}.

The scaling results given in Table~\ref{tab:body_scaling} show that rollout accuracy improves from the coarsest mesh and remains low at larger resolutions, while runtime and memory grow moderately. Momentum variation stays at the $10^{-7}$ scale across all three meshes. This suggests that the observed momentum preservation is tied to the topology-fixed signed-incidence conservative wiring, not a consequence of a particular mesh resolution.

\begin{table}[t]
\caption{Scaling sweep on the irregular heterogeneous deformable-body
benchmark. Runtime and memory are measured on a single NVIDIA H100 GPU.
All values are means over 3 seeds. Lower is better except where noted.}
\label{tab:body_scaling}
\centering
\setlength{\tabcolsep}{5pt}
\small
\begin{tabular}{rcccccc}
\toprule
\textbf{Nodes}
& \textbf{TSMSE}
& \textbf{Energy rel. MAE}
& \textbf{Momentum variation}
& \textbf{Infer. / step}
& \textbf{Peak memory}
& \textbf{Train batch} \\
\midrule
464
& \num{5.85e-3}
& \num{4.46e-2}
& \num{2.75e-7}
& 9.33 ms
& 1.81 MB
& 17.5 ms \\
1232
& \num{3.60e-4}
& \num{1.49e-2}
& \num{1.87e-7}
& 9.57 ms
& 2.87 MB
& 17.3 ms \\
2256
& \num{1.95e-4}
& \num{9.52e-3}
& \num{1.68e-7}
& 18.97 ms
& 4.88 MB
& 33.4 ms \\
\bottomrule
\end{tabular}
\end{table}

\textbf{Discussion.}
The irregular mesh and internal hole test whether the signed-incidence conservative wiring remains effective beyond periodic regular domains, while the spatially varying material field tests whether the learned metric-dependent factors can represent heterogeneous constitutive response. Thus, the results support the generality of MeshFT-Net: the same topology--metric separation achieves accurate long-horizon rollouts and preserves total momentum at the $10^{-7}$ scale.
The same benchmark is used in Appendix~\ref{app:topology_metric_ablation} to isolate the roles of the heterogeneous metric and the topology-fixed conservative wiring.

\subsubsection{Additional Nonlinear and Heterogeneous Lattice Benchmarks}
\label{app:additional_nonlinear_lattices}

\textbf{Objective.}
We further evaluate MeshFT-Net on several nonlinear lattice systems that go beyond the analytic plane-wave benchmark. The conservative interconnection remains state-independent and fixed by the signed incidence matrix. Nonlinearity enters through metric-dependent constitutive or dissipative factors.

\textbf{Common setup.}
All systems use the canonical state \(x=(q,p)\), where \(q\) is the nodal field and \(p=M\dot q\) is the conjugate momentum. Let \(B\) be the signed node--edge incidence matrix and let \(\xi=Bq\) denote oriented edge strain. The generic dynamics are
\[
\dot q=M^{-1}p,\quad
\dot p=-f(q)-\Gamma p,
\quad
f(q)=B^\top \tau(\xi)+\nabla_q U_{\mathrm{site}}(q),
\]
where \(M\) is the nodal mass/Hodge block, \(\tau(\xi)\) is the edge constitutive law, \(U_{\mathrm{site}}\) is an optional on-site potential, and \(\Gamma\succeq0\) is a Rayleigh damping operator when dissipation is present. Thus, the topology-fixed part is the incidence assembly through \(B\) and \(B^\top\), while the system-dependent terms are \(\tau\), \(U_{\mathrm{site}}\), and \(\Gamma\).

Unless otherwise stated, we use \(32\times32\) grids, \(\Delta t=0.002\), 256 rollout steps, 32 training trajectories, 8 in-distribution validation trajectories, 8 OOD trajectories, and batch size 8, 10 training epochs. Reference trajectories are generated with 8 internal substeps. Training uses one-step teacher forcing and the same optimizer, loss weighting, and integrator choices as in the main experiments.

\textbf{Target systems.}
We consider the following nonlinear systems.

\emph{FPU-type nonlinear spring wave.}
This benchmark introduces an edge constitutive nonlinearity:
\[
f(q)
=
B^\top \tau(\xi),
\qquad
\tau_e(\xi_e)
=
W_e\,(k_2\xi_e+k_4\xi_e^3),
\qquad
\xi=Bq.
\]
We use \(k_2=1.0\) and \(k_4=20.0\). This tests whether incidence-wired coupling remains effective when the edge stress is nonlinear in the oriented strain.

\emph{Damped \(\phi^4\) / Duffing--Klein--Gordon lattice.}
This benchmark uses a nonlinear on-site potential and Rayleigh damping:
\[
f(q)
=
B^\top(Wk_eBq)+M(\mu q+\lambda q^3),
\qquad
\Gamma=\gamma I.
\]
Equivalently, this corresponds to the on-site potential
$U_{\mathrm{site}}(q)=\sum_i M_i\left(\frac12\mu q_i^2+\frac14\lambda q_i^4\right)$.
We use \(k_e=1.0\), \(\mu=1.0\), \(\lambda=10.0\), and \(\gamma=0.02\). This is the damped nonlinear lattice benchmark shown in Table~\ref{tab:broader_regimes_main} (b). It probes whether the model follows the true dissipative energy decay.

\emph{State-dependent conservative coupling.}
This benchmark keeps the signed-incidence sparsity fixed but lets the effective edge coupling vary with the current state:
\[
f(q)
=
B^\top \tau(q,\xi),
\qquad
\tau_e(q,\xi)
=
W_e\,c_e(\bar q_e,\xi_e)\,\xi_e,
\qquad
\xi=Bq,
\]
with
$c_e(\bar q_e,\xi_e)=k_e\left(1+\alpha_q \bar q_e^2+\alpha_\xi \xi_e^2\right)$,
$\bar q_e=\frac{q_i+q_j}{2}$ for $e=(i,j)$.
We use \(k_e=1.0\), \(\alpha_q=1.0\), and \(\alpha_\xi=1.0\). This benchmark tests whether models remain accurate and physically stable when the interaction strength is no longer fixed, while the incidence wiring is unchanged.

\emph{Sine-Gordon lattice.}
This benchmark uses a periodic on-site potential:
\[
f(q)
=
B^\top(Wk_eBq)+M\beta\sin q,
\]
corresponding to
$U_{\mathrm{site}}(q)=\sum_i M_i\beta(1-\cos q_i)$.
We use \(k_e=1.0\) and \(\beta=4.0\). It tests a qualitatively different nonlinear on-site response from the polynomial \(\phi^4\) potential.

\textbf{Models.}
For all grid benchmarks, we compare MeshFT-Net with MGN, MGN-HP, HNN, FNO, and GraphCON. MeshFT-Net keeps the signed incidence wiring fixed and learn local metric-dependent factors from geometry and state features.

\textbf{Metrics.}
For the additional nonlinear benchmarks, we report OOD rollout TSMSE as the primary accuracy metric. We also report energy relative MAE, denoted as Energy MAE in Table~\ref{tab:additional_nonlinear_results}, to measure agreement with the reference energy trace. Depending on the physical structure of each system, we include one additional diagnostic: momentum variation for conservative systems where total momentum should be preserved, monotonicity-violation rate for the damped \(\phi^4\) system, and relative dominant-frequency error for the sine-Gordon system. Lower is better for all reported metrics.

\textbf{Results.}
\begin{table}[t]
\caption{
Additional nonlinear benchmark results. Values are means over 3 seeds. Lower is better. Best values are shown in bold, and second-best values are underlined. Energy MAE denotes energy relative MAE. Mono. viol. and Freq. err denote the monotonicity-violation rate and the relative dominant-frequency error, respectively. Entries shown as $>100$ exceeded the display cap or yielded non-finite rollout statistics. A "--" means that the auxiliary diagnostic is omitted because the primary rollout/energy statistics are non-finite or diverged.
}
\label{tab:additional_nonlinear_results}
\centering
\small
\setlength{\tabcolsep}{4pt}

\begin{minipage}{0.48\textwidth}
\centering
\textbf{(a) FPU nonlinear spring wave}\par\vspace{2pt}
\begin{tabular}{lccc}
\toprule
\textbf{Model} & \textbf{TSMSE} & \textbf{Energy MAE} & \textbf{Mom. var.} \\
\midrule
MeshFT-Net
& \underline{\num{3.04e-2}}
& \underline{\num{1.71e-1}}
& \textbf{\num{4.75e-8}} \\
MGN
& $>100$ & $>100$ & $>100$ \\
MGN-HP
& $>100$ & $>100$ & $>100$ \\
HNN
& \textbf{\num{9.05e-3}}
& \textbf{\num{4.99e-2}}
& \underline{\num{6.06e-2}} \\
FNO
& \num{6.35e-2}
& \num{3.95}
& \num{3.41e-1} \\
GraphCON
& $>100$ & $>100$ & $>100$ \\
\bottomrule
\end{tabular}
\end{minipage}
\hfill
\begin{minipage}{0.48\textwidth}
\centering
\textbf{(b) Damped \(\phi^4\) lattice}\par\vspace{2pt}
\begin{tabular}{lccc}
\toprule
\textbf{Model} & \textbf{TSMSE} & \textbf{Energy MAE} & \textbf{Mono. viol.} \\
\midrule
MeshFT-Net
& \textbf{\num{2.54e-4}}
& \textbf{\num{8.70e-3}}
& \textbf{\num{3.25e-1}} \\
MGN
& $>100$ & $>100$ & -- \\
MGN-HP
& $>100$ & $>100$ & -- \\
HNN
& \num{1.15e-2}
& \num{3.01e-2}
& \num{4.60e-1} \\
FNO
& \num{1.70e-1}
& \num{4.32e-1}
& \num{6.38e-1} \\
GraphCON
& \underline{\num{9.19e-4}}
& \underline{\num{1.98e-2}}
& \underline{\num{3.54e-1}} \\
\bottomrule
\end{tabular}
\end{minipage}

\vspace{6pt}

\begin{minipage}{0.48\textwidth}
\centering
\textbf{(c) State-dependent conservative coupling}\par\vspace{2pt}
\begin{tabular}{lccc}
\toprule
\textbf{Model} & \textbf{TSMSE} & \textbf{Energy MAE} & \textbf{Mom. var.} \\
\midrule
MeshFT-Net
& \textbf{\num{1.62e-3}}
& \textbf{\num{5.87e-2}}
& \textbf{\num{5.85e-8}} \\
MGN
& $>100$ & $>100$ & $>100$ \\
MGN-HP
& $>100$ & $>100$ & $>100$ \\
HNN
& \num{2.68e-1}
& \num{8.96e1}
& \num{1.88} \\
FNO
& \num{5.89e-2}
& \num{4.05e-1}
& \underline{\num{2.18e-1}} \\
GraphCON
& \underline{\num{1.15e-2}}
& \underline{\num{1.26e-1}}
& \num{3.46e-1} \\
\bottomrule
\end{tabular}
\end{minipage}
\hfill
\begin{minipage}{0.48\textwidth}
\centering
\textbf{(d) Sine-Gordon lattice}\par\vspace{2pt}
\begin{tabular}{lccc}
\toprule
\textbf{Model} & \textbf{TSMSE} & \textbf{Energy MAE} & \textbf{Freq. err} \\
\midrule
MeshFT-Net
& \textbf{\num{1.48e-6}}
& \textbf{\num{4.28e-4}}
& \textbf{\num{0.00e0}} \\
MGN
& $>100$ & $>100$ & -- \\
MGN-HP
& $>100$ & $>100$ & -- \\
HNN
& \num{1.24e-2}
& \underline{\num{1.81e-2}}
& \num{1.25e-1} \\
FNO
& \num{1.83e-1}
& \num{4.02e-1}
& \num{3.54e-1} \\
GraphCON
& \underline{\num{3.90e-3}}
& \num{8.43e-2}
& \textbf{\num{0.00e0}} \\
\bottomrule
\end{tabular}
\end{minipage}

\vspace{-0.5em}
\end{table}
\begin{figure}[t]
\centering
\includegraphics[width=0.8\textwidth]{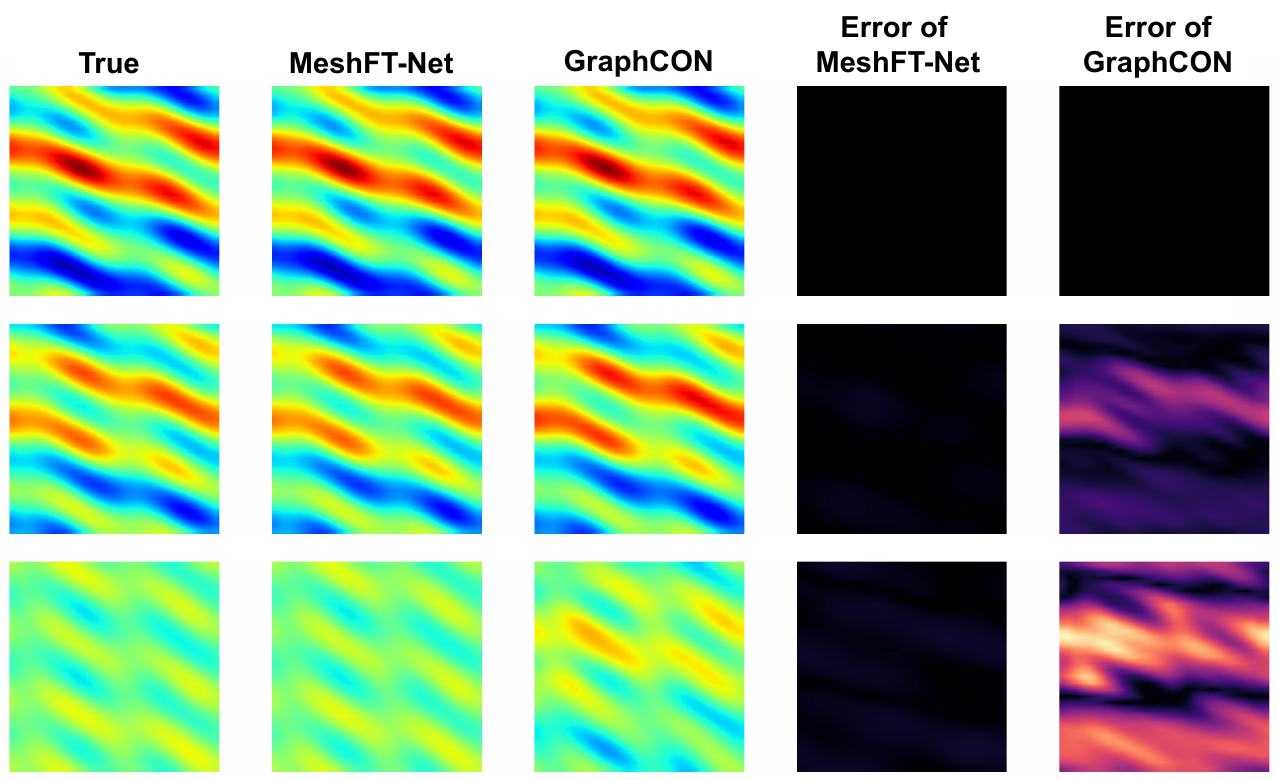}
\caption{
Qualitative rollout snapshots for the damped \(\phi^4\) lattice
benchmark. Rows show representative rollout times. Columns show the
ground truth, MeshFT-Net, GraphCON, and the corresponding absolute
error maps. GraphCON is the strongest baseline other than MeshFT-Net on this
benchmark in Table~\ref{tab:additional_nonlinear_results}. MeshFT-Net
better preserves the spatial pattern over the rollout and yields
smaller error maps, consistent with its lower TSMSE and lower energy
relative MAE.
}
\label{fig:phi4_snapshots}
\end{figure}
Table~\ref{tab:additional_nonlinear_results} summarizes the additional nonlinear benchmark results. 
Across the four nonlinear benchmarks, MeshFT-Net achieves the lowest OOD rollout TSMSE on three tasks and the second-lowest TSMSE on the FPU nonlinear spring benchmark, while retaining the best momentum diagnostic on the conservative FPU and state-dependent coupling tasks. The advantage is especially large on the sine-Gordon benchmark, where MeshFT-Net outperforms the strongest baseline other thatn MeshFT-Net by several orders of magnitude. On the state-dependent conservative-coupling benchmark, it also achieves substantially lower rollout error and energy error than all other baselines. The FPU benchmark shows that the signed-incidence model preserves the conservative action--reaction structure and remains competitive on nonlinear edge constitutive response, while the damped \(\phi^4\) benchmark shows that the same topology-fixed interconnection, combined with learned metric-dependent and dissipative factors, can follow nonlinear dissipative dynamics.

Figure~\ref{fig:phi4_snapshots} also provides qualitative rollout snapshots for the damped \(\phi^4\) benchmark in Table~\ref{tab:additional_nonlinear_results}. The visualization compares MeshFT-Net with GraphCON, the strongest baseline on this benchmark other than MeshFT-Net, and shows that the lower rollout error and energy relative MAE correspond to visibly smaller spatial error over the rollout.

In addition, the physical diagnostics are consistent with the rollout errors. In the conservative FPU and state-dependent-coupling systems, MeshFT-Net keeps momentum variation orders of magnitude smaller than the strongest baselines other than MeshFT-Net, reflecting the preserved signed-incidence action--reaction structure.
In the damped \(\phi^4\) and sine-Gordon systems, MeshFT-Net gives the lowest energy relative MAE, and in the damped case it also gives the lowest monotonicity violation rate. These results support the theorem-derived modeling rule: fix the signed-incidence conservative interconnection and learn metric-dependent constitutive and dissipative factors.

%%%%%%%%%%%%%%%%%%%%%%%%%%%%%%%%%%%%%%%%%%%%%%%%%%%%%%%%%%%%%%%%%%%

\subsection{State-Dependent Conservative-Coupling Ablation}
\label{app:sw-jdep}

\textbf{Data generation.}
We construct a fully controlled nonlinear system that follows the pointwise reduction form with a state-dependent conservative interconnection. The state is a three-channel field $z=(h,m_x,m_y)$ on a periodic $H\times W$ grid.
We generate trajectories from the conservative dynamics
\[
\dot z \;=\; J_{\mathrm{true}}(z)\,e_{\mathrm{true}}(z), 
\quad 
e_{\mathrm{true}}(z)=G_{\mathrm{true}}\,z,
\]
where $G_{\mathrm{true}}=\mathrm{diag}(g_0,1,1)$ is fixed and state-independent.
The interconnection $J_{\mathrm{true}}(z)$ keeps the signed-incidence skew wiring of the grid and is modulated only through orientation-even, positive per-edge gains.
Concretely, we implement $J_{\mathrm{true}}(z)$ as a conservative incidence-wired coupling that uses periodic cell-to-edge averaging and edge-to-cell divergence/adjoint-gradient operators.
The gains are computed from orientation-even features $(h,m_x^2,m_y^2)$ to ensure invariance under momentum sign flips, and are normalized per sample by a mean-one gauge fix to remove scale non-identifiability.
Trajectories are integrated by RK4 with time step $\Delta t$ for a fixed horizon and then converted into $k$-step supervision pairs $(z_t,z_{t+k})$ by unrolling $k$ steps of the learned one-step map.

\textbf{Models.}
We evaluate a $2\times2$ ablation over the conservative coupling and metric-side parameterization.
For the conservative interconnection, we compare a state-independent coupling against a state-dependent coupling that retains the same signed-incidence sparsity/sign pattern and predicts only the local gains from $(h,m_x^2,m_y^2)$.
For the energy metric, we compare a diagonal per-cell SPD map against a full $3\times3$ SPD per-cell map parameterized via a Cholesky factor.
In all cases, the update is advanced by the same explicit Heun stepper.

\textbf{Training.}
All variants are trained on the same train/validation split by holding out entire trajectories to avoid temporal leakage. Training uses $k$-step supervision by unrolling the learned one-step map for $k$ steps and minimizing mean-squared error between $z_{t+k}$ and the reference $k$-step target. All hyperparameters (optimizer, learning rate, epochs, batch size, and gradient clipping) are shared across variants.

\textbf{Hyperparameters.}
We use $H{=}W{=}64$, $\Delta t{=}0.02$, rollout length $T{=}600$ steps, and $k{=}8$ for $k$-step supervision. We generate $4$ trajectories with trajectory length $260$ and split them by trajectory IDs.
Optimization uses Adam with learning rate $10^{-3}$, batch size $6$, $5$ epochs, and gradient clipping at $1.0$, with no weight decay. All hyperparameters are shared across the four variants.

\textbf{Metrics.}
We report TSMSE, defined as the time-average of the per-step mean squared error over the rollout horizon, and energy drift $\lvert E(T)-E(0)\rvert$ under the fixed reference energy induced by $G_{\mathrm{true}}$.
\[
E(z)=\tfrac12 \sum_{i,j}\!\Bigl(g_0\,h_{ij}^2+m_{x,ij}^2+m_{y,ij}^2\Bigr),
\qquad 
\mathrm{Drift}_E=\lvert E(T)-E(0)\rvert ,
\]
where $T$ is the final rollout time.
Reported numbers are means over $3$ random seeds.

\subsection{Topology--Metric Separation Ablations on Heterogeneous Elasticity}
\label{app:topology_metric_ablation}

\textbf{Objective.}
We provide implementation details for the topology--metric separation ablation reported in Section~\ref{sec:topology_metric_ablation}. Unlike the analytic ablations given in Appendix~\ref{app:ablation}, which selectively violate individual assumptions of Theorem~\ref{thm:reduction} on a controlled wave benchmark, this experiment tests the topology--metric separation on the heterogeneous-elasticity benchmark: a bounded deformable body with a hole, an irregular non-periodic mesh, and geometry-dependent constitutive response. The goal is to isolate which empirical failures are caused by restricting the metric side and which are caused by breaking the topology-fixed conservative wiring.

\textbf{Common setup.}
The state is \(x=(q,p)\), where \(q\) is the nodal displacement variable and \(p\) is the corresponding momentum. The body is sampled in the unit square with a circular hole. Boundary and interior points are triangulated, and simplices crossing the hole are removed. A spatially varying material field is assigned to nodes and then lifted to edge-level constitutive coefficients using local edge geometry and node/edge material features.
The reference trajectories are generated by a nonlinear spring system with geometry-dependent edge coefficients. All models are trained on the same trajectories, optimizer, supervision, and time step, and all reported numbers are means over 3 seeds.

\textbf{Variants.}
Let \(B\) denote the signed node--edge incidence matrix, so that \((Bq)_e=q_j-q_i\) for an oriented edge \(e:i\to j\). The full MeshFT-Net uses the topology-fixed conservative force
\[
f(q)=B^\top \tau(q),\qquad
\tau_e(q)=w_e\,c_\theta(a_e)\,(Bq)_e,
\]
where \(w_e\) is a fixed geometric edge weight, \(a_e\) denotes local geometry/material features, and \(c_\theta(a_e)>0\) is the learned heterogeneous metric/constitutive factor.

The w/o metric variant keeps the same signed-incidence wiring \(B^\top(\cdot)B\), but replaces the learned edge-dependent coefficient \(c_\theta(a_e)\) by a single learned positive scalar shared by all edges. Thus, this variant preserves the topology-fixed conservative interconnection but removes spatially heterogeneous metric capacity.
The w/o topology-fixed variant keeps local learned metric factors, but replaces the pure incidence strain \((Bq)_e=q_j-q_i\) by a translation-breaking local strain
$\widetilde{\xi}_{ij}=(q_j-q_i)+\frac{\varepsilon}{2}(q_i+q_j)$ with $\varepsilon=0.05$.
This variant remains local and keeps metric learning, but it is no longer a pure signed-incidence conservative wiring and does not preserve rigid translation symmetry. Therefore total momentum preservation is no longer guaranteed.

% \textbf{Metrics.}
% TSMSE is the time-averaged state MSE over the rollout. Energy relative MAE is the mean absolute energy error normalized by the initial reference energy,
% \[
% \mathrm{Energy\ rel.\ MAE}
% =
% \frac{1}{T+1}
% \sum_{t=0}^{T}
% \frac{|\widehat{E}_t-E_t|}{|E_0|+\varepsilon}.
% \]
% Momentum variation measures the normalized temporal range of total
% momentum,
% \[
% \mathrm{Mom.\ var.}
% =
% \frac{
% \max_t P_t-\min_t P_t
% }{
% \frac{1}{T+1}\sum_{t=0}^{T}\sum_i |p_i(t)|+\varepsilon
% },
% \qquad
% P_t=\sum_i p_i(t).
% \]
% Lower values are better for all three metrics.

\begin{figure}[t]
    \centering
    \includegraphics[width=\linewidth]{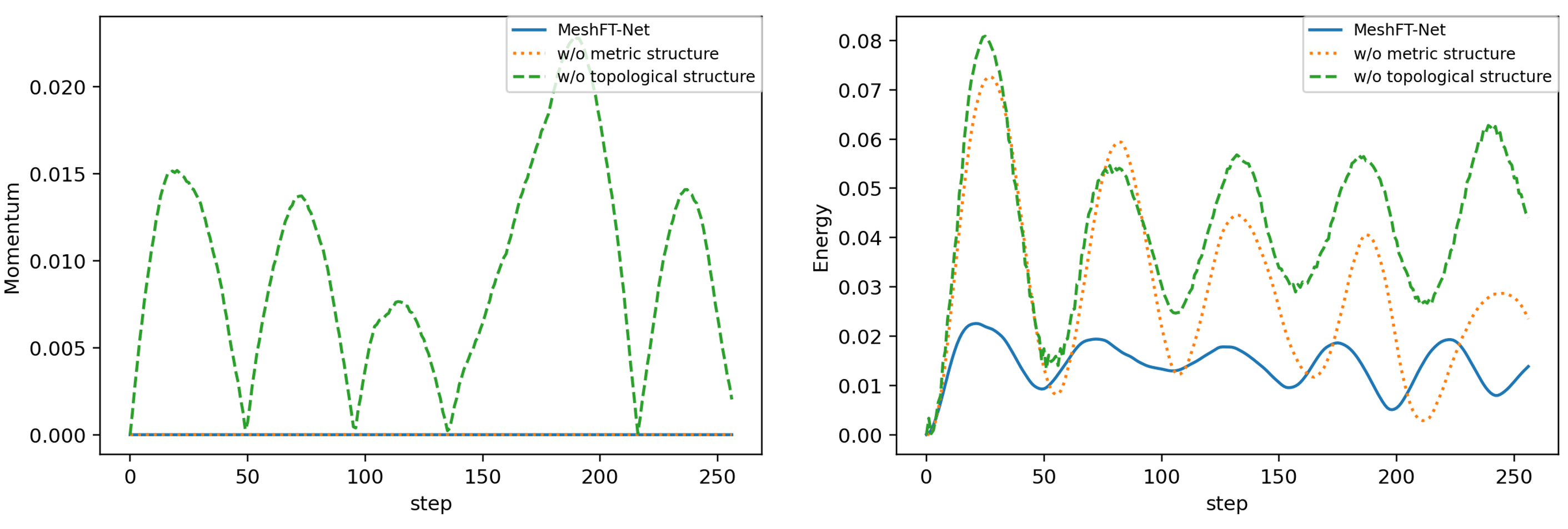}
    \caption{
    Topology--metric separation ablation on the heterogeneous-elasticity benchmark. Left: total momentum over a held-out rollout. Right: energy error over the same rollout. Restricting the heterogeneous metric mainly degrades energy fidelity while preserving near-constant momentum. In contrast, breaking the topology-fixed signed-incidence wiring causes visible momentum drift, consistent with the loss of the incidence-induced action--reaction structure.
    }
\label{fig:topology_metric_ablation_curves}
\end{figure}

\textbf{Discussion.}
Table~\ref{tab:topology_metric_ablation} in the main text reports aggregate rollout diagnostics, and Figure~\ref{fig:topology_metric_ablation_curves} visualizes the corresponding momentum and energy-error time series. Together, they show that the ablation separates two failure modes. Restricting the heterogeneous metric increases TSMSE and energy relative MAE, but the momentum variation remains at the \(10^{-7}\) scale because the signed-incidence wiring is unchanged. This indicates that the metric side is responsible for representing spatially varying constitutive scales and energy fidelity.
In contrast, breaking the topology-fixed wiring causes a much larger momentum variation, even though local metric learning is retained. This occurs because the modified edge force no longer respects the action--reaction relation induced by the signed-incidence structure. Thus, this ablation clarifies the practical interpretation of the topology--metric separation.

\textbf{Relation to the following ablations.}
The ablation here focuses on the practical topology--metric separation in a more realistic irregular geometry. The next subsection complements it with principle-level ablations on the analytic wave benchmark, where topology, orientation, metric positivity, and learned interconnection structure are selectively violated and evaluated under a common reference energy.

\subsection{Ablations on Topology, Orientation, and Metric Structure}
\label{app:ablation}
\begin{table}[t]
\caption{Ablation results. Each number is mean for 3 seeds.}
\label{tab:ablate-acc}
\small
\centering
\setlength{\tabcolsep}{4pt}
\begin{tabular}{lllll}
\toprule
\textbf{Model} & \textbf{One-step MSE} & \textbf{Energy drift} & \textbf{Energy injection} & \textbf{Momentum} \\
\midrule
MeshFT-Net     & {$\bm{4.142 \times 10^{-6}}$} & {$\bm{2.03 \times 10^{-3}}$} & {$\bm{8.75 \times 10^{-3}}$} & {$4.85 \times 10^{-8}$} \\
No-Orientation \(|D_k|\)        & {$1.08 \times 10^{-4}$} & {$13.7$}
          & {$>\!100$}
          & {$>\!100$}\\
Scrambled-Topology         & {$3.85 \times 10^{-5}$} & {$13.1 $} & {$31.1$} & {$4.90 \times 10^{-8}$}\\
Indefinite-Metric         & {$4.145 \times 10^{-6}$} & {$2.30 \times 10^{-3}$} & {$9.85 \times 10^{-3}$} & {\bm{$4.28 \times 10^{-8}$}}\\
Learned-\(J\) (PSD metric)  & {$4.37 \times 10^{-6}$} & {$1.34 \times 10^{-1}$} & {$4.59 \times 10^{-1}$} & {$7.79 \times 10^{-3}$}\\
Learned-\(J\) (free metric) & {$4.16 \times 10^{-6}$} & {$8.25$} 
& {$14.8$} & {$9.37 \times 10^{-2}$} \\
\bottomrule
\end{tabular}
\end{table}

\begin{table}[t]
\caption{Mean inference and training latency (ms), peak training memory (MB), and TSMSE for four models evaluated on the analytic wave benchmark at three grid resolutions (32$\times$32, 64$\times$64, 128$\times$128). All values are averages over 3 random seeds.}
\small
\centering
\setlength{\tabcolsep}{4pt}
\begin{tabular}{l lllll}
\toprule
Model & Params & \multicolumn{1}{c}{Infer [ms]} & \multicolumn{1}{c}{Train [ms]} &  \multicolumn{1}{c}{Peak MB (train)} & \multicolumn{1}{c}{TSMSE} \\
\midrule
\multicolumn{1}{l}{\textbf{32$\times$32}}\\
\addlinespace[4pt]
MeshFT-Net & 642       & 1.74  & 3.32  & 18.7  & $7.8\times 10^{-4}$ \\
MGN        & 108{,}930 & 2.58  & 8.07  & 151.9  & $3.5\times 10^{-1}$ \\
MGN-HP     & 217{,}795 & 2.65  & 26.0  & 462.5  & $3.8\times 10^{-1}$ \\
HNN        & 217{,}602 & 17.1  & 50.6  & 976.7  & $3.4\times 10^{-1}$ \\
\midrule
\multicolumn{1}{l}{\textbf{64$\times$64}}\\
\addlinespace[4pt]
MeshFT-Net & 642       & 2.43  & 3.35  & 67.5  & $1.4\times 10^{-4}$ \\
MGN        & 108{,}930 & 5.48  & 14.4  & 596.5  & $3.0\times 10^{-1}$ \\
MGN-HP     & 217{,}795 & 5.47  & 45.5  & 1832  & $3.0\times 10^{-1}$  \\
HNN        & 217{,}602 & 29.9  & 94.4  & 3883  & $3.0\times 10^{-1}$ \\
\midrule
\multicolumn{1}{l}{\textbf{128$\times$128}}\\
\addlinespace[4pt]
MeshFT-Net & 642       & 2.70  & 3.82 & 262.8  & $6.5\times 10^{-5}$ \\
MGN        & 108{,}930 & 13.8  & 40.7 & 2379  & $1.2\times 10^{-1}$ \\
MGN-HP     & 217{,}795 & 13.8  & 142.5 & 7317  & $1.2\times 10^{-1}$ \\
HNN        & 217{,}602 & 104   & 306.9 & $1.55\times 10^{4}$  & $1.2\times 10^{-1}$ \\
\bottomrule
\end{tabular}
\label{tab:perf_all_grids_mean}
\end{table}

\textbf{Objective.}
Guided by Theorem~\ref{thm:reduction}, we test which structural assumptions matter in practice. Recall that locally
\(
\frac{\partial F}{\partial z}(z)=(J-R(z))\,G(z)
\)
with \(J^\top=-J\) assembled from the signed incidences \(\{D_k\}\) (topology), \(R(z)^\top=R(z)\succeq 0\) (dissipation), and \(G(z)\succ 0\) (metric/constitutive). Our ablations selectively violate these ingredients while keeping data, optimizer, supervision, and the integrator fixed.

\textbf{Common setup.}
All models are trained with one-step teacher forcing on analytic plane-wave pairs \(x=[q,p]\) (\(p=M\,\dot q\)) on a \(32\times 32\) torus with step \(\Delta t=0.002\). Long-horizon evaluation uses \(T=200\) steps. For fair comparison, all rollout metrics use a \emph{common} theory Hodge.

\textbf{Metrics.}
(1) one-step MSE, (2) normalized energy drift \(\frac{|E_t-E_0|}{|E_0|}\) over the rollout, (3) energy injection \(E_{\mathrm{inj}}=\sum_{t=0}^{T-1}\max(E_{t+1}-E_t,0)\big/(|E_0|+\varepsilon)\), and (4) momentum variation which is time variation of \(\sum_n p_n\) on the torus, normalized by the mean \(\ell_1\) amplitude, as in the physics-consistency test in Table~\ref{tab:phys-consistant}. Lower is better for all.

\textbf{Compared Variants.}
We consider the following variants for ablation study. The other mplementation details are provided in the released code.
\textbf{MeshFT-Net (structured baseline).}
\(J\) is assembled from the signed incidences \(\{D_k\}\) (topology + orientation), the metric is positive (\(G\succ0\)), and \(R\equiv0\) (conservative). \emph{No assumptions are violated}. Energy and momentum are conserved up to integrator error, and dispersion is correct.
\textbf{No-Orientation (orientation dropped).}
Replace the signed incidence by an orientation–even map, violating (O). The resulting interconnection is no longer skew, \(J^\top\neq -J\), so the conservative pass can inject/remove energy. We expect systematic energy drift and spurious momentum leakage.
\textbf{Scrambled-Topology (topology broken).}
Randomly re–pair the node–edge incidences so that \(J\) is \emph{not} the topology–assembled one in Theorem~\ref{thm:reduction} and interface locality (L) is broken. Algebraic skew of \(J\) is retained by construction, but under the common \emph{theory} energy for evaluation, \emph{expect} incorrect modal coupling and large long–horizon drift.
\textbf{Indefinite-Metric (metric positivity broken).}
Keep the signed topology (orientation preserved) but allow the edge–space weights to be signed, violating \(G\succeq0\). \emph{Expect} nonphysical energy growth, positive energy injection, and unstable rollouts despite small one–step error.
\textbf{Learned-\(J\) (PSD metric).}
Here \(J\) is \emph{learned} directly from data, not assembled from the signed incidences \(\{D_k\}\) (topology identification dropped). It is constrained to remain \emph{skew–symmetric} (\(J^\top=-J\)). The metric is kept positive (\(G\succeq0\)). Thus, \(J\) no longer reflects mesh topology even though it preserves the algebraic Hamiltonian symmetry. \emph{Expect} stable yet biased dynamics under the \emph{theory} energy: degraded dispersion/momentum behavior and moderate drift.
\textbf{Learned-\(J\) (free metric).}
As above, \(J\) is \emph{learned} (not incidence–assembled) and remains \emph{skew–symmetric} by construction, but we drop nonnegativity on the learned gains so the induced metric need not be PSD. This preserves the algebraic skew property of \(J\) while violating metric positivity. \emph{Expect} stronger bias, larger energy drift, and more energy injection than in the PSD case.

\textbf{Results.}
Table~\ref{tab:ablate-acc} reports results of ablation study. Each number is mean for 3 seeds.

\textbf{Discussion.}
\emph{Topology (signed \(\{D_k\}\)) is critical.} Breaking incidence assembly (Scrambled-Topology) or dropping orientations (No-Orientation) yields large energy growth even when one-step MSE remains modest. No-Orientation, which violates (O), also shows pronounced momentum blow-up due to loss of action–reaction pairing, confirming that orientation is essential for a skew-symmetric \(J\).
In addition, Learned-\(J\) (PSD metric) is markedly more stable than Learned-\(J\) (free metric), yet both underperform the structured MeshFT-Net. Fixing \(J\) via the mesh incidences \(\{D_k\}\) is essential for stability and fidelity.
\emph{Violating metric positivity breaks energy balance.} Indefinite-Metric variants can achieve similar one-step MSE to MeshFT-Net yet inject more energy, underscoring that \(J^\top\!=\!-J\) (from signed \(\{D_k\}\)) and a positive energy metric (\(G\succeq0\)) must be enforced \emph{together}.

\textbf{Takeaway.}
These ablations empirically support Theorem~\ref{thm:reduction}: the conservative interconnection must be \emph{assembled from the signed incidences \(\{D_k\}\)} (topology and orientation) and the metric/constitutive block must be \emph{positive-definite}; otherwise long-horizon stability, conservation properties, and physical fidelity deteriorate—even when one-step errors are comparable.

\subsection{Computational Cost Analysis}
\label{app:runtime}
To assess practical efficiency and accuracy, we report mean per-step inference/training latency, peak training CUDA memory, and TSMSE on the analytic wave benchmark (Sec.~\ref{sec:analytic}), measured on a single NVIDIA H100 under identical configs. These wall-clock numbers are intended as references due to implementation- and kernel-level differences across methods. As summarized in Table~\ref{tab:perf_all_grids_mean}, across the three resolutions ($32^2, 64^2, 128^2$) MeshFT-Net shows a favorable time/memory/accuracy profile in this setup: it is typically the fastest for inference and training, and its memory footprint is markedly smaller. Latency growth with resolution is also modest for MeshFT-Net, whereas other models increase more noticeably. In these runs, MeshFT-Net achieved lower TSMSE (e.g., $6.5\times 10^{-5}$ at $128^2$), and it uses substantially fewer parameters. HNN tended to be slower and more memory-intensive on this benchmark.

\end{document}